\let\vec\undefined
\xpatchcmd{\proof}{\itshape}{\normalfont\proofnameformat}{}{}
\newcommand{\proofnameformat}{\bfseries}
\newcommand{\pref}[1]{\prettyref{#1}}
\newcommand{\savehyperref}[2]{\texorpdfstring{\hyperref[#1]{#2}}{#2}}
\newtheorem{fact}{Fact}
\newtheorem{assumption}{Assumption}
\title{A Deep Conditioning Treatment of Neural Networks}
\author{%
 \Name{Naman Agarwal} \Email{namanagarwal@google.com}\\
 \addr Google AI, Princeton
 \AND
 \Name{Pranjal Awasthi} \Email{pranjalawasthi@google.com}\\
 \addr Google Research
 \AND
 \Name{Satyen Kale} \Email{satyenkale@google.com}\\
 \addr Google Research
}
\newif\ifnotes\notestrue
\definecolor{mygrey}{gray}{0.50}
\newcommand{\notename}[2]{{\textcolor{blue}{\footnotesize{\bf (#1:} {#2}{\bf ) }}}}
\newcommand{\notename}[2]{{}}
\newcommand{\vecW}{\vec{W}}
\newcommand{\reals}{\mathbb{R}}
\newcommand{\normal}{\mathcal{N}}
\newcommand{\E}{\mathop{\mathbb{E}}}
\newcommand{\dualact}{\hat{\sigma}}
\newcommand{\cY}{\mathcal{Y}}
\newcommand{\kernel}{k}
\newcommand{\cn}{\kappa}
\newcommand{\ntk}{\mathcal{K}}
\newcommand{\loss}{\mathcal{L}}
\newcommand{\normrelu}{\text{NormReLU}\xspace}
\newcounter{daggerfootnote}
\newcommand{\mutil}{\tilde{\mu}}
\newcommand{\rhofx}{\bar{\rho}}
\begin{document}

\maketitle
\begin{abstract}
 We study the role of depth in training randomly initialized overparameterized neural networks. We give a general result showing that depth improves trainability of neural networks by improving the {\em conditioning} of certain kernel matrices of the input data. This result holds for arbitrary non-linear activation functions under a certain normalization.
 We provide versions of the result that hold for training just the top layer of the neural network, as well as for training all layers, via the neural tangent kernel. As applications of these general results, we provide a generalization of the results of \citet{panigrahy-random-nets} showing that learnability of deep random neural networks with a large class of non-linear activations degrades exponentially with depth.
We also show how benign overfitting can occur in deep neural networks via the results of \citet{bartlett2019benign}. We also give experimental evidence that normalized versions of ReLU are a viable alternative to more complex operations like Batch Normalization in training deep neural networks.
 \end{abstract}

\section{Introduction}
\label{sec:introduction}

Deep neural networks have enjoyed tremendous empirical success, and it has become evident that depth plays a crucial role in this success~\citep{simonyan2014very, szegedy2015going, he2016deep}. However, vanilla deep networks are notoriously hard to train without some form of intervention aimed to improve the optimization process, for example, Batch Normalization~\citep{ioffe2015batch}, Layer Normalization~\citep{layer-norm}, or skip connections in Resnets~\citep{resnet}. Recent theory \citep{santurkar,balduzzi} has shed some light into how these interventions help train deep networks, especially for the widely popular Batch Normalization operation. However the picture is far from clear in light of work such as \citep{yang-bn} which argues that Batch Normalization actually {\em hinders} training by causing gradient explosion. 

In this paper, we investigate {\em improved data conditioning} as a possible factor in explaining the benefits of the aforementioned interventions for training deep neural networks. While standard optimization theory tells us that good data conditioning leads to faster training, empirical performance on test data also seems to be correlated with good conditioning. \pref{fig:test_acc_vs_dp}  presents examples of various deep network architectures trained on the CIFAR-10 dataset using standard techniques such as batch normalization, layer normalization, and a new normalized version of the ReLU activation that we propose in this work. In each case, as the generalization performance increases with the number of epochs, the average normalized dot products between test inputs decrease as well, indicating improved conditioning. This begs the question of whether depth helps in improving conditioning of the data, and as a result affecting optimization and generalization in deep neural networks.
 \begin{figure}[ht]
\centering
\includegraphics[width=.7\textwidth]{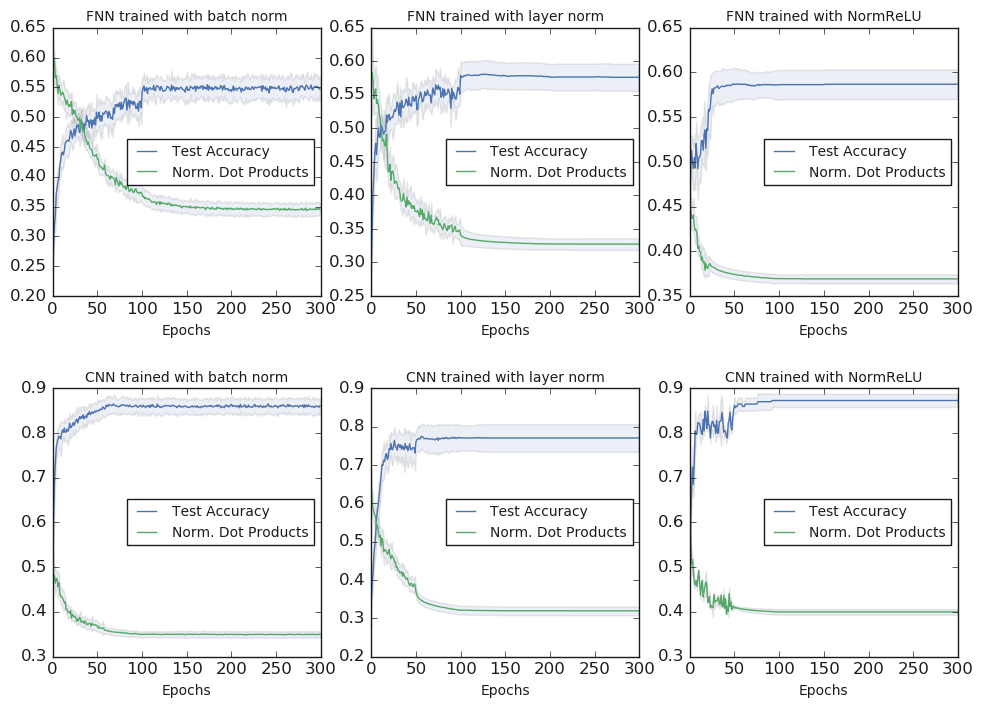}
\captionsetup{indention=0.2cm}
\caption{\label{fig:test_acc_vs_dp}
Test Accuracy and average normalized dot products vs. the number of epochs. The top three plots concern depth-32 feed forward networks with 3000 hidden units per layer. The top left network is trained using batch normalization, the middle network is trained using layer normalization, and the top right network is trained using a normalized version of the ReLU activation. The bottom three plots concern convolutional networks with the ResNet-32 architecture \citep{he2016deep}. The bottom left network is trained using batch normalization, the middle network is trained using layer normalization and the bottom right network is trained using the normalized ReLU activation. }
\end{figure}


We elucidate the role of depth and the non-linearity of activations in improving data conditioning by considering a simple intervention: viz., we {\em normalize} the activations  so that when fed standard Gaussian inputs, the output has zero mean and unit variance. Any standard activation function like ReLU, tanh, etc. can be normalized by centering and scaling it appropriately. Thus normalization of activations is a rather benign requirement (see also \pref{lem:bn-ln-invariance}), but has significant consequences for improving data conditioning theoretically and trainability empirically, as explained next.

\subsection{Our contributions.}
\begin{enumerate}[leftmargin=*]
    \item \textbf{Exponentially improving data conditioning.} We show that for a randomly initialized neural network with an {\em arbitrary non-linear normalized} activation function, the condition number of the certain kernel matrices of the input data tend to the best possible value, $1$, {\em exponentially} fast in the depth of the network. The rate at which the condition number tends to $1$ is determined by a coefficient of non-linearity of the activation function, a concept that we define in this paper. 
    This result holds for either training just the top layer of the neural network, or all layers of the network with a sufficiently small learning rate (the so-called lazy training regime \citep{chizat2018global}). 

    \item \textbf{Fast training.} Our main result implies that when training large width neural networks of sufficient depth, gradient descent with square loss approaches $\epsilon$ training error at a $\log(1/\epsilon)$ rate, {\em regardless} of the initial conditioning of the data. This is in contrast to prior works \citep{arora2019exact, allen2018convergence} and demonstrates the optimization benefits of using deeper networks. 

    \item \textbf{Hardness of learning random neural networks.} Via our main result, we generalize the work of \citet{panigrahy-random-nets} and show that learning a target function that is a sufficiently deep randomly initialized neural network with a general class of activations, requires exponentially (in depth) many queries in the statistical query model of learning. Furthermore, this result holds with constant probability over the random initialization, a considerable strengthening of the prior result. See \pref{sec:sq} for the formal result and a detailed comparison.

    \item \textbf{Benign overfitting in deep neural networks.} We extend the work of \citet{bartlett2019benign} on interpolating classifiers and show that randomly initialized and sufficiently deep neural networks can not only fit the training data, but in fact, the minimum norm (in the appropriate RKHS) interpolating solution can achieve non-trivial excess risk guarantees as well.

    \item \textbf{Empirical benefits of normalized activations.} Guided by our theoretical results, we propose a new family of activation functions called \normrelu which are normalized versions of the standard ReLU activation. Incorporating \normrelu into existing network architectures requires no overhead. Furthermore, we show via experiments on the CIFAR-10 dataset that \normrelu can serve as an effective replacement for techniques such as batch normalization and layer normalization. This leads to an alternate method for training deep networks with no loss in generalization performance and in some cases leads to significant gains in training time. 
\end{enumerate}



\subsection{Related work}
\label{sec:related}

There are two very recent works with similar results to ours independently of our work. The first is the work of \citet{XPS}, which uses the tools of mean-field theory of deep neural network developed in a long line of work \citep{poole,daniely2017,deepinfprop,resurrecting}. This work considers a broader spectrum of initialization schemes and activation functions and studies the effect of depth on data conditioning. Specializing to the setting of our paper, this work shows that if the inputs are already very well-conditioned, then they converge to perfect conditioning exponentially fast in the depth of the network. In contrast, our results show exponential convergence even if the inputs are very poorly conditioned: in fact, for some activations like normalized ReLU, the initial condition number could even be infinite. The second is the work of \citet{PSG}, whose main motivation is studying the effect of smooth vs. non-smooth activation functions in shallow networks. However they do show a very similar exponential convergence result like ours for a kernel matrix closely related to the top-layer kernel matrix in this paper for a more restricted class of activation functions than considered in our paper. Furthermore the results of \citet{PSG} assume unit length inputs, whereas in this paper we extend our results for some activations to non-unit length inputs as well. Neither paper considers the applications to optimization, SQ learning of random neural networks, and benign overfitting as done in this paper.

On the optimization side, a sequence of papers has recently shown the benefits of overparametrization via large {\em width} for training neural networks: see, for example, \citep{li-liang,du-zhai-poczos-singh,allen-zhu-li-song,zou-gu} and the references therein. These papers show that with sufficiently large width, starting from a random initialization of the network weights, gradient descent provably finds a global minimizer of the loss function on the training set. While several of the aforementioned papers do analyze deep neural networks, to our knowledge, there is no prior work that provably demonstrates the benefits of depth for training neural networks in general settings. Prevailing wisdom is that while depth enables the network to express more complicated functions (see, for example, \citep{eldan-shamir,telgarsky,raghu-poole-kleinberg-ganguli-sohl-dickstein,lee-ge-ma-risteski-arora,daniely2017} and the references therein), it hinders efficient training, which is the primary concern in this paper. Indeed, the papers mentioned earlier showing convergence of gradient descent either assume very shallow (one hidden layer) networks, or expend considerable effort to show that depth doesn't {\em degrade} training by more than a polynomial factor. In contrast, we show that after a certain threshold depth (which depends logarithmically on $\delta$, initial separation), increasing depth improves the convergence rate exponentially. 

To provide one such precise comparison, the work of \cite{allen-zhu-li-song} under the same separation assumption as us, proves that overparametrized networks with ReLU activations converge in time polynomial in depth, $1/\delta$ and $\log(1/\epsilon)$. Our results show that if depth is $\Omega(\log(1/\delta))$, then the convergence rate is only proportional to $\log(1/\epsilon)$, {\em independent} of $\delta$, if one uses normalized activations. 

A few exceptions to the above line of work are the papers \citep{arora-cohen-hazan,arora-cohen-hu-luo} which do show that depth helps in training neural networks, but are restricted to very specific problems with linear activations. 

See \pref{app:related-app} for an in-depth discussion of these and other related works.



\section{Notation and preliminaries}
\label{sec:setup}

For two vectors $x$ and $x'$ of like dimension, we denote their inner product by $x\cdot x'$. Unless otherwise specified, $\|\cdot\|$ denotes the Euclidean norm for vectors and the spectral norm for matrices. For a symmetric positive definite matrix $M$, the condition number $\cn(M)$ is defined to be the ratio $\frac{\lambda_{\max}(M)}{\lambda_{\min}(M)}$, where $\lambda_{\max}(M)$ and $\lambda_{\min}(M)$ are the largest and smallest eigenvalues respectively of $M$. For a positive integer $n$, define $[n] = \{1, 2, \ldots, n\}$.

We are given a training set of $n$ examples: $S = \{(x_i, y_i) \in \reals^d \times \cY\}_{i=1}^n$, where $\cY$ is the output space. For simplicity we begin by assuming, as is standard in related literature, that for all $i$ we have $\|x_i\| = 1$. We provide extensions of our results to non-unit-length inputs in \pref{sec:general-norms}. Let $\underline{K} \in \mathbb{R}^{n \times n}$ be the Gram matrix of the training data, i.e. $\underline{K}_{ij} = x_i \cdot x_j$. We make the following (very standard in the literature, see e.g. \citep{allen2018convergence,zou-gu}) assumption on the input data:
\begin{assumption}
\label{ass:separation}
	For all $i, j \in [n]$ with $i \neq j$, we have $|x_i \cdot x_j| \leq 1 - \delta$.
\end{assumption}

To keep the presentation as clean as possible, we assume a very simple architecture of the neural network\footnote{Extending our analysis to layers of different sizes and outputs of length greater than $1$ poses no mathematical difficulty and is omitted for the sake of clarity of notation.}: it has $L$ hidden fully-connected layers, each of width $m$, and takes $x \in \reals^d$ as input and outputs $y \in \reals$, with activation function $\sigma: \reals \rightarrow \reals$ to $\reals^m$ by entry-wise application. The network can thus be defined as the following function\footnote{Note that we're using the so-called neural tangent kernel parameterization \citep{ntk} instead of the standard parameterization here.} $f_{\vecW}: \reals^d \rightarrow \reals$:
\begin{equation} \label{eq:network-def}
   f_{\vecW}(x) = v \cdot \tfrac{1}{\sqrt{m}}\sigma(W_L \tfrac{1}{\sqrt{m}}\sigma(W_{L-1} \cdots \tfrac{1}{\sqrt{m}}\sigma(W_1x)\cdots)), 
\end{equation}
where $W_1 \in \reals^{m \times d}$, $W_2, \ldots, W_L \in \reals^{m \times m}$ denote the weight matrices for the hidden layers, $v \in \reals^m$ denotes the weight vector of the output layer, $\vecW \in \reals^{dm + (L-1)m^2 + m}$ denotes a vector obtained by concatenating vectorizations of the weight matrices.
We use the notation $\normal(\mu, \Sigma)$ for the normal distribution with mean $\mu$ and covariance $\Sigma$. All weights are initialized to independent, standard normal variables (i.e. drawn i.i.d. from $\normal(0, 1)$).
    
Our analysis hinges on the following key normalization assumption on $\sigma$:
\begin{equation}\label{eq:normalization}
	\E_{X \sim \normal(0, 1)}[\sigma(X)] = 0\ \text{ and } \mathop{\text{Var}}_{X \sim \normal(0, 1)}[\sigma(X)] = \E_{X \sim \normal(0, 1)}[\sigma^2(X)] = 1.
\end{equation}
This normalization requirement is rather mild since any standard activation function can be easily normalized by centering it by subtracting a constant and scaling the result by a constant. The only somewhat non-standard part of the normalization is the requirement that the activation is centered so that its expectation on standard normal inputs is 0. This requirement can be relaxed (see \pref{sec:unnormalized-extension}) at the price of worse conditioning. 
Furthermore, the following lemma proved in \pref{app:bnlninvariance} shows that in the presence of other normalization techniques, normalized activations may be assumed without loss of generality:
\begin{lemma} \label{lem:bn-ln-invariance}
If the neural network in \eqref{eq:network-def} incorporates batch normalization in each layer, then the network output is the same regardless of whether the activation $\sigma$ is normalized or not. The same holds if instead layer normalization is employed, but on the post-activation outputs rather than pre-activation inputs.
\end{lemma}



\section{Main results on conditioning of kernel matrices}
\label{sec:conditioning-results}


\subsection{Top layer kernel matrix.} 
\label{sec:top-layer}

The first kernel matrix we study is the one defined by (random) feature mapping generated at the top layer by the lower layer weights, i.e.\footnote{Note that $\Phi_{\vecW}$ does not depend on the $v$ component of $\vecW$; this notation is chosen for simplicity.} $\Phi_{\vecW}(x) := \tfrac{1}{\sqrt{m}}\sigma(W_L \tfrac{1}{\sqrt{m}}\sigma(W_{L-1} \cdots \tfrac{1}{\sqrt{m}}\sigma(W_1x)\cdots))$. The feature mapping $\Phi_{\vecW}$ defines a kernel function $k$ and the associated $n \times n$ kernel matrix $K$ on a training set $S$ as $K_{ij} := \kernel(x_i, x_j)$ where $\kernel(x, x') := \Phi_{\vecW}(x) \cdot \Phi_{\vecW}(x')$.

The main results on conditioning in this paper are cleanest to express in the limit of infinite width neural networks, i.e. $m \rightarrow \infty$. In this limit, the kernel function $k$ and the kernel matrix $K$, tend almost surely to {\em deterministic} limits \citep{daniely2016toward}, denoted as $\bar{\kernel}$ and $\bar{K}$ respectively.
We study the conditioning of $\bar{K}$ next. The rate at which the condition number of $\bar{K}$ improves with depth depends on the following notion of degree of non-linearity of the activation function $\sigma$:
\begin{definition}
\label{def:coefficient-non-linearity}
	The {\em coefficient of non-linearity} of the activation function $\sigma$ is defined to be $\mu := 1 - \left(\E_{X \sim \normal(0, 1)}[X \sigma(X)]\right)^2$.
\end{definition}
The normalization \pref{eq:normalization} of the activation function implies via \pref{lem:mu-bounds} (in \pref{app:conditioning}, where all missing proofs of results in this section can be found) that for any non-linear activation function $\sigma$, we have $0 < \mu \leq 1$. To state our main result, it is convenient to define the following quantities: for any $\nu \in (0,1]$, $\delta \in (0, 1)$ and a positive integer $L$, let $L_0(\delta) = \max\left\{\left\lceil\tfrac{\log(\frac{1}{2\delta})}{\log(1+\frac{\nu}{2})}\right\rceil, 0\right\} = O\left(\tfrac{\log(1/\delta)}{\nu}\right)$, and define
\[B_{\nu}(L, \delta) := \begin{cases}
    1 - \delta(1+\frac{\nu}{2})^L & \text{ if }\ L \leq L_0(\delta) \\
    \frac{1}{2}(1 - \frac{\nu}{2})^{L-L_0(\delta)} & \text{ if }\ L > L_0(\delta).
\end{cases}\]
For clarity of notation, we will denote by $B(L, \delta)$ the quantity $B_{\mu}(L, \delta)$. We are now ready to state our main result on conditioning of the kernel matrix:
\begin{theorem}
\label{thm:main-infinite-width-top-layer}
   Under \pref{ass:separation}, we have $|\bar{K}_{ij}| \leq B(L, \delta)$ for all $i, j \in [n]$ with $i \neq j$. 
\end{theorem}
The following corollary is immediate, showing that the condition number of the kernel matrix $\bar{K}$ approaches the smallest possible value, 1, {\em exponentially} fast as depth increases.  
\begin{corollary}
\label{cor:main-top-layer-cn} 
		Under \pref{ass:separation}, if $L \geq L_1(\delta) := \left\lceil \frac{\log(n)}{-\log(1 - \frac{\mu}{2})}\right\rceil + L_0(\delta)$, then $\cn(\bar{K}) \leq 1 + 2n(1~-~\frac{\mu}{2})^{L - L_1(\delta)}$.
\end{corollary}

\subsection{Neural tangent kernel matrix.}
\label{sec:ntk}

The second kernel matrix we study arises from the {\em neural tangent kernel}, which was introduced by \citet{ntk}. This kernel matrix naturally arises when all the layers of the neural network are trained via gradient gradient. For a given set of network weights $\vecW$, the neural tangent kernel matrix $\ntk \in \mathbb{R}^{n \times n}$ is defined as $\ntk_{ij} = (\partial_{\vecW} f_{\vecW}(x_i)) \cdot (\partial_{\vecW} f_{\vecW}(x_j))$. As in the previous section, as the width $m$ of the hidden layers tends to infinity, the random $\ntk$ tends to a deterministic limit, $\bar{\ntk}$. For this infinite width limit, we have the following  theorem analogous to part 1 of \pref{thm:main-infinite-width-top-layer}:
\begin{theorem}
\label{thm:main-infinite-width-ntk}
	The diagonal entries of $\bar{\ntk}$ are all equal. Assume that $L \geq 2L_0(\delta)$. Under \pref{ass:separation}, we have $|\bar{\ntk}_{ij}| \leq 2B(\nicefrac{L}{2},\delta) \cdot \bar{\ntk}_{11}$ for all $i, j \in [n]$ with $i \neq j$.
\end{theorem}
The following corollary, analogous to \pref{cor:main-top-layer-cn}, is immediate:
\begin{corollary}
\label{cor:main-infinite-width-ntk-cn}
    Under \pref{ass:separation}, if $L \geq L_2(\delta) := \left\lceil \frac{2\log(2n)}{-\log(1 - \frac{\mu}{2})}\right\rceil + 2L_0(\delta)$, then $\cn(\bar{\ntk}) \leq 1+4n(1~-~\tfrac{\mu}{2})^{\nicefrac{L}{2} - L_2(\delta)}$.
\end{corollary}

\subsection{Better conditioning under stronger assumption}
\label{sec:stronger-assumption}

The following somewhat stronger assumption than \pref{ass:separation} leads to a better conditioning result:
\begin{assumption} 
\label{ass:non-singularity}
    $\lambda_{\min}(\underline{K}) \geq \delta$.
\end{assumption}
While \pref{ass:non-singularity} implies \pref{ass:separation}, it still quite benign, and is easily satisfied if $n \geq d$ and there is even a tiny amount of inherent white noise in the data. Furthermore, as discussed in \pref{app:relu_cond}, for certain activations like ReLU,
the representations derived after passing a dataset satisfying \pref{ass:separation} through one layer satisfy \pref{ass:non-singularity}. 

We have the following stronger versions of \pref{thm:main-infinite-width-top-layer} and \pref{cor:main-top-layer-cn} under \pref{ass:non-singularity} (all proofs appear in \pref{app:conditioning}):
\begin{theorem}
\label{thm:main-infinite-width-top-layer-stronger}
    Under \pref{ass:non-singularity}, we have $\lambda_{\min}(\bar{K}) \geq 1 - B(L, \delta)$.
\end{theorem}
\begin{corollary}
\label{cor:main-top-layer-cn-stronger} 
    Under \pref{ass:non-singularity}, we have $\cn(\bar{K}) \leq 1 + \frac{n}{\delta}(1 + \tfrac{\mu}{2})^{-L}$.
\end{corollary}
Similarly, we have the following stronger versions of \pref{thm:main-infinite-width-ntk} and \pref{cor:main-infinite-width-ntk-cn} under \pref{ass:non-singularity}:
\begin{theorem}
\label{thm:main-infinite-width-ntk-stronger}
    Under \pref{ass:non-singularity}, we have $\lambda_{\min}(K) \geq \left(1 -  2B(\nicefrac{L}{2},\delta)\right)\bar{\ntk}_{11}$.
\end{theorem}
\begin{corollary}
\label{cor:main-infinite-width-ntk-cn-stronger}
    Under \pref{ass:non-singularity}, if $L \geq 4L_0(\delta)$, then $\cn(\bar{\ntk}) \leq 1+ \tfrac{2n}{\delta}(1+\tfrac{\mu}{2})^{-\nicefrac{L}{2}}$.
\end{corollary}

\subsection{Extension to uncentered activations}
\label{sec:unnormalized-extension}

The analysis techniques of the previous sections also extend to activations that need not be normalized \eqref{eq:normalization}. Specifically, we only assume that the activation $\sigma$ satisfies $\E_{X \sim \normal(0, 1)}[\sigma^2(X)] = 1$. I.e., we allow $\E_{X \sim \normal(0, 1)}[\sigma(X)]$ to be non-zero. In this case, we can show that for non-affine activations, dot products of input representations at the top layer converge to a fixed point as the depth increases (proof appears in \pref{app:unnormalized-analysis}):
\begin{theorem}
\label{thm:unnormalized-main-infinite-width-top-layer}
 Suppose $\sigma$ is non-affine and \pref{ass:separation} holds. Then, there is a $\rhofx \in [0, 1]$ such that $\lim_{L \rightarrow \infty} \bar{K}_{ij} = \rhofx$. Furthermore, if $\rhofx < 1$, then there are constants $c_\sigma \in (0, 1)$ and $L_0 = O_\sigma(\log(\frac{1}{\delta}))$ such that if $L \geq L_0$, then $|\bar{K}_{ij} - \rhofx| \leq c_{\sigma}^{(L-L_0)}$ for all $i, j \in [n]$ with $i \neq j$.    
\end{theorem}

\subsection{Extension to non-unit length inputs}
\label{sec:general-norms}

In this section we extend the result of \pref{sec:top-layer} to the case when the inputs are do not have to be exactly unit length. To establish these results we require further assumptions on the activation function, which we highlight in the theorem. For a discussion of these assumptions see \pref{app:general_result_discussion}. 

\begin{theorem} 
\label{thm:general_norms}
Let $\sigma$ be a twice-differentiable monotonically increasing odd function which is concave on $\reals^+$. There exists a constant $\alpha_\sigma$ (depending on $\sigma$) such that for any two inputs $x,y$ such that $\|x\|^2,\|y\|^2 \geq 0.5, \frac{|x^{\top}y|}{\|x\|\|y\|} \leq (1 - \delta) $, after a number of layers 
$L \geq \hat{L} := \alpha_{\sigma}\log(\max(|\|x\|^2-1|,|\|y\|^2-1|, \mu/4)\cdot4/\mu)$, we have 
$\frac{ \bar{k}(x,y)}{\sqrt{\bar{k}(x,x) \cdot \bar{k}(y,y)}} \leq B_{\mu/2}\left(L - \hat{L}, \delta\right)$. 
\end{theorem}

The proof of the above theorem as well as a precise description of the constant $\alpha_{\sigma}$ can be found in 
\pref{app:general_norm_proof} in the supplementary material. Our analysis proceeds by first proving the theorem for the norms of the representations induced by the input. The above theorem formalizes a sufficient condition on the activation function for \textit{global} convergence to the fixed point (i.e. 1) of the length map defined in \cite{poole}. In fact, for this part we establish a weaker sufficient condition on the activation than \pref{thm:general_norms}, see \pref{app:general_norm_proof} for details.
\cite{poole} informally mention that monotonicity of activations suffices, although counterexamples exist, see \pref{app:general_result_discussion}. Furthermore, the theorem generalizes the work of \citet{XPS} which only provides the asymptotics close to the fixed point. Next, we show the monotonicity of the normalized dot-product for the representations. This allows us to leverage our previous analysis for the norm $1$ case once the norms have converged. 

A similar analysis and theorem can be obtained for the NormReLU activation we propose in this paper (details in \pref{sec:experiments}). For the precise theorem statement and proof, see \pref{app:normrelubounds}.

\section{Implications for optimization}
\label{sec:optimization}

Suppose we train the network using gradient descent on a loss function $\ell: \reals \times \cY \rightarrow \reals$, which defines the empirical loss function $\loss(\vecW) := \frac{1}{n}\sum_{i=1}^n \ell(f_{\vecW}(x_i), y_i)$.  For the rest of this section we will assume that the loss function $\ell$ is the square loss, i.e. $\ell(\hat{y},y)=(\hat{y}-y)^2$. The results presented can appropriately be extended to the setting where the loss function is smooth and strongly convex. 
Training a finite-width neural network necessitates the study of the conditioning of the finite-width kernel matrices $K$ and $\ntk$, rather than their infinite-width counterparts. In such settings optimization results typically follow from a simple 2-step modular analysis, where in the first step we show via concentration inequalities that conditioning in the infinite-width case transfers to the finite-width case, and in the second step we show that conditioning is not hurt much in the training process.
We now provide a couple of representative optimization results that follow from this type of analysis. 

\subsection{Training only the top layer}

We consider a mode of training where only the top layer weight vector, $v$, is updated, while keeping $W_1, W_2, \ldots, W_L$ frozen at their randomly initialized values. To highlight this we introduce the notation $\vecW_{1:L} = \{W_1 \ldots W_L\}$. Let  $\eta > 0$ be a step size, the update rule at iteration $t$ is given by 
$v_{t+1} = v_t - \eta \cdot \partial_{v}\loss(\{v_t, \vecW_{1:L}\}) = v_t - \eta \cdot \frac{1}{n} \sum_{i=1}^n 2(v_t \cdot \Phi_{\vecW}(x_i) - y_i) \Phi_{\vecW}(x_i)$. 
Note that in this mode of training, the optimization problem is convex in $v$. 
We assume that $\sigma$ satisfies a regularity conditioning, {\em $C$-boundedness}, introduced by \citet{daniely2016toward}, which allows us to apply their concentration bounds. Then, standard convex optimization theory \citep{nesterov-lectures} gives the following result (precise statements 
and proof are in \pref{app:opt-proofs}):
\begin{theorem}
\label{thm:top-layer-general}
Suppose $L = \Theta\left(\frac{\log(n/\delta)}{\mu}\right)$, $\sigma$ is $C$-bounded and the width $m = \text{poly}(n, \frac{1}{\delta})$. Then for an appropriate choice of $\eta$, with high probability over the initialization, gradient descent finds an $\epsilon$ sub-optimal point in $O(\log(\frac{1}{\epsilon}))$ steps. The same result holds for stochastic gradient descent as well.
\end{theorem}



\subsection{Training All The Layers Together}

In this section we provide a representative result for the training dynamics when all the layers are trained together with a fixed common learning rate. The dynamics are given by $\vecW(t+1) = \vecW(t) - \eta \partial_{\vecW}\loss(\vecW(t))$. 
The analysis in this setting follows from carefully establishing that the NTK does not change too much during the training procedure allowing for the rest of the analysis to go through. We have the following theorem, using the concentration bounds of \citet{lee-wide-neural-2019} (precise theorem statement 
and proof are in \pref{app:opt-proofs}):
\begin{theorem}
\label{thm:train_all_layers}
Suppose $\sigma$ is smooth, bounded and has bounded derivatives. If the width is a large enough constant (depending on $L,n,\delta$) and $L = \Theta(\frac{\log(n/\delta)}{\mu})$, then gradient descent with high probability finds an $\epsilon$ suboptimal point in $O(\log(1/\epsilon))$ iterations.
\end{theorem} 



\section{SQ Learnability of Random Deep Neural Nets}
\label{sec:sq}
In this section we 
give a generalization of the recent result of \cite{panigrahy-random-nets} regarding learnability of random neural networks. This work studied randomly initialized deep neural networks with sign activations at hidden units. Motivated from the perspective of complexity of learning, they studied learnability of random neural networks in the popular {\em statistical query learning}~(SQ) framework \citep{kearns1998efficient}. Their main result establishes that any algorithm for learning a function that is a randomly initialized deep network with sign activations, requires exponential (in depth) many statistical queries in the worst case. 

Here we generalize their result in two ways: (a) our result applies to arbitrary activations (as opposed to just sign activations in \cite{panigrahy-random-nets}) satisfying a subgaussianity assumption for standard Gaussian inputs, and (b) our lower bound shows that a randomly initialized network is hard to learn in the SQ model with {\em constant} probability, as opposed to just positive probability, in \cite{panigrahy-random-nets}. We achieve the stronger lower bound by carefully adapting the lower bound technique of \citet{bshouty2002using}. The subgaussianity assumption is there exists a constant $\alpha > 0$ such that for all $\lambda$, we have $\E_{X \sim \normal(0,1)}[e^{\lambda \sigma(X)}] \leq e^{\lambda^2 \alpha^2/2}$ for all $\lambda$. All standard activations (such as the sign, ReLU and tanh), when normalized, satisfy this assumption.

For technical reasons, we will work with networks that normalize the output of each layer to unit length via the operation $\Pi: \mathbb{R}^m \rightarrow \mathbb{R}^m$, and thus the neural network function is of the form 
\begin{equation} \label{eq:nn-with-proj}
f_{\vecW}(x) =  \big(v \cdot \tfrac{1}{\sqrt{m}}\Pi(\sigma(W_L \tfrac{1}{\sqrt{m}} \Pi(\sigma(W_{L-1} \cdots \tfrac{1}{\sqrt{m}} \Pi(\sigma(W_1x)\cdots))) \big).	
\end{equation}
We will consider learning $\text{sgn}(f_{\vecW}(x))$ in the SQ model \citep{kearns1998efficient} where the learning algorithm does not have access to a labeled training set. Instead, for a given target function $f$ and a distribution $D$ over $\mathbb{R}^d$, the algorithm has access to a query oracle $SQ_{f,D}(\psi, \tau)$. The oracle takes as input a query function $\psi$ and a tolerance parameter $\tau \geq 0$, and outputs a value $v$ such that $|\E_D[\psi(x,f(x))] - v| \leq \tau$. The goal of the algorithm is to use the query algorithm to output a function $g$ that is $\epsilon$-correlated with $f$, i.e., $Pr_D[g(x) f(x)] \geq \epsilon$, for a given $\epsilon > 0$. Our main result is the following (proofs can be found in \pref{app:sq-app}):
\begin{theorem}
\label{thm:sq-lower-bound}
Fix any nonlinear activation $\sigma$ with the coefficient of non-linearity $\mu$ that satisfies the subgaussianity assumption. Let $f_{\vecW}$ be an $L$-layer neural network with width $m = \Omega(L\frac{\mu^2}{\delta^2})$ taking inputs of dimension $d$ with weights randomly initialized to standard Gaussians. Any algorithm that makes at most $p(d, L)$ statistical queries with tolerance $1/\text{poly}(d, L)$ and outputs a function that is $1/\text{poly}(d, L)$-correlated with $\text{sgn}(f_{\vecW})$ must satisfy $p(d,L) \geq e^{\Omega(L)}$.
\end{theorem}
 A key component in establishing the above SQ hardness of learning is to show that given two non-collinear unit length vectors, a randomly initialized network of depth $h$ and sufficiently large width makes, in expectation, the pair nearly orthogonal. In other words, the magnitude of the expected dot product between any pair decreases exponentially with depth.  While \citet{panigrahy-random-nets} proved the result for sign activations, we prove the statement for more general activations and then use it to establish hardness of learning in the SQ model.


\section{Benign Overfitting in Deep Neural Networks}
\label{sec:interpolation}

\newcommand{\RKHS}{\mathcal{H}}
\newcommand{\sphere}{\mathbb{S}}
\newcommand{\yvec}{\mathbf{y}}


In this section, we give an application of our conditioning results showing how interpolating classifiers (i.e. classifiers achieving perfect training accuracy) can generalize well in the context of deep neural networks. Specifically, we consider the problem of linear regression with square loss where the feature representation is obtained via a randomly initialized deep network, and an interpolating linear predictor is obtained by training only the top layer (i.e. the $v$ vector). Since there are infinitely many interpolating linear predictors in our overparameterized setting, we focus our attention on the \textit{minimum norm} predictor. 

Our result builds on the prior work of \citet{bartlett2019benign}, which studies benign overfitting in kernel least-squares regression, where the kernel is externally provided. They prove a benign overfitting result (i.e. generalization error going to 0 with increasing sample size) assuming that the spectrum of the kernel matrix decays at a certain slow rate. In this work, we {\em construct} the kernel via a randomly initialized deep neural network, and show, via our conditioning results, that the spectrum of the kernel matrix decays slowly enough for a benign overfitting result to hold. While the results of \citet{bartlett2019benign} assume a certain {\em well-specified} setting for the data to prove their generalization bound, we prove our result in the {\em misspecified} (or {\em agnostic}) setting and give an {\em excess risk} bound.

In our setting, the input space is the $d$ dimensional unit sphere $\sphere^{d-1}$,  the output space $\cY = [-1, 1]$, and samples $(x, y) \in \sphere^{d-1} \times [-1, 1]$ are drawn from an unknown distribution $\mathcal{D}$. The training set is $S = \{(x_i, y_i) \in \sphere^{d-1} \times [-1, 1]\}_{i=1}^n$. To simplify the presentation, we work in the infinite width setting, i.e. we learn the minimum norm linear predictor in the RKHS $\RKHS$ corresponding to the kernel function $\bar{\kernel}$ (see \pref{sec:top-layer}). 
Let $\Phi: \mathbb{S}^{d-1} \rightarrow \RKHS$ be the feature map corresponding to $\bar{\kernel}$. 
The loss of a linear predictor parameterized by $v \in \RKHS$ on an example $(x, y)$ is $(y - v^\top \Phi(x))^2$. We denote by $v^*$ the optimal linear predictor, i.e. a vector in $\arg\min_{v \in \RKHS} \E_{(x, y)}[(y - v^\top \Phi(x))^2]$, and by $v_S$ the minimum norm interpolating linear predictor, if one exists.
A key quantity of interest is the function $\Delta: \mathbb{N} \times [0, 1] \rightarrow [0, 1]$ defined as follows: 
if $T = \{x'_1, x'_2, \ldots, x'_m\}$ denotes a sample set of size $m$ drawn i.i.d. from the marginal distribution of $\mathcal{D}$ over the $x$-coordinate, then
\[\Delta(m, \gamma) := \sup\left\{\delta:\ \Pr_{T}\left[ \max_{i,j \in [m]:\ i \neq j}|x'_i \cdot x'_j| \leq 1-\delta\right] \geq 1-\gamma\right\}.\]
With this definition, we have the following excess risk bound (proof in \pref{app:interpolation}):
\begin{theorem}
\label{thm:interpolation}
For any $\gamma \in (0, \nicefrac{1}{2})$, let $L = \left\lceil \frac{\log(n^2)}{-\log(1 - \frac{\mu}{2})}\right\rceil + L_0(\Delta(n^2, \gamma))$. Then, with probability at least $1 - \gamma$ over the choice of $S$, there exists an interpolating linear predictor, and we have
\[\E_{(x, y)}[(y - v_S^\top \Phi(x))^2] - \E_{(x, y)}[(y - {v^*}^\top \Phi(x))^2] \leq O\left(\frac{\log(n/\gamma)}{n}\|v^*\|^2\right).\]
\end{theorem}
A few caveats about the theorem are in order. Note that the number of layers, $L$, and therefore $\RKHS$ and the optimal linear predictor $v^*$ depends on the sample size $n$. Thus, the excess risk goes to $0$ when $n$ increases if $\|v^*\| = o(\sqrt{n})$. 


\section{Experiments}
\label{sec:experiments}

In this section we present empirical results supporting our theoretical findings, and evaluate the effectiveness of using normalized activations as a replacement for standard operations such as batch normalization and layer normalization in training deep networks.

\paragraph{Normalized ReLU.} 
\begin{wrapfigure}{r}{0.3\textwidth}
    \centering
    \includegraphics[width=0.23\textwidth]{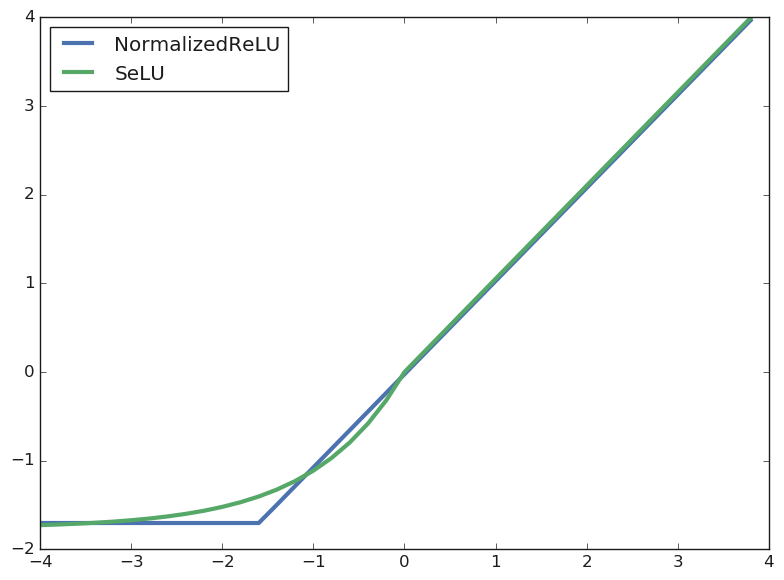} 
    \captionsetup{indention=0.2cm}
    \caption{\label{fig:norm_relu} \normrelu vs SeLU.}
\end{wrapfigure}
Motivated by the practical success of ReLU, we propose a family of normalized ReLU-like functions, parameterized by a scalar $c$, the location of the kink:
\begin{align}
\label{eq:newrelu}
\normrelu_c(x) = \lambda(c) \cdot [\max \{x-c, 0\} + b(c)],
\end{align}
where the constants $\lambda(c)$ and $b(c)$ are chosen to normalize the function (i.e. \eqref{eq:normalization} holds). In \pref{app:normrelu}, we derive the following closed form expressions for $b(c)$ and $\lambda(c)$: if $\varphi(x)$ and $\varPhi(x)$ are the Gaussian density and cumulative distribution functions respectively, then $b(c) = (1 - \varPhi(c))c - \varphi(c)$ and $\lambda(c) = [(1-\varPhi(c))\varPhi(c)c^2 + (1-2\varPhi(c))\varphi(c)c + (1-\varPhi(c)-\varphi(c)^2)]^{-1/2}$.


In our experimental setup we choose $c=-1.5975$ since this gives $\lambda(c) \approx 1.05$, which is the same scaling factor in the SeLU activation~\cite{klambauer2017self}.
For this value of $c$, we have $b(c) \approx -1.6209$. In the following, we refer to $\normrelu_{-1.5975}$ simply as \normrelu for convenience. Figure~\ref{fig:norm_relu} shows this \normrelu activation compared to SeLU. 
\pref{app:normrelu} has a comparison of the two activations in terms of training and generalization behavior. \pref{app:normrelu} also contains additional experimental details for this section including the choice of hyperparameters and the number of training and evaluation runs.

\paragraph{Effectiveness of \normrelu for training deep neural networks.} We first train fully connected feedforward networks of depth $32$ and $64$ on the CIFAR-10 dataset using 
either batch normalization, layer normalization or \normrelu. For each method, the best learning rate is chosen via cross validation. Figure \ref{fig:fnn} below shows how the training and the test accuracy increases with the number of epochs. As predicted by our theory, using \normrelu results in significantly faster optimization. Furthermore, the model trained via \normrelu also generalizes significantly better than using either batch or layer normalization. However, when using \normrelu we observed that we had to use small learning rates, of the order of $10^{-5}$ to stabilize training. Batch or Layer normalizations on the other hand are less sensitive and can be applied in conjunction with large learning rates.
\begin{figure}[ht]
    \centering
    \begin{minipage}{0.45\textwidth}
        \centering
        \includegraphics[width=1\textwidth]{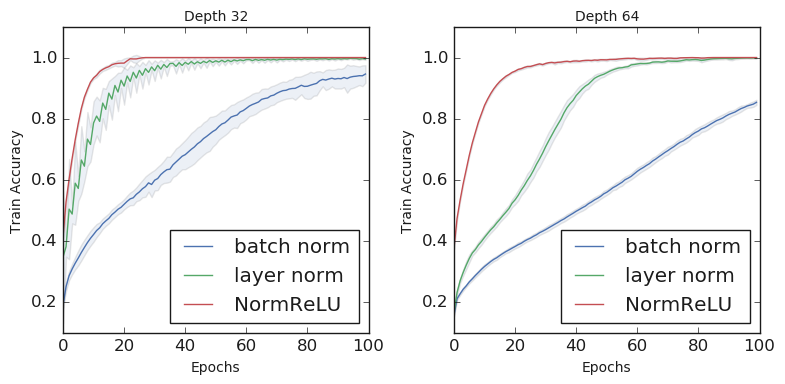} 
    \end{minipage}
    \begin{minipage}{0.45\textwidth}
        \centering
        \includegraphics[width=1\textwidth]{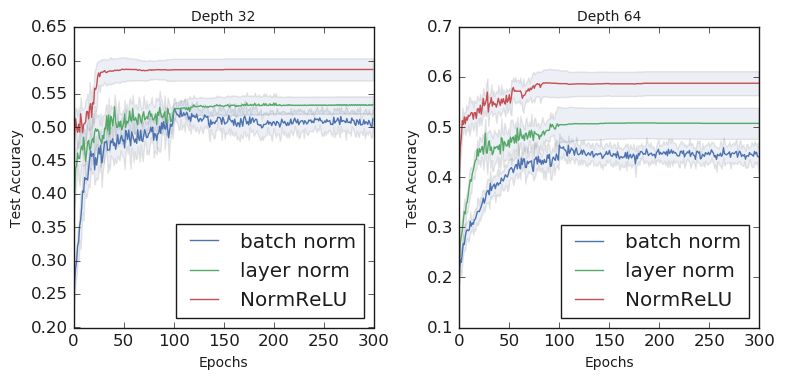} 
    \end{minipage}
    \captionsetup{indention=0.2cm}
   \caption{\label{fig:fnn}
Train and test accuracy vs. the number of epochs. The left plots shows the optimization performance of the \normrelu as compared to standard methods such as batch normalization and layer normalization. The plots are obtained by training depth 32 fully connected networks with 3000 hidden units in each layer. Similarly, the right plot compares the test accuracy of the three methods.} 
\end{figure}

\paragraph{\normrelu as a replacement of batch normalization for other architectures.} 
We train deep convolutional networks with the ResNet architecture \cite{he2016deep} 
using the standard practice of using batch normalization with skip connections and also by replacing batch normalization with \normrelu. We do not use layer normalization since that is not the standard way to train CNNs. Figure \ref{fig:cnn} shows the train and test accuracies obtained on both network architectures. As can be seen, the use of \normrelu is indeed competitive with batch normalization achieving similar test accuracies and slightly outperforming batch normalization at depth 110.
\begin{figure}[ht]
    \centering
    \begin{minipage}{0.45\textwidth}
        \centering
        \includegraphics[width=1\textwidth]{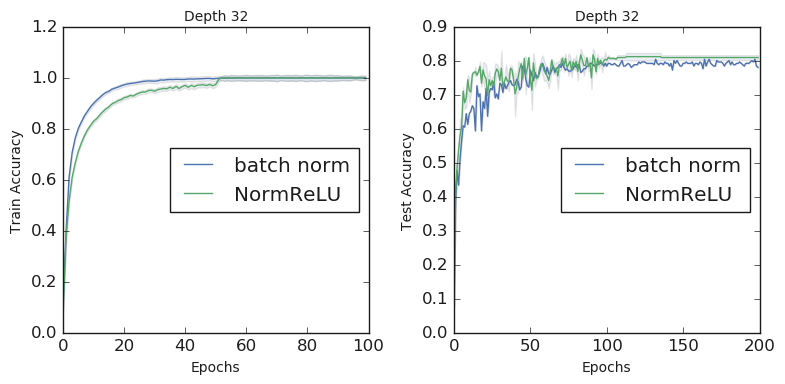} 
    \end{minipage}
    \begin{minipage}{0.45\textwidth}
        \centering
        \includegraphics[width=1\textwidth]{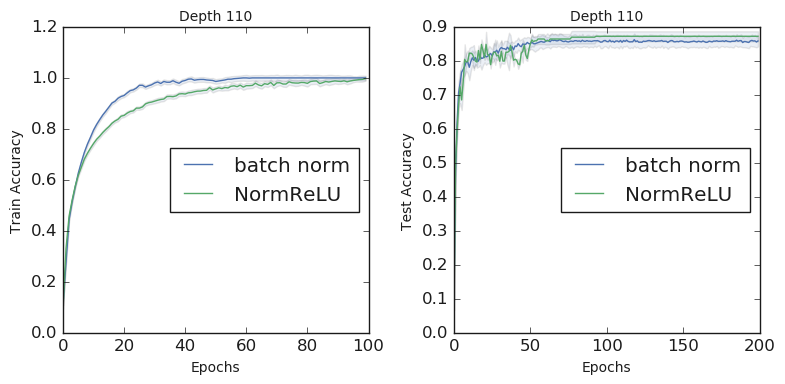} 
    \end{minipage}
    \captionsetup{indention=0.2cm}
   \caption{\label{fig:cnn}Train and test accuracy vs. the number of epochs. The plots are obtained by training a depth 32 and a depth 110 ResNet architecture.} 
\end{figure}

\paragraph{Comparison with Fixup initialization.} 
\begin{wrapfigure}{r}{0.3\textwidth}
\centering
\includegraphics[width=.23\textwidth]{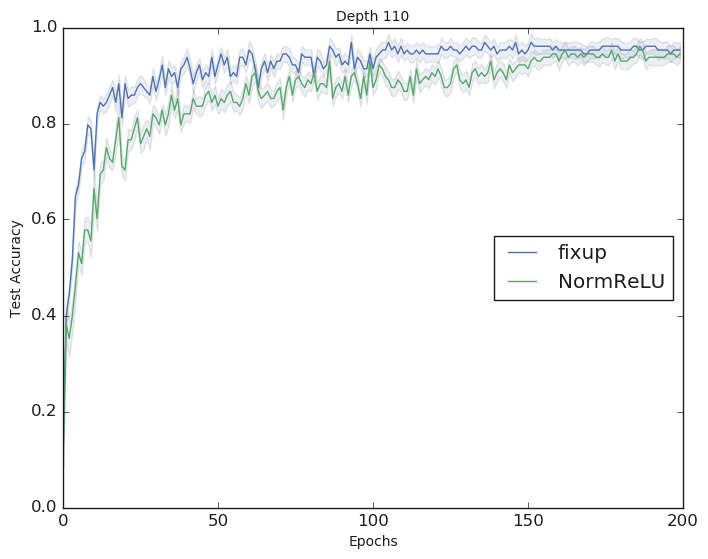}
\captionsetup{indention=0.2cm}
\caption{\label{fig:fixup} \normrelu vs Fixup on CIFAR-10 with a
 ResNet 110 architecture.} 
\end{wrapfigure}
We show that on the CIFAR-10 dataset, using \normrelu with standard initialization we can achieve comparable results to those obtained using standard ReLU with the Fixup initialization method of \citet{zhang2019fixup}. We turn on data augmentation and train the same 110 depth architecture as in \citet{zhang2019fixup}. 
We replace Fixup initialization with standard Gaussian initialization where the kernel weights are initialized with a mean zero Gaussian and a variance of $1/(\text{kernel\_size}^2 \cdot \text{num\_out\_channels})$, and replace the standard ReLU activation with \normrelu. Figure \ref{fig:fixup} shows the test accuracies achieved by both the methods. Note that training with \normrelu achieves a similar accuracy as with Fixup initialization.


\section{Conclusions and Future Directions}
\label{sec:conclusions}

In this work we further elaborated the role of depth in training of modern neural networks by showing that the conditioning of the input data improves exponentially with
depth at random initialization. It would be interesting to further rigorously understand how the conditioning behaves during the course of training. An excellent open question is to 
analyze more realistic parameter regimes~(low width in particular). 
While it is reasonably straightforward to extend our analysis to architectures such as Convolutional Neural Networks and ResNets, extending it to architectures such as Recurrent Networks and Transformers would be quite interesting. 


\bibliographystyle{plainnat}
\bibliography{notes}

\begin{thebibliography}{87}
\providecommand{\natexlab}[1]{#1}
\providecommand{\url}[1]{\texttt{#1}}
\expandafter\ifx\csname urlstyle\endcsname\relax
  \providecommand{\doi}[1]{doi: #1}\else
  \providecommand{\doi}{doi: \begingroup \urlstyle{rm}\Url}\fi

\bibitem[Allen-Zhu and Li(2020)]{allen2020backward}
Zeyuan Allen-Zhu and Yuanzhi Li.
\newblock Backward feature correction: How deep learning performs deep
  learning.
\newblock \emph{arXiv preprint arXiv:2001.04413}, 2020.

\bibitem[Allen-Zhu et~al.(2018)Allen-Zhu, Li, and Song]{allen2018convergence}
Zeyuan Allen-Zhu, Yuanzhi Li, and Zhao Song.
\newblock A convergence theory for deep learning via over-parameterization.
\newblock \emph{arXiv preprint arXiv:1811.03962}, 2018.

\bibitem[Allen{-}Zhu et~al.(2019)Allen{-}Zhu, Li, and Song]{allen-zhu-li-song}
Zeyuan Allen{-}Zhu, Yuanzhi Li, and Zhao Song.
\newblock A convergence theory for deep learning via over-parameterization.
\newblock In \emph{ICML}, pages 242--252, 2019.

\bibitem[Andoni et~al.(2014)Andoni, Panigrahy, Valiant, and
  Zhang]{andoni2014learning}
Alexandr Andoni, Rina Panigrahy, Gregory Valiant, and Li~Zhang.
\newblock Learning polynomials with neural networks.
\newblock In \emph{International conference on machine learning}, pages
  1908--1916, 2014.

\bibitem[Arora et~al.(2016)Arora, Basu, Mianjy, and
  Mukherjee]{arora2016understanding}
Raman Arora, Amitabh Basu, Poorya Mianjy, and Anirbit Mukherjee.
\newblock Understanding deep neural networks with rectified linear units.
\newblock \emph{arXiv preprint arXiv:1611.01491}, 2016.

\bibitem[Arora et~al.(2018{\natexlab{a}})Arora, Cohen, Golowich, and
  Hu]{arora2018convergence}
Sanjeev Arora, Nadav Cohen, Noah Golowich, and Wei Hu.
\newblock A convergence analysis of gradient descent for deep linear neural
  networks.
\newblock \emph{arXiv preprint arXiv:1810.02281}, 2018{\natexlab{a}}.

\bibitem[Arora et~al.(2018{\natexlab{b}})Arora, Cohen, and
  Hazan]{arora-cohen-hazan}
Sanjeev Arora, Nadav Cohen, and Elad Hazan.
\newblock On the optimization of deep networks: Implicit acceleration by
  overparameterization.
\newblock In \emph{ICML}, pages 244--253, 2018{\natexlab{b}}.

\bibitem[Arora et~al.(2018{\natexlab{c}})Arora, Ge, Neyshabur, and
  Zhang]{arora2018stronger}
Sanjeev Arora, Rong Ge, Behnam Neyshabur, and Yi~Zhang.
\newblock Stronger generalization bounds for deep nets via a compression
  approach.
\newblock \emph{arXiv preprint arXiv:1802.05296}, 2018{\natexlab{c}}.

\bibitem[Arora et~al.(2019{\natexlab{a}})Arora, Cohen, Hu, and
  Luo]{arora-cohen-hu-luo}
Sanjeev Arora, Nadav Cohen, Wei Hu, and Yuping Luo.
\newblock Implicit regularization in deep matrix factorization.
\newblock \emph{CoRR}, abs/1905.13655, 2019{\natexlab{a}}.

\bibitem[Arora et~al.(2019{\natexlab{b}})Arora, Cohen, Hu, and
  Luo]{arora2019implicit}
Sanjeev Arora, Nadav Cohen, Wei Hu, and Yuping Luo.
\newblock Implicit regularization in deep matrix factorization.
\newblock In \emph{Advances in Neural Information Processing Systems}, pages
  7411--7422, 2019{\natexlab{b}}.

\bibitem[Arora et~al.(2019{\natexlab{c}})Arora, Du, Hu, Li, Salakhutdinov, and
  Wang]{arora2019exact}
Sanjeev Arora, Simon~S. Du, Wei Hu, Zhiyuan Li, Ruslan Salakhutdinov, and
  Ruosong Wang.
\newblock On exact computation with an infinitely wide neural net.
\newblock In \emph{NeurIPS}, 2019{\natexlab{c}}.

\bibitem[Ba et~al.(2016)Ba, Kiros, and Hinton]{layer-norm}
Lei~Jimmy Ba, Jamie~Ryan Kiros, and Geoffrey~E. Hinton.
\newblock Layer normalization.
\newblock \emph{CoRR}, abs/1607.06450, 2016.

\bibitem[Bakshi et~al.(2018)Bakshi, Jayaram, and Woodruff]{bakshi2018learning}
Ainesh Bakshi, Rajesh Jayaram, and David~P Woodruff.
\newblock Learning two layer rectified neural networks in polynomial time.
\newblock \emph{arXiv preprint arXiv:1811.01885}, 2018.

\bibitem[Balduzzi et~al.(2017)Balduzzi, Frean, Leary, Lewis, Ma, and
  McWilliams]{balduzzi}
David Balduzzi, Marcus Frean, Lennox Leary, J.~P. Lewis, Kurt~Wan{-}Duo Ma, and
  Brian McWilliams.
\newblock The shattered gradients problem: If resnets are the answer, then what
  is the question?
\newblock In \emph{ICML}, volume~70 of \emph{Proceedings of Machine Learning
  Research}, pages 342--350. {PMLR}, 2017.

\bibitem[Bartlett et~al.(2017)Bartlett, Foster, and
  Telgarsky]{bartlett2017spectrally}
Peter~L Bartlett, Dylan~J Foster, and Matus~J Telgarsky.
\newblock Spectrally-normalized margin bounds for neural networks.
\newblock In \emph{Advances in Neural Information Processing Systems}, pages
  6240--6249, 2017.

\bibitem[Bartlett et~al.(2019{\natexlab{a}})Bartlett, Helmbold, and
  Long]{bartlett2019gradient}
Peter~L Bartlett, David~P Helmbold, and Philip~M Long.
\newblock Gradient descent with identity initialization efficiently learns
  positive-definite linear transformations by deep residual networks.
\newblock \emph{Neural computation}, 31\penalty0 (3):\penalty0 477--502,
  2019{\natexlab{a}}.

\bibitem[Bartlett et~al.(2019{\natexlab{b}})Bartlett, Long, Lugosi, and
  Tsigler]{bartlett2019benign}
Peter~L. Bartlett, Philip~M. Long, Gabor Lugosi, and Alexander Tsigler.
\newblock Benign overfitting in linear regression.
\newblock \emph{arXiv preprint arXiv:1906.11300}, 2019{\natexlab{b}}.

\bibitem[Belkin et~al.(2018)Belkin, Hsu, and Mitra]{belkin-hsu-mitra}
Mikhail Belkin, Daniel~J. Hsu, and Partha Mitra.
\newblock Overfitting or perfect fitting? risk bounds for classification and
  regression rules that interpolate.
\newblock In \emph{NeurIPS}, pages 2306--2317, 2018.

\bibitem[Belkin et~al.(2019{\natexlab{a}})Belkin, Hsu, and Xu]{belkin-hsu-xu}
Mikhail Belkin, Daniel Hsu, and Ji~Xu.
\newblock Two models of double descent for weak features.
\newblock \emph{CoRR}, abs/1903.07571, 2019{\natexlab{a}}.

\bibitem[Belkin et~al.(2019{\natexlab{b}})Belkin, Rakhlin, and
  Tsybakov]{belkin-rakhlin-tsybakov}
Mikhail Belkin, Alexander Rakhlin, and Alexandre~B. Tsybakov.
\newblock Does data interpolation contradict statistical optimality?
\newblock In \emph{AISTATS}, pages 1611--1619, 2019{\natexlab{b}}.

\bibitem[Bshouty and Feldman(2002)]{bshouty2002using}
Nader~H Bshouty and Vitaly Feldman.
\newblock On using extended statistical queries to avoid membership queries.
\newblock \emph{Journal of Machine Learning Research}, 2\penalty0
  (Feb):\penalty0 359--395, 2002.

\bibitem[Chizat and Bach(2018)]{chizat2018global}
Lenaic Chizat and Francis Bach.
\newblock On the global convergence of gradient descent for over-parameterized
  models using optimal transport.
\newblock In \emph{Advances in neural information processing systems}, pages
  3036--3046, 2018.

\bibitem[Daniely(2017{\natexlab{a}})]{daniely2017}
Amit Daniely.
\newblock Depth separation for neural networks.
\newblock In \emph{COLT}, pages 690--696, 2017{\natexlab{a}}.

\bibitem[Daniely(2017{\natexlab{b}})]{daniely2017sgd}
Amit Daniely.
\newblock Sgd learns the conjugate kernel class of the network.
\newblock In \emph{Advances in Neural Information Processing Systems}, pages
  2422--2430, 2017{\natexlab{b}}.

\bibitem[Daniely et~al.(2016)Daniely, Frostig, and Singer]{daniely2016toward}
Amit Daniely, Roy Frostig, and Yoram Singer.
\newblock Toward deeper understanding of neural networks: The power of
  initialization and a dual view on expressivity.
\newblock In \emph{Advances In Neural Information Processing Systems}, pages
  2253--2261, 2016.

\bibitem[Das et~al.(2019)Das, Gollapudi, Kumar, and
  Panigrahy]{panigrahy-random-nets}
Abhimanyu Das, Sreenivas Gollapudi, Ravi Kumar, and Rina Panigrahy.
\newblock On the learnability of deep random networks.
\newblock \emph{CoRR}, abs/1904.03866, 2019.
\newblock URL \url{http://arxiv.org/abs/1904.03866}.

\bibitem[Delalleau and Bengio(2011)]{delalleau2011shallow}
Olivier Delalleau and Yoshua Bengio.
\newblock Shallow vs. deep sum-product networks.
\newblock In \emph{Advances in neural information processing systems}, pages
  666--674, 2011.

\bibitem[Du et~al.(2018)Du, Lee, Li, Wang, and Zhai]{du2018gradient}
Simon~S Du, Jason~D Lee, Haochuan Li, Liwei Wang, and Xiyu Zhai.
\newblock Gradient descent finds global minima of deep neural networks.
\newblock \emph{arXiv preprint arXiv:1811.03804}, 2018.

\bibitem[Du et~al.(2019)Du, Zhai, P{\'{o}}czos, and
  Singh]{du-zhai-poczos-singh}
Simon~S. Du, Xiyu Zhai, Barnab{\'{a}}s P{\'{o}}czos, and Aarti Singh.
\newblock Gradient descent provably optimizes over-parameterized neural
  networks.
\newblock In \emph{ICLR}, 2019.

\bibitem[Dziugaite and Roy(2017)]{dziugaite2017computing}
Gintare~Karolina Dziugaite and Daniel~M Roy.
\newblock Computing nonvacuous generalization bounds for deep (stochastic)
  neural networks with many more parameters than training data.
\newblock \emph{arXiv preprint arXiv:1703.11008}, 2017.

\bibitem[Eldan and Shamir(2016)]{eldan-shamir}
Ronen Eldan and Ohad Shamir.
\newblock The power of depth for feedforward neural networks.
\newblock In \emph{COLT}, pages 907--940, 2016.

\bibitem[Ge et~al.(2017)Ge, Lee, and Ma]{ge2017learning}
Rong Ge, Jason~D Lee, and Tengyu Ma.
\newblock Learning one-hidden-layer neural networks with landscape design.
\newblock \emph{arXiv preprint arXiv:1711.00501}, 2017.

\bibitem[Ge et~al.(2018)Ge, Kuditipudi, Li, and Wang]{ge2018learning}
Rong Ge, Rohith Kuditipudi, Zhize Li, and Xiang Wang.
\newblock Learning two-layer neural networks with symmetric inputs.
\newblock \emph{arXiv preprint arXiv:1810.06793}, 2018.

\bibitem[Gneiting(2013)]{gneitling}
T.~Gneiting.
\newblock Strictly and non-strictly positive definite functions on spheres.
\newblock \emph{Bernoulli}, 19\penalty0 (4):\penalty0 1327--1349, 2013.

\bibitem[Goel and Klivans(2017)]{goel2017learning}
Surbhi Goel and Adam Klivans.
\newblock Learning neural networks with two nonlinear layers in polynomial
  time.
\newblock \emph{arXiv preprint arXiv:1709.06010}, 2017.

\bibitem[Goel et~al.(2018)Goel, Klivans, and Meka]{goel2018learning}
Surbhi Goel, Adam Klivans, and Raghu Meka.
\newblock Learning one convolutional layer with overlapping patches.
\newblock \emph{arXiv preprint arXiv:1802.02547}, 2018.

\bibitem[Hastie et~al.(2019)Hastie, Montanari, Rosset, and
  Tibshirani]{hastie-montanari-rosset-tibshirani}
Trevor Hastie, Andrea Montanari, Saharon Rosset, and Ryan~J. Tibshirani.
\newblock Surprises in high-dimensional ridgeless least squares interpolation.
\newblock \emph{CoRR}, abs/1903.08560, 2019.
\newblock URL \url{http://arxiv.org/abs/1903.08560}.

\bibitem[He et~al.(2016{\natexlab{a}})He, Zhang, Ren, and Sun]{he2016deep}
Kaiming He, Xiangyu Zhang, Shaoqing Ren, and Jian Sun.
\newblock Deep residual learning for image recognition.
\newblock In \emph{Proceedings of the IEEE conference on computer vision and
  pattern recognition}, pages 770--778, 2016{\natexlab{a}}.

\bibitem[He et~al.(2016{\natexlab{b}})He, Zhang, Ren, and Sun]{resnet}
Kaiming He, Xiangyu Zhang, Shaoqing Ren, and Jian Sun.
\newblock Deep residual learning for image recognition.
\newblock In \emph{CVPR}, pages 770--778, 2016{\natexlab{b}}.

\bibitem[Ioffe and Szegedy(2015)]{ioffe2015batch}
Sergey Ioffe and Christian Szegedy.
\newblock Batch normalization: Accelerating deep network training by reducing
  internal covariate shift.
\newblock \emph{arXiv preprint arXiv:1502.03167}, 2015.

\bibitem[Jacot et~al.(2018)Jacot, Hongler, and Gabriel]{ntk}
Arthur Jacot, Cl{\'{e}}ment Hongler, and Franck Gabriel.
\newblock Neural tangent kernel: Convergence and generalization in neural
  networks.
\newblock In \emph{NeurIPS}, pages 8580--8589, 2018.

\bibitem[Kane and Williams(2016)]{kane2016super}
Daniel~M Kane and Ryan Williams.
\newblock Super-linear gate and super-quadratic wire lower bounds for depth-two
  and depth-three threshold circuits.
\newblock In \emph{Proceedings of the forty-eighth annual ACM symposium on
  Theory of Computing}, pages 633--643, 2016.

\bibitem[Kearns(1998)]{kearns1998efficient}
Michael Kearns.
\newblock Efficient noise-tolerant learning from statistical queries.
\newblock \emph{Journal of the ACM (JACM)}, 45\penalty0 (6):\penalty0
  983--1006, 1998.

\bibitem[Klambauer et~al.(2017)Klambauer, Unterthiner, Mayr, and
  Hochreiter]{klambauer2017self}
G{\"u}nter Klambauer, Thomas Unterthiner, Andreas Mayr, and Sepp Hochreiter.
\newblock Self-normalizing neural networks.
\newblock In \emph{Advances in neural information processing systems}, pages
  971--980, 2017.

\bibitem[Krizhevsky et~al.(2009)Krizhevsky, Hinton,
  et~al.]{krizhevsky2009learning}
Alex Krizhevsky, Geoffrey Hinton, et~al.
\newblock Learning multiple layers of features from tiny images.
\newblock 2009.

\bibitem[Lee et~al.(2017)Lee, Ge, Ma, Risteski, and
  Arora]{lee-ge-ma-risteski-arora}
Holden Lee, Rong Ge, Tengyu Ma, Andrej Risteski, and Sanjeev Arora.
\newblock On the ability of neural nets to express distributions.
\newblock In \emph{COLT}, pages 1271--1296, 2017.

\bibitem[Lee et~al.(2019)Lee, Xiao, Schoenholz, Bahri, Novak, Sohl-Dickstein,
  and Pennington]{lee-wide-neural-2019}
Jaehoon Lee, Lechao Xiao, Samuel Schoenholz, Yasaman Bahri, Roman Novak, Jascha
  Sohl-Dickstein, and Jeffrey Pennington.
\newblock Wide neural networks of any depth evolve as linear models under
  gradient descent.
\newblock In \emph{NeurIPS}, pages 8570--8581. 2019.

\bibitem[Li and Liang(2018)]{li-liang}
Yuanzhi Li and Yingyu Liang.
\newblock Learning overparameterized neural networks via stochastic gradient
  descent on structured data.
\newblock In \emph{NeurIPS}, pages 8168--8177, 2018.

\bibitem[Li and Yuan(2017)]{li2017convergence}
Yuanzhi Li and Yang Yuan.
\newblock Convergence analysis of two-layer neural networks with relu
  activation.
\newblock In \emph{Advances in neural information processing systems}, pages
  597--607, 2017.

\bibitem[Liang and Rakhlin(2018)]{liang-rakhlin}
Tengyuan Liang and Alexander Rakhlin.
\newblock Just interpolate: Kernel "ridgeless" regression can generalize.
\newblock \emph{CoRR}, abs/1808.00387, 2018.
\newblock URL \url{http://arxiv.org/abs/1808.00387}.

\bibitem[Liang et~al.(2019)Liang, Rakhlin, and Zhai]{liang-rakhlin-zhai}
Tengyuan Liang, Alexander Rakhlin, and Xiyu Zhai.
\newblock On the risk of minimum-norm interpolants and restricted lower
  isometry of kernels.
\newblock \emph{CoRR}, abs/1908.10292, 2019.
\newblock URL \url{http://arxiv.org/abs/1908.10292}.

\bibitem[Long and Sedghi(2019)]{long2019size}
Philip~M Long and Hanie Sedghi.
\newblock Size-free generalization bounds for convolutional neural networks.
\newblock \emph{arXiv preprint arXiv:1905.12600}, 2019.

\bibitem[Malach and Shalev-Shwartz(2019)]{malach2019deeper}
Eran Malach and Shai Shalev-Shwartz.
\newblock Is deeper better only when shallow is good?
\newblock In \emph{Advances in Neural Information Processing Systems}, pages
  6426--6435, 2019.

\bibitem[Martens and Medabalimi(2014)]{martens2014expressive}
James Martens and Venkatesh Medabalimi.
\newblock On the expressive efficiency of sum product networks.
\newblock \emph{arXiv preprint arXiv:1411.7717}, 2014.

\bibitem[Mei and Montanari(2019)]{mei-montanari}
Song Mei and Andrea Montanari.
\newblock The generalization error of random features regression: Precise
  asymptotics and double descent curve.
\newblock \emph{CoRR}, abs/1908.05355, 2019.

\bibitem[Mei et~al.(2018)Mei, Montanari, and Nguyen]{mei2018mean}
Song Mei, Andrea Montanari, and Phan-Minh Nguyen.
\newblock A mean field view of the landscape of two-layer neural networks.
\newblock \emph{Proceedings of the National Academy of Sciences}, 115\penalty0
  (33):\penalty0 E7665--E7671, 2018.

\bibitem[Nagarajan and Kolter(2019)]{nagarajan2019deterministic}
Vaishnavh Nagarajan and J~Zico Kolter.
\newblock Deterministic pac-bayesian generalization bounds for deep networks
  via generalizing noise-resilience.
\newblock \emph{arXiv preprint arXiv:1905.13344}, 2019.

\bibitem[Nesterov(2014)]{nesterov-lectures}
Yurii Nesterov.
\newblock \emph{Introductory Lectures on Convex Optimization: A Basic Course}.
\newblock Springer Publishing Company, Incorporated, 1 edition, 2014.
\newblock ISBN 1461346916.

\bibitem[Neyshabur et~al.(2015)Neyshabur, Tomioka, and
  Srebro]{neyshabur2015norm}
Behnam Neyshabur, Ryota Tomioka, and Nathan Srebro.
\newblock Norm-based capacity control in neural networks.
\newblock In \emph{Conference on Learning Theory}, pages 1376--1401, 2015.

\bibitem[Neyshabur et~al.(2017)Neyshabur, Bhojanapalli, and
  Srebro]{neyshabur2017pac}
Behnam Neyshabur, Srinadh Bhojanapalli, and Nathan Srebro.
\newblock A pac-bayesian approach to spectrally-normalized margin bounds for
  neural networks.
\newblock \emph{arXiv preprint arXiv:1707.09564}, 2017.

\bibitem[O'Donnell(2014)]{o2014analysis}
Ryan O'Donnell.
\newblock \emph{Analysis of boolean functions}.
\newblock Cambridge University Press, 2014.

\bibitem[Oymak and Soltanolkotabi(2019)]{oymak}
Samet Oymak and Mahdi Soltanolkotabi.
\newblock Towards moderate overparameterization: global convergence guarantees
  for training shallow neural networks.
\newblock \emph{CoRR}, abs/1902.04674, 2019.

\bibitem[Panigrahi et~al.(2020)Panigrahi, Shetty, and Goyal]{PSG}
Abhishek Panigrahi, Abhishek Shetty, and Navin Goyal.
\newblock Effect of activation functions on the training of overparametrized
  neural nets.
\newblock In \emph{ICLR}, 2020.

\bibitem[Pennington et~al.(2017)Pennington, Schoenholz, and
  Ganguli]{resurrecting}
Jeffrey Pennington, Samuel~S. Schoenholz, and Surya Ganguli.
\newblock Resurrecting the sigmoid in deep learning through dynamical isometry:
  theory and practice.
\newblock In \emph{NeurIPS}, pages 4785--4795, 2017.

\bibitem[Poole et~al.(2016)Poole, Lahiri, Raghu, Sohl{-}Dickstein, and
  Ganguli]{poole}
Ben Poole, Subhaneil Lahiri, Maithra Raghu, Jascha Sohl{-}Dickstein, and Surya
  Ganguli.
\newblock Exponential expressivity in deep neural networks through transient
  chaos.
\newblock In \emph{NeurIPS}, pages 3360--3368, 2016.

\bibitem[Raghu et~al.(2017)Raghu, Poole, Kleinberg, Ganguli, and
  Sohl{-}Dickstein]{raghu-poole-kleinberg-ganguli-sohl-dickstein}
Maithra Raghu, Ben Poole, Jon~M. Kleinberg, Surya Ganguli, and Jascha
  Sohl{-}Dickstein.
\newblock On the expressive power of deep neural networks.
\newblock In \emph{ICML}, pages 2847--2854, 2017.

\bibitem[Rotskoff and Vanden-Eijnden(2018)]{rotskoff2018parameters}
Grant Rotskoff and Eric Vanden-Eijnden.
\newblock Parameters as interacting particles: long time convergence and
  asymptotic error scaling of neural networks.
\newblock In \emph{Advances in neural information processing systems}, pages
  7146--7155, 2018.

\bibitem[Santurkar et~al.(2018)Santurkar, Tsipras, Ilyas, and Madry]{santurkar}
Shibani Santurkar, Dimitris Tsipras, Andrew Ilyas, and Aleksander Madry.
\newblock How does batch normalization help optimization?
\newblock In \emph{NeurIPS}, pages 2488--2498, 2018.

\bibitem[Schoenberg(1942)]{schoenberg}
I.~J. Schoenberg.
\newblock Positive definite functions on spheres.
\newblock \emph{Duke Math. J.}, 9\penalty0 (1):\penalty0 96--108, 03 1942.
\newblock \doi{10.1215/S0012-7094-42-00908-6}.
\newblock URL \url{https://doi.org/10.1215/S0012-7094-42-00908-6}.

\bibitem[Schoenholz et~al.(2017)Schoenholz, Gilmer, Ganguli, and
  Sohl{-}Dickstein]{deepinfprop}
Samuel~S. Schoenholz, Justin Gilmer, Surya Ganguli, and Jascha
  Sohl{-}Dickstein.
\newblock Deep information propagation.
\newblock In \emph{ICLR}, 2017.

\bibitem[Simonyan and Zisserman(2014)]{simonyan2014very}
Karen Simonyan and Andrew Zisserman.
\newblock Very deep convolutional networks for large-scale image recognition.
\newblock \emph{arXiv preprint arXiv:1409.1556}, 2014.

\bibitem[Sirignano and Spiliopoulos(2018)]{sirignano2018mean}
Justin Sirignano and Konstantinos Spiliopoulos.
\newblock Mean field analysis of neural networks.
\newblock \emph{arXiv preprint arXiv:1805.01053}, 2018.

\bibitem[Soltanolkotabi et~al.(2018)Soltanolkotabi, Javanmard, and
  Lee]{soltanolkotabi2018theoretical}
Mahdi Soltanolkotabi, Adel Javanmard, and Jason~D Lee.
\newblock Theoretical insights into the optimization landscape of
  over-parameterized shallow neural networks.
\newblock \emph{IEEE Transactions on Information Theory}, 65\penalty0
  (2):\penalty0 742--769, 2018.

\bibitem[Song et~al.(2017)Song, Vempala, Wilmes, and Xie]{song2017complexity}
Le~Song, Santosh Vempala, John Wilmes, and Bo~Xie.
\newblock On the complexity of learning neural networks.
\newblock In \emph{Advances in neural information processing systems}, pages
  5514--5522, 2017.

\bibitem[Su and Yang(2019)]{SuYang}
Lili Su and Pengkun Yang.
\newblock On learning over-parameterized neural networks: {A} functional
  approximation perspective.
\newblock In \emph{NeurIPS}, pages 2637--2646, 2019.

\bibitem[Szegedy et~al.(2015)Szegedy, Liu, Jia, Sermanet, Reed, Anguelov,
  Erhan, Vanhoucke, and Rabinovich]{szegedy2015going}
Christian Szegedy, Wei Liu, Yangqing Jia, Pierre Sermanet, Scott Reed, Dragomir
  Anguelov, Dumitru Erhan, Vincent Vanhoucke, and Andrew Rabinovich.
\newblock Going deeper with convolutions.
\newblock In \emph{Proceedings of the IEEE conference on computer vision and
  pattern recognition}, pages 1--9, 2015.

\bibitem[Telgarsky(2016)]{telgarsky}
Matus Telgarsky.
\newblock Benefits of depth in neural networks.
\newblock In \emph{COLT}, pages 1517--1539, 2016.

\bibitem[Tropp(2015)]{tropp}
Joel~A. Tropp.
\newblock An introduction to matrix concentration inequalities.
\newblock \emph{Foundations and Trends in Machine Learning}, 8\penalty0
  (1-2):\penalty0 1--230, 2015.

\bibitem[Vempala and Wilmes(2018)]{vempala2018gradient}
Santosh Vempala and John Wilmes.
\newblock Gradient descent for one-hidden-layer neural networks: Polynomial
  convergence and sq lower bounds.
\newblock \emph{arXiv preprint arXiv:1805.02677}, 2018.

\bibitem[Vershynin(2018)]{vershynin2018high}
Roman Vershynin.
\newblock \emph{High-dimensional probability: An introduction with applications
  in data science}, volume~47.
\newblock Cambridge university press, 2018.

\bibitem[Wei et~al.(2019)Wei, Lee, Liu, and Ma]{wei2019regularization}
Colin Wei, Jason~D Lee, Qiang Liu, and Tengyu Ma.
\newblock Regularization matters: Generalization and optimization of neural
  nets vs their induced kernel.
\newblock In \emph{Advances in Neural Information Processing Systems}, pages
  9709--9721, 2019.

\bibitem[Xiao et~al.(2019)Xiao, Pennington, and Schoenholz]{XPS}
Lechao Xiao, Jeffrey Pennington, and Samuel~S. Schoenholz.
\newblock Disentangling trainability and generalization in deep learning.
\newblock \emph{CoRR}, abs/1912.13053, 2019.

\bibitem[Yang(2019)]{yang}
Greg Yang.
\newblock Scaling limits of wide neural networks with weight sharing: Gaussian
  process behavior, gradient independence, and neural tangent kernel
  derivation.
\newblock \emph{CoRR}, abs/1902.04760, 2019.

\bibitem[Yang et~al.(2019)Yang, Pennington, Rao, Sohl{-}Dickstein, and
  Schoenholz]{yang-bn}
Greg Yang, Jeffrey Pennington, Vinay Rao, Jascha Sohl{-}Dickstein, and
  Samuel~S. Schoenholz.
\newblock A mean field theory of batch normalization.
\newblock In \emph{ICLR}, 2019.

\bibitem[Zhang et~al.(2017)Zhang, Bengio, Hardt, Recht, and
  Vinyals]{hardt-interpolation}
Chiyuan Zhang, Samy Bengio, Moritz Hardt, Benjamin Recht, and Oriol Vinyals.
\newblock Understanding deep learning requires rethinking generalization.
\newblock In \emph{ICLR}, 2017.

\bibitem[Zhang et~al.(2019)Zhang, Dauphin, and Ma]{zhang2019fixup}
Hongyi Zhang, Yann~N Dauphin, and Tengyu Ma.
\newblock Fixup initialization: Residual learning without normalization.
\newblock \emph{arXiv preprint arXiv:1901.09321}, 2019.

\bibitem[Zou and Gu(2019)]{zou-gu}
Difan Zou and Quanquan Gu.
\newblock An improved analysis of training over-parameterized deep neural
  networks.
\newblock In \emph{NeurIPS}, 2019.

\end{thebibliography}

\appendix

\newpage
\section{Related Work}
\label{app:related-app}
\paragraph{Representational Benefits of Depth.} Analogous to depth hierarchy theorems in circuit complexity, many recent works have aimed to characterize the representational power of deep neural networks when compared to their shallow counterparts. The work of \citet{delalleau2011shallow} studies sum-product networks and constructs examples of functions that can be efficiently represented by depth $4$ or higher networks and require exponentially many neurons for representation with depth one networks. The works of \citet{martens2014expressive} and \citet{kane2016super} study networks of linear threshold gates and provide similar separation results. \citet{eldan-shamir} show that for many popular activations such as sigmoid, ReLU etc. there are simple functions that can be computed by depth $3$ feed forward networks but require exponentially (in the input dimensionality) many neurons to represent using two layer feed forward networks. \citet{telgarsky} generalizes this to construct, for any integer $k$, a family of functions that can be approximated by $\Theta(k^3)$ layers and $\Theta(k^3)$ size and require exponential in $k$ neurons to represent with $O(k)$ depth.

\paragraph{Optimization Benefits of Depth.} While the benefits of depth are well understood in terms of the representation power using a small number of neurons, the question of whether increasing depth helps with optimization is currently poorly understood. The recent work of \citet{arora-cohen-hazan} aims to understand this question for the special case of linear neural networks. For the case of $\ell_p$ regression, they show that gradient descent updates on a depth $2$ linear network correspond to accelerated gradient descent type updates on the original weight vector. Similarly, they derive the form of the weight updates for a general over parameterized deep linear neural network and show that these updates can be viewed as performing gradient descent on the original network but with a preconditioning operation applied to the gradient at each step. Empirically this leads to faster convergence. The works of \citet{bartlett2019gradient} and \citet{arora2018convergence} study the convergence of gradient descent on linear regression problems when solved via an over parameterized deep linear network. These works establish that under suitable assumptions on the initialization, gradient descent on the over parameterized deep linear networks enjoys the same rate of convergence as performing linear regression in the original parameter space which is a smooth and strongly convex problem. 

In a similar vein, the recent work of \citet{arora2019implicit} analyzes over parameterized deep linear networks for solving matrix factorization, and shows that the solution to the gradient flow equations approaches the minimum nuclear norm solution at a rate that increases with the depth of the network. The recent work of \citet{malach2019deeper} studies depth separation between shallow and deeper networks over distributions that have a certain fractal structure. In certain regimes of the parameters of the distribution the authors show that, surprisingly, the stronger the depth separation is, the harder it becomes to learn the distribution via a deep network using gradient based algorithms.

\paragraph{Optimization of Neural Networks via Gradient Descent} In recent years there has been a large body of work in analyzing the convergence of gradient descent and stochastic gradient descent~(SGD) on over parameterized neural networks. The work of \citet{andoni2014learning} shows that depth one neural networks with quadratic activations can efficiently represent low degree polynomials and performing gradient descent on the network starting with random initialization can efficiently learn such classes. The work of \citet{li2017convergence} shows convergence of gradient descent on the population loss and under Gaussian input distribution, of a two layer feed forward network with relu activations and the identity mapping mimicking the ResNet architecture. Under similar assumptions the work of \citet{soltanolkotabi2018theoretical} analyzes SGD for two layer neural networks with quadratic activations. The work of \citet{li-liang} extends these results to more realistic data distributions. 

Building upon the work of \citet{daniely2016toward}, \citet{daniely2017sgd} shows that SGD when run on over parameterized neural networks achieves at most $\epsilon$ excess loss~(on the training set) over the best predictor in the conjugate kernel class at the rate that depends on $1/\epsilon^2$ and $M$, the norm of the best predictor. This result is extended in the work of \citet{du-zhai-poczos-singh} showing that by running SGD on a randomly initialized two layer over parameterized networks with relu activations, one can get $\epsilon$ loss on the training data at the rate that depends on $\log(1/\epsilon)$ and the smallest eigenvalue of a certain kernel matrix. While the authors show that this eigenvalue is positive, no explicit bound is provided. These results are extended to higher depth in \citep{du2018gradient} at the expense of an exponential dependence on the depth on the amount of over parameterization needed. In \citep{allen2018convergence} the authors provide an alternate analysis under the weaker \pref{ass:separation} and at the same time obtain convergence rates that depend on $\log(1/\epsilon)$ and only polynomially in the depth of the network. A few recent papers \citep{oymak,zou-gu,SuYang} provide an improved analysis with better dependence on the parameters. We would like to point out that all the above works fail to explain the optimization benefits of depth, and in fact the resulting bounds degrade as the network gets deeper. 

The work of \citet{ntk} proposed the Neural Tangent Kernel~(NTK) that is associated with a randomly initialized neural network in the infinite width regime. The authors show that in this regime performing gradient descent on the parameters of the network is equivalent to kernel regression using the NTK. The work of \citet{lee-wide-neural-2019} and \citet{yang} generalizes this result and the recent work of \citet{arora2019exact} provides a non-asymptotic analysis and an algorithm for exact computation of the NTK for feed forward and convolutional neural networks. There have also been works analyzing the mean field dynamics of SGD on infinite width neural networks \citep{mei2018mean, chizat2018global, rotskoff2018parameters, sirignano2018mean} as well as works designing provable learning algorithms for shallow neural networks under certain assumptions \citep{arora2016understanding, ge2017learning, goel2017learning, ge2018learning, goel2018learning, bakshi2018learning, vempala2018gradient}. Recent works have also explored the question of providing sample complexity based separation between training via the NTK vs. training all the layers \citep{wei2019regularization, allen2020backward}.
 
\paragraph{SQ Learnability of Neural Networks.} Several recent works have studied the {statistical query}~(SQ) framework of \citet{kearns1998efficient} to provide lower bounds on the number of queries needed to learn neural networks with a certain structure \citep{song2017complexity, vempala2018gradient, panigrahy-random-nets}. The closest to us is the recent work of \citet{panigrahy-random-nets} that shows that learning a function that is a randomly initialized deep neural network with sign activations requires exponential in depth many statistical queries. A crucial part of their analysis requires showing that for randomly initialized neural networks with sign activations, the pairwise (normalized) dot products decrease exponentially fast with depth. Our main result in \pref{thm:main-infinite-width-top-layer} strictly generalizes this result for arbitrary non-linear activations (under mild assumptions) thereby implying exponential SQ lower bounds for networks with arbitrary non linear activations. In particular, we show any algorithm that works in the statistical query framework, and learns (with high probability) a sufficiently deep randomly initialized network with an arbitrary non-linear activation, must necessarily use exponentially (in depth) many queries in the worst case. The only requirement we impose on the non-linear activations is that they satisfy subgaussianity~(see Section~\ref{app:sq-app}), a condition satisfied by popular activations such as relu, sign, and tanh. 

\paragraph{Generalization in Neural Networks.} It has been observed repeatedly that modern deep neural networks have sufficient capacity to perfectly memorize the training data, yet generalize to test data very well (see, e.g., \citep{hardt-interpolation}). This observation flies in the face of conventional statistical learning theory which indicates that such overfitting should lead to poor generalization. Since then there has been a line of work providing generalization bounds for neural networks that depend on compressibility of the network \citep{arora2018stronger}, norm based bounds \citep{neyshabur2015norm, bartlett2017spectrally}, bounds via PAC-bayes analysis \citep{neyshabur2017pac, dziugaite2017computing, nagarajan2019deterministic} and bounds that depend on the distance to initialization \citep{long2019size}. Since randomly initialized neural networks are {interpolating classifiers}, i.e., they achieve zero error on the training set, there have also been recent works (e.g. \citep{belkin-hsu-mitra,belkin-rakhlin-tsybakov,liang-rakhlin,liang-rakhlin-zhai,bartlett2019benign,belkin-hsu-xu,hastie-montanari-rosset-tibshirani,mei-montanari}) that study the generalization phenomenon in the context of specific \textit{interpolating} methods (i.e. methods which perfectly fit the training data) and show how the obtained predictors can generalize well. 


\section{Proof of \pref{lem:bn-ln-invariance}}
\label{app:bnlninvariance}
\newcommand{\BN}{\text{BN}}
\newcommand{\LN}{\text{LN}}
\newcommand{\vecone}{\vec{\mathbf{1}}}

In this section, we prove \pref{lem:bn-ln-invariance}, restated here for convenience:
\begin{lemma}
If the neural network in \eqref{eq:network-def} incorporates batch normalization in each layer, then the network output is the same regardless of whether the activation $\sigma$ is normalized or not. The same holds if instead layer normalization is employed, but on the post-activation outputs rather than pre-activation inputs.
\end{lemma}
\begin{proof}
Let $\sigma$ be an arbitrary activation function, and $\tilde{\sigma}$ be its normalized version. Thus $\tilde{\sigma}(u) = \frac{\sigma(u) - c}{s}$ for some constants $c$ and $s$. 

\paragraph{Invariance with Batch Normalization.} For a batch size $b$, the vanilla Batch Normalization operation $\BN: \mathbb{R}^{b} \rightarrow \mathbb{R}^{b}$ is defined as follows:
\[\BN(v) = \frac{1}{\nu}(v - \mu \vecone),\]
where $\vecone$ is the $d$-dimensional all ones vector, $\mu = \frac{1}{b}v\cdot \vecone$, and $\nu = \sqrt{\frac{1}{b}\|v - \mu \vecone\|_2^2}$.

Now, fix a particular layer in the neural network, and a particular hidden unit in that layer. Let $w$ be the weight vector corresponding to that hidden unit. Thus if $x$ is the pre-activation input to the previous layer, then $w\cdot \frac{1}{\sqrt{m}}\sigma(x)$ is the pre-activation value\footnote{We can handle the input layer of the network by simply setting $\sigma$ to be the identity multiplied by $\sqrt{m}$.} for the hidden unit in question. Suppose the batch size is $b$, and let $x_1, x_2, \ldots, x_b$ be the pre-activation inputs to the previous layer. Define $X = [x_1, x_2, \ldots, x_b]$, and $\sigma(X) = [\sigma(x_1), \sigma(x_2), \ldots, \sigma(x_b)]$. Then $v = w^\top \tfrac{1}{\sqrt{m}}\sigma(X)$ is the vector of pre-activation inputs to the hidden unit for the batch. Now, we claim that
\begin{equation} \label{eq:bn-invariance}
	\BN(w^\top \tfrac{1}{\sqrt{m}}\sigma(X)) = \BN(w^\top \tfrac{1}{\sqrt{m}}\tilde{\sigma}(X)),
\end{equation}
which establishes the desired invariance. Let $\tilde{v} = w^\top \tfrac{1}{\sqrt{m}}\tilde{\sigma}(X)$. Define $\mu = \frac{1}{b} v \cdot \vecone$, $\nu = \sqrt{\frac{1}{b}\|v - \mu \vecone\|_2^2}$, $\tilde{\mu} = \frac{1}{b} \tilde{v} \cdot \vecone$ and $\tilde{\nu} = \sqrt{\frac{1}{b}\|\tilde{v} - \tilde{\mu} \vecone\|_2^2}$. Then by direct calculation we have $\tilde{\mu} = \frac{\mu - c}{s}$, and $\tilde{\nu} = \frac{\nu}{s}$. These facts imply the claimed equality \eqref{eq:bn-invariance}.

Finally, the standard Batch Normalization operation also includes constants $\gamma$ and $\beta$ so that the final output is $\gamma\BN(\cdot) + \beta$. The above analysis immediately implies that 
\[\gamma\BN(w^\top \tfrac{1}{\sqrt{m}}\sigma(X)) + \beta = \gamma\BN(w^\top \tfrac{1}{\sqrt{m}}\tilde{\sigma}(X)) + \beta,\]
establishing invariance for the standard Batch Normalization operation as well.

\paragraph{Invariance with post-activation Layer Normalization.} For a layer with $m$ hidden units, the Layer Normalization operation $\LN: \mathbb{R}^m \rightarrow \mathbb{R}^m$ is defined as follows:
\[\LN(v) = \frac{1}{\nu}(v - \mu\vecone),\]
where $\vecone$ is the $m$-dimensional all ones vector, $\mu = \frac{1}{m}v\cdot \vecone$, and $\nu = \sqrt{\frac{1}{m}\|v - \mu \vecone\|_2^2}$.

Now suppose we apply Layer Normalization to the post-activation outputs (i.e. on $v = \tfrac{1}{\sqrt{m}}\tilde{\sigma}(x)$, for a pre-activation input $x \in \mathbb{R}^m$) the rather than pre-activation inputs. Then, we claim that
\begin{equation} \label{eq:ln-invariance}
	\LN(\tfrac{1}{\sqrt{m}}\sigma(x)) = \LN(\tfrac{1}{\sqrt{m}}\tilde{\sigma}(x)),
\end{equation}
which establishes the desired invariance. Let $\tilde{v} = \tfrac{1}{\sqrt{m}}\tilde{\sigma}(x)$. Define $\mu = \frac{1}{m} v \cdot \vecone$, $\nu = \sqrt{\frac{1}{m}\|v - \mu \vecone\|_2^2}$, $\tilde{\mu} = \frac{1}{m} \tilde{v} \cdot \vecone$ and $\tilde{\nu} = \sqrt{\frac{1}{m}\|\tilde{v} - \tilde{\mu} \vecone\|_2^2}$. Then by direct calculation we have $\tilde{\mu} = \frac{\mu - c}{s}$, and $\tilde{\nu} = \frac{\nu}{s}$. These facts imply the claimed equality \eqref{eq:ln-invariance}.
\end{proof}

\section{Conditioning Analysis}
\label{app:conditioning}

Recall the notion of the dual activation $\dualact$ for the activation $\sigma$:
\begin{definition}
	For $\rho \in [-1, 1]$, define matrix $\Sigma_\rho = \begin{bmatrix} 1 & \rho \\ \rho & 1 \end{bmatrix}$. Define the conjugate activation function $\dualact: [-1, 1] \rightarrow [-1, 1]$ as follows:
    \[ \dualact(\rho) := \E_{(X, X') \sim \normal(0, \Sigma_\rho)}[\sigma(X)\sigma(X')].\]
\end{definition}
The following facts can be found in \cite{daniely2016toward}:
 \begin{enumerate}
    \item Let $x, x' \in \mathbb{R}^d$ such that $\|x\| = \|x'\| = 1$. Then
    \[ \E_{w \sim \normal(0, I_d)}[\sigma(w\cdot x)\sigma(w \cdot x')] = \dualact(x\cdot x').\]

    \item Since $\E_{X \sim \normal(0, 1)}[\sigma^2(X)] = 1$, $\sigma$ is square integrable w.r.t. the Gaussian measure. The (probabilitist's) Hermite polynomials $h_0, h_1, \ldots$ form an orthogonal basis for the Hilbert space of square integrable functions w.r.t. the Gaussian measure, and hence $\sigma$ can be written as $\sigma(u) = \sum_{i=0}^\infty a_i h_i(u)$, where $a_i = \E_{X \sim \normal(0, 1)}[\sigma(X)h_i(X)]$. This expansion is known as the Hermite expansion for $\sigma$. 

    \item We have $\dualact(\rho) = \sum_{i=0}^\infty a_i^2 \rho^i$. 

    \item The normalization \pref{eq:normalization} has the following consequences. Since $\E_{X \sim \mathcal{N}(0, 1)}[\sigma(X)] = 0$, we have $a_0 = 0$, and since $\E_{X \sim \mathcal{N}(0, 1)}[\sigma^2(X)] = 1$ we have $\sum_{i=1}^\infty a_i^2 = 1$.

    \item If $\dot{\sigma}$ denotes the derivative of $\sigma$, then $\hat{\dot{\sigma}} = \dot{\dualact}$.
\end{enumerate}

The above facts imply the following simple bound on the coefficient of non-linearity $\mu$:
\begin{lemma}
\label{lem:mu-bounds}
For any normalized non-linear activation function $\sigma$, we have $0 < \mu \leq 1$.	
\end{lemma}
\begin{proof}
The degree 1 Hermite polynomial is $h_1(u) = u$, so $a_1 = \E_{X \sim \normal(0, 1)}[\sigma(X)X]$. Since $\sigma$ is non-linear, for at least one $i \neq 1$, we have $a_i \neq 0$. This, coupled with the fact that $\sum_{i=1}^\infty a_i^2 = 1$ implies that $a_1 \in (-1, 1)$, which implies that $\mu = 1 - a_1^2 \in (0, 1]$.
\end{proof}

The random initialization of the neural network induces a feature representation of the input vectors at every depth $l$ in the neural network: $\Phi_{\vecW}^{(l)}(x) := \tfrac{1}{\sqrt{m}}\sigma(W_l \tfrac{1}{\sqrt{m}}\sigma(W_{l-1} \cdots \tfrac{1}{\sqrt{m}}\sigma(W_1x)\cdots))$. This feature representation naturally yields a kernel function $\kernel^{(l)}(x, x') := \Phi_{\vecW}^{(l)}(x) \cdot \Phi_{\vecW}^{(l)}(x')$. In particular, after the first layer, the kernel function $\kernel^{(1)}(x, x') = \frac{1}{m}\sigma(W_1x)\cdot \sigma(W_1x')$. The central limit theorem implies that as the width $m$ goes to infinity, this kernel function tends to a deterministic value, viz. its expectation, which is $\E_{w \sim \normal(0, I_d)}[\sigma(w\cdot x)\sigma(w \cdot x')]$, which equals $\dualact(x\cdot x')$ if $x$ and $x'$ are unit vectors. Furthermore, the normalization $\E_{X \sim \mathcal{N}(0, 1)}[\sigma^2(X)] = 1$ implies that the feature representation is itself normalized in the sense for any unit vector $x$, that as $m \rightarrow \infty$, we have $\|\Phi_{\vecW}^{(1)}(x)\|_2^2 = \kernel^{(1)}(x, x) \rightarrow 1$. Applying these observations recursively, we get \pref{lem:kernel-convergence}, which was also proved by \citet{daniely2016toward}.
\begin{lemma} \label{lem:kernel-convergence}
Suppose $\|x\|_2 = \|x'\|_2 = 1$. Then for any depth $l$, as $m \rightarrow \infty$
\[\kernel^{(l)}(x, x') \stackrel{a.s.}{\longrightarrow} \dualact^{(l)}(x \cdot x'),\]
where $\dualact^{(l)}$ denotes the $l$-fold composition of $\dualact$ with itself.
\end{lemma}

The following technical lemma shows how one application of $\dualact$ behaves:
\begin{lemma} \label{lem:one-layer}
    Let $\delta \in [0, 1]$. Then 
    \[|\dualact(-(1-\delta))| \leq \dualact(1-\delta) \leq \begin{cases}
        1 - (1+\frac{\mu}{2})\delta & \text{ if }\ \delta \leq \frac{1}{2} \\
        (1 - \frac{\mu}{2})(1 - \delta) & \text{ if }\ \delta > \frac{1}{2}.
    \end{cases}\]
\end{lemma}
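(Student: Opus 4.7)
The plan is to exploit the Hermite-series representation $\dualact(\rho) = \sum_{i \geq 1} a_i^2 \rho^i$, using the two global constraints already established in the excerpt, namely $a_1^2 = 1 - \mu$ and $\sum_{i \geq 2} a_i^2 = \mu$, and then argue by comparing the power series to its linear and constant terms on $[0,1]$.

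The inequality $|\dualact(-(1-\delta))| \leq \dualact(1-\delta)$ is essentially free: since every Hermite coefficient squared $a_i^2$ is non-negative, the triangle inequality applied term-by-term gives $|\dualact(\rho)| \leq \dualact(|\rho|)$ for any $\rho \in [-1,1]$, and specializing to $\rho = -(1-\delta)$ is exactly the claim.

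For the main upper bound on $\dualact(t)$ with $t := 1 - \delta$, the proof splits along the same case boundary as the lemma. In the small-$t$ regime ($\delta > 1/2$, i.e.\ $t < 1/2$), for every $i \geq 2$ one has $t^i \leq t^2 \leq t/2$, so
\[
\dualact(t) = (1-\mu)\,t + \sum_{i \geq 2} a_i^2\, t^i \;\leq\; (1-\mu)\,t + \frac{\mu}{2}\, t \;=\; \Bigl(1-\frac{\mu}{2}\Bigr)\,t,
\]
which is the desired bound. In the large-$t$ regime ($\delta \leq 1/2$, i.e.\ $t \geq 1/2$), I would instead work with the gap $1 - \dualact(t)$, using $\dualact(1) = \sum_{i \geq 1} a_i^2 = 1$ together with the telescoping factorization $1 - t^i = (1-t)(1 + t + \cdots + t^{i-1})$. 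The bracketed geometric sum equals $1$ for $i=1$ and is at least $1 + t \geq 3/2$ for every $i \geq 2$, so
\[
1 - \dualact(t) \;\geq\; (1-t)\bigl[(1-\mu) \cdot 1 + \mu \cdot \tfrac{3}{2}\bigr] \;=\; (1-t)\Bigl(1 + \frac{\mu}{2}\Bigr),
\]
which rearranges to $\dualact(t) \leq 1 - (1 + \mu/2)(1-t)$, as required.

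There is no real obstacle here; both cases reduce to elementary one-variable bounds on the monomials $t^i$. The only subtlety worth flagging is that the useful comparison changes between the two regimes: for $t < 1/2$ we dominate the whole tail $\sum_{i \geq 2} a_i^2 t^i$ by its linear envelope $(\mu/2)\,t$, while for $t \geq 1/2$ we instead pull a common factor $(1-t)$ out of $1 - \dualact(t)$ and observe that each $i \geq 2$ term contributes an extra $\mu/2$ beyond the baseline slope $1-\mu$, which is exactly where the improved slope $1 + \mu/2$ at $t = 1$ comes from.
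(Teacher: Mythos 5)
Your proof is correct, and the first inequality and the $\delta > \tfrac12$ case proceed essentially as in the paper. The $\delta \leq \tfrac12$ case, however, takes a genuinely different algebraic route. The paper derives a single unified intermediate bound $\dualact(1-\delta) \leq a_1^2(1-\delta) + (1-a_1^2)(1-\delta)^2 = (1-\delta)(1-\mu\delta)$ by replacing $(1-\delta)^i$ with $(1-\delta)^2$ for all $i \geq 2$, and then shows that this quadratic is dominated by $1 - (1+\tfrac{\mu}{2})\delta$ when $\delta \leq \tfrac12$ and by $(1-\tfrac{\mu}{2})(1-\delta)$ when $\delta > \tfrac12$. You instead switch to the complementary quantity $1 - \dualact(t)$ and use the telescoping factorization $1 - t^i = (1-t)(1 + t + \cdots + t^{i-1})$, lower-bounding the geometric sum by $1$ for $i=1$ and by $1+t \geq \tfrac32$ for $i \geq 2$. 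Both arguments exploit the same structural facts ($a_0 = 0$, $a_1^2 = 1-\mu$, $\sum_{i\geq 2}a_i^2 = \mu$, and that $t^i$ decays in $i$); the paper's version is slightly more economical because the intermediate bound $(1-\delta)(1-\mu\delta)$ covers both regimes at once, whereas yours needs a separate comparison in each case. Your factorization route has the minor advantage of working directly with $1 - \dualact$, which is the quantity that actually feeds into the $\lambda_{\min}$ bound later in the paper, so it reads naturally in that context.
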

\begin{proof}
    The fact that $|\dualact(-(1-\delta))| \leq \dualact(1-\delta)$ follows from the fact that the power series $\dualact(\rho) = \sum_{i=1}^\infty a_i^2 \rho^i$ has only non-negative coefficients. Next, we have
    \[
        \dualact(1-\delta) =  \sum_{i=1}^\infty a_i^2 (1-\delta)^i \leq a_1^2 (1-\delta) + \sum_{i=2}^\infty a_i^2 (1-\delta)^2 = a_1^2 (1-\delta) + (1-a_1^2)(1-\delta)^2 = (1 - \delta)(1 - \mu\delta).
    \]
    Now if $\delta > \frac{1}{2}$, we have $(1 - \delta)(1 - \mu\delta) \leq (1 - \frac{\mu}{2})(1 - \delta)$. If $\delta \leq \frac{1}{2}$, we have $(1 - \delta)(1 - \mu\delta) = 1 - (1 + \mu)\delta + \mu\delta^2 \leq 1 - (1+\frac{\mu}{2})\delta$.
\end{proof}
Recall the definition of $B_{\nu}(L, \delta)$: for any $\nu \in (0,1]$, $\delta \in (0, 1)$ and a positive integer $L$, let $L_0(\delta) = \max\left\{\left\lceil\tfrac{\log(\frac{1}{2\delta})}{\log(1+\frac{\nu}{2})}\right\rceil, 0\right\} = O\left(\tfrac{\log(1/\delta)}{\nu}\right)$, and define
\[B_{\nu}(L, \delta) := \begin{cases}
    1 - \delta(1+\frac{\nu}{2})^L & \text{ if }\ L \leq L_0(\delta) \\
    \frac{1}{2}(1 - \frac{\nu}{2})^{L-L_0(\delta)} & \text{ if }\ L > L_0(\delta).
\end{cases}\]
Remember, we denote by $B(L, \delta)$, the quantity $B_{\mu}(L, \delta)$.

The following lemma is an immediate consequnce via repeated application of \pref{lem:one-layer}:

\begin{lemma}[\textbf{Correlation decay lemma}] \label{lem:recursive}
    Suppose $|\rho| \leq 1 - \delta$ for some $\delta \in (0, 1]$. Then 
    \[|\dualact^{(L)}(\rho)| \leq B(L, \delta).\]
\end{lemma}

The final technical ingredient we need is the following linear-algebraic lemma which gives a lower bound on the smallest eigenvalue of a matrix obtained by the application of a given function to all entries of another positive definite matrix:
\begin{lemma}[\textbf{Eigenvalue lower bound lemma}]\label{lem:general-eigenvalue-lb}
Let $f: [-1, 1] \rightarrow \mathbb{R}$ be an arbitrary function whose power series $f(\rho) = \sum_{i=0}^\infty a_i \rho^i$ converges everywhere in $[-1, 1]$ and has non-negative coefficients $a_i \geq 0$. Let $K \in \mathbb{R}^{n \times n}$ be a positive definite matrix with $K \succeq \delta I_n$ for some $\delta > 0$, and all diagonal entries equal to $1$. Let $f[K]$ be matrix obtained by entrywise application of $f$. Then we have \[f[K] \succeq (f(1) - f(1-\delta))I_n.\]
\end{lemma}
\begin{proof}
We have $f[K] = \sum_{i=1}^\infty a_i K^{\odot i}$, where $K^{\odot i}$ denotes the $i$-fold Hadamard (i.e. entrywise) product of $K$ with itself. Since all diagonal entries of $K$ equal $1$, we can also write $K^{\odot i}$ as 
\[ K^{\odot i} = (K - \delta I_n)^{\odot i} + (1 - (1-\delta)^i)I_n.\]
By assumption, $K - \delta I_n \succeq 0$. Since the Hadamard product of positive semidefinite matrices is also positive semidefinite, we have $(K - \delta I_n)^{\odot i} \succeq 0$. Thus, $K^{\odot i} \succeq (1 - (1-\delta)^i)I_n$. Thus, we have
\[f[K] = \sum_{i=0}^\infty a_i K^{\odot i} \succeq \sum_{i=0}^\infty a_i(1 - (1-\delta)^i) I_n = (f(1) - f(1-\delta))I_n,\]
as required.
\end{proof}

\subsection{Top Layer Kernel Matrix}
We can now prove \pref{thm:main-infinite-width-top-layer} and \pref{thm:main-infinite-width-top-layer-stronger}: 
which we restate here in a combined form for convenience:
\begin{theorem}
    The following bounds hold:
    \begin{enumerate}
        \item Under \pref{ass:separation}, we have $|\bar{K}_{ij}| \leq B(L, \delta)$ for all $i, j \in [n]$ with $i \neq j$.
        
        \item Under \pref{ass:non-singularity}, we have $\lambda_{\min}(\bar{K}) \geq 1 - B(L, \delta)$.
    \end{enumerate}   
\end{theorem}
\begin{proof}
    Part 1 follows directly from \pref{lem:recursive}.

    As for part 2, \pref{ass:non-singularity} implies that $\underline{K} \succeq \delta I_n$. Since the function $\dualact^{(L)}: [-1, 1] \rightarrow \mathbb{R}$ defines a kernel on the unit sphere, by Schoenberg's theorem \citep{schoenberg}, its power series expansion has only non-negative coefficients, so \pref{lem:general-eigenvalue-lb} applies to $\bar{K} = \dualact^{(L)}[\underline{K}]$, and we have
    \[\bar{K} \succeq (\dualact^{(L)}(1) - \dualact^{(L)}(1-\delta))I_n \succeq (1 - B(L, \delta))I_n,\]
    using \pref{lem:recursive} and the fact that $\dualact^{(L)}(1) = 1$.
\end{proof}

We can now prove \pref{cor:main-top-layer-cn} and \pref{cor:main-top-layer-cn-stronger}, restated here in a combined form for convenience:
\begin{corollary}
The following bounds on $\cn(\bar{K})$ hold:
    \begin{enumerate}
        \item Under \pref{ass:separation}, if $L \geq L_1(\delta) := \left\lceil \frac{\log(n)}{-\log(1 - \frac{\mu}{2})}\right\rceil + L_0(\delta)$, then $\cn(\bar{K}) \leq 1 + 2n(1-\frac{\mu}{2})^{L - L_1(\delta)}$.

        \item Under \pref{ass:non-singularity}, we have $\cn(\bar{K}) \leq 1 + \frac{n}{\delta}(1 + \tfrac{\mu}{2})^{-L}$.
    \end{enumerate}
\end{corollary}
\begin{proof}
\noindent To prove part 1, note that the normalization~\eqref{eq:normalization} implies that $\bar{K}_{ii} = 1$ for all $i \in [n]$. This fact, coupled with \pref{thm:main-infinite-width-top-layer} (part 1) and the Gershgorin circle theorem implies the following bounds on the largest and smallest eigenvalues of $\bar{K}$: we have $\lambda_{\max}(\bar{K}) \leq 1 + (n-1)B(L, \delta)$ and $\lambda_{\min}(\bar{K}) \geq 1 - (n-1)B(L, \delta)$, which implies that $\cn(\bar{K}) \leq \frac{1 + (n-1)B(L, \delta)}{1 - (n-1)B(L, \delta)}$. Since $L \geq L_1(\delta) = \left\lceil \frac{\log(n)}{-\log(1 - \frac{\mu}{2})}\right\rceil + L_0(\delta)$, we have $(n-1)B(L,  \delta) \leq \frac{1}{2}$, and then using the inequality $\frac{1+x}{1-x} \leq 1+4x$ for $x \in [0, \frac{1}{2}]$, the bound on the condition number follows.

As for part 2, using \pref{thm:main-infinite-width-top-layer} (part 2) and the bound $\lambda_{\max}(\bar{K}) \leq 1 + (n-1)B(L, \delta)$, we have $\cn(\bar{K}) \leq \frac{1 + (n-1)B(L, \delta)}{1 - B(L, \delta)}$. Now if $L \leq L_0(\delta)$, using the definition of $B(L, \delta)$, we have
\[\cn(\bar{K}) - 1 \leq \frac{n(1 - \delta(1+\frac{\mu}{2})^L)}{1 - (1 - \delta(1+\tfrac{\mu}{2})^L)} \leq \tfrac{n}{\delta}(1+\tfrac{\mu}{2})^{-L}.\]
If $L > L_0(\delta)$, then we have
\[\cn(\bar{K}) - 1 \leq \frac{\frac{n}{2}(1 - \frac{\mu}{2})^{L-L_0(\delta)}}{1 - \frac{1}{2}(1 - \frac{\mu}{2})^{L-L_0(\delta)}} \leq n(1 - \tfrac{\mu}{2})^{L - L_0(\delta)} \leq n(1 + \tfrac{\mu}{2})^{-L}(1 + \tfrac{\mu}{2})^{L_0(\delta)} \leq \tfrac{n}{\delta}(1+\tfrac{\mu}{2})^{-L},\]
as required.
\end{proof}

\subsection{Neural Tangent Kernel Matrix}
The following formula for the NTK was given by \citet{arora2019exact}: defining $\rho := x_i \cdot x_j$, we have
\begin{equation}
\label{eq:ntk-formula}
\bar{\ntk}_{ij} = \sum_{h=1}^{L+1} \dualact^{(h-1)}(\rho)\left(\prod_{h'=h}^{L} \hat{\dot{\sigma}}(\dualact^{(h')}(\rho)) \right).
\end{equation}

Using this formula, we have the following bound:
\begin{lemma}\label{lem:dualact-dot-ratio}
For any $\rho \in [-1, 1]$, we have $\frac{\hat{\dot{\sigma}}(\rho)}{\hat{\dot{\sigma}}(1)} \leq 1 - \mu(1 - |\rho|)$.
\end{lemma}
\begin{proof}
We have $\hat{\dot{\sigma}}(\rho) = \dot{\dualact}(\rho) = \sum_{i=1}^{\infty} i \cdot a_i^2 \cdot \rho^{i-1}$. This implies that $\hat{\dot{\sigma}}(\rho) \leq \hat{\dot{\sigma}}(|\rho|)$, so it suffices to prove the bound for $\rho \geq 0$. Also note that by definition $\hat{\dot{\sigma}}(\rho)$ is non-negative for all $\rho \geq 0$ as well as an increasing function over $\rho \geq 0$. 
Thus, using the fact that $\sum_{i=2}^\infty a_i^2 = 1 - a_1^2 = \mu$, we have
\[\frac{\hat{\dot{\sigma}}(\rho)}{\hat{\dot{\sigma}}(1)} = \frac{a_1^2 + \sum_{i=2}^{\infty} i \cdot a_i^2 \cdot \rho^i}{a_1^2 + \sum_{i=2}^{\infty} i \cdot a_i^2} \leq \frac{a_1^2 + \sum_{i=2}^{\infty} i \cdot a_i^2 \cdot \rho}{a_1^2 + \sum_{i=2}^{\infty} i \cdot a_i^2} = 1 - \left(\frac{\sum_{i=2}^{\infty} i \cdot a_i^2}{a_1^2 + \sum_{i=2}^{\infty} i \cdot a_i^2}\right)(1-\rho) \leq 1 - \mu(1 - \rho), \]
as required.
\end{proof}

We can now prove \pref{thm:main-infinite-width-ntk} and \pref{thm:main-infinite-width-ntk-stronger}, which we restate here in a combined form for convenience:
\begin{theorem}
    The diagonal entries of $\bar{\ntk}$ are all equal. Furthermore, the following bounds hold if $L \geq 2L_0(\delta)$:
    \begin{enumerate}
        \item Under \pref{ass:separation}, we have $|\bar{\ntk}_{ij}| \leq 2B(\nicefrac{L}{2},\delta) \cdot \bar{\ntk}_{11}$ for all $i, j \in [n]$ with $i \neq j$.
        \item Under \pref{ass:non-singularity}, we have $\lambda_{\min}(K) \geq \left(1 -  2B(\nicefrac{L}{2},\delta)\right)\bar{\ntk}_{11}$.
    \end{enumerate}
\end{theorem}
\begin{proof}
First, we show that all diagonal values of $\bar{\ntk}$ are equal. For every $i$, we have $x_i \cdot x_i = 1$, and since $\dualact^{(h)}(1) = 1$ for any $h$, we have from \pref{eq:ntk-formula},
\[\bar{\ntk}_{ii} = \sum_{h=1}^{L+1} \dualact^{(h-1)}(1)\left(\prod_{h'=h}^{L} \hat{\dot{\sigma}}(\dualact^{(h')}(1)) \right) = \sum_{h=1}^{L+1} \left(\prod_{h'=h}^{L} \hat{\dot{\sigma}}(1) \right) = \frac{\hat{\dot{\sigma}}(1)^{L+1} - 1}{\hat{\dot{\sigma}}(1) - 1},\] 
which is a fixed constant.

To prove part 1, let $\rho := x_i \cdot x_j$. It is easy to show (say, via the Hermite expansion of $\sigma$) that $\hat{\dot{\sigma}}(1) > 0$. Thus, we have
\begin{gather*}
\frac{\bar{\ntk}_{ij}}{\bar{\ntk}_{11}} = \frac{\sum_{h=1}^{L+1} \dualact^{(h-1)}(\rho)\left(\prod_{h'=h}^{L} \hat{\dot{\sigma}}(\dualact^{(h')}(\rho)) \right)}{\sum_{h=1}^{L+1} \left(\prod_{h'=h}^{L} \hat{\dot{\sigma}}(1) \right)} \leq \max_{h \in [L+1]} \dualact^{(h-1)}(\rho) \cdot \prod_{h'=h}^{L}\frac{\hat{\dot{\sigma}}(\dualact^{(h')}(\rho))}{\hat{\dot{\sigma}}(1)}\\
\leq \max_{h \in [L+1]} |\dualact^{(h-1)}(\rho)| \cdot \prod_{h'=h}^{L}(1-\mu(1 - |\dualact^{(h')}(\rho)|)) \leq \max_{h \in [L+1]} B(h-1, \delta) \cdot \prod_{h'=h}^{L}(1-\mu(1 - B(h', \delta))),
\end{gather*}
where the penultimate inequality follows \pref{lem:dualact-dot-ratio} and the final one from \pref{lem:recursive}. We now show that since $L \geq 2L_0(\delta)$, for any any $h \in [L+1]$, we have 
\[B(h-1, \delta) \cdot \prod_{h'=h}^{L}(1-\mu(1 - B(h', \delta))) \leq 2B(\nicefrac{L}{2},\delta),\] which gives the bound of part 1. We do this in two cases: if $h-1 \geq \nicefrac{L}{2}$, then $B(h-1, \delta) \leq B(\nicefrac{L}{2},\delta)$, which gives the required bound since all terms in the product are at most $1$. Otherwise, if $h-1 < \nicefrac{L}{2}$, then there are at least $L/2 - L_0(\delta)$ values of $h'$ in $\{h, h+1, \ldots, L\}$ which are larger than $L_0(\delta)$, and for these values of $h'$, we have $B(h', \delta) \leq \frac{1}{2}$, so $1-\mu(1 - B(h', \delta))) \leq 1 - \frac{\mu}{2}$. The product of these terms is therefore at most $(1 - \frac{\mu}{2})^{\nicefrac{L}{2} - L_0(\delta)} = 2B(\nicefrac{L}{2},\delta)$, which gives the required bound in this case.

To prove part 2, define $f:[-1, 1] \rightarrow \mathbb{R}$ as $f(\rho) = \sum_{h=1}^{L+1} \dualact^{(h-1)}(\rho)\left(\prod_{h'=h}^{L} \hat{\dot{\sigma}}(\dualact^{(h')}(\rho)) \right)$. Equation \pref{eq:ntk-formula} shows that this defines a kernel on the unit sphere, and so by Schoenberg's theorem \citep{schoenberg}, its power series expansion has only non-negative coefficients. Thus, applying \pref{lem:general-eigenvalue-lb} to $\bar{\ntk} = f[\underline{K}]$, we conclude that 
\[ \bar{\ntk} \succeq (f(1) - f(1 - \delta))I_n \succeq (1 - 2B(\nicefrac{L}{2}, \delta))f(1)I_n,\]
using the calculations in part 1. Since $f(1) = \bar{\ntk}_{11}$, the bound of part 2 follows.
\end{proof}

Finally, we prove \pref{cor:main-infinite-width-ntk-cn} and \pref{cor:main-infinite-width-ntk-cn-stronger}, restated here in a combined form:
\begin{corollary}
The following bounds on the condition number $\cn(\bar{\ntk})$ hold:
    \begin{enumerate}
        \item Under \pref{ass:separation}, if $L \geq L_2(\delta) := \left\lceil \frac{2\log(2n)}{-\log(1 - \frac{\mu}{2})}\right\rceil + 2L_0(\delta)$, then $\cn(\bar{\ntk}) \leq 1+4n(1~-~\tfrac{\mu}{2})^{\nicefrac{L}{2} - L_2(\delta)}$.
        \item Under \pref{ass:non-singularity}, if $L \geq 4L_0(\delta)$, then $\cn(\bar{\ntk}) \leq 1+ \tfrac{2n}{\delta}(1+\tfrac{\mu}{2})^{-\nicefrac{L}{2}}$.
    \end{enumerate} 
\end{corollary}
\begin{proof}
\noindent To prove part 1, note that \pref{thm:main-infinite-width-ntk} (part 1) and the Gershgorin circle theorem implies the following bounds on the largest and smallest eigenvalues of $\bar{\ntk}$: we have $\lambda_{\max}(\bar{\ntk}) \leq (1 + 2(n-1)B(\nicefrac{L}{2}, \delta))\bar{\ntk}_{11}$ and $\lambda_{\min}(\bar{\ntk}) \geq (1 - 2(n-1)B(\nicefrac{L}{2}, \delta))\bar{\ntk}_{11}$, which implies that $\cn(\bar{\ntk}) \leq \frac{1 + 2(n-1)B(\nicefrac{L}{2}, \delta)}{1 - 2(n-1)B(\nicefrac{L}{2}, \delta)}$. Since $L \geq L_2(\delta) = \left\lceil \frac{2\log(2n)}{-\log(1 - \frac{\mu}{2})}\right\rceil + 2L_0(\delta)$, we have $2(n-1)B(\nicefrac{L}{2},  \delta) \leq \frac{1}{2}$, and then using the inequality $\frac{1+x}{1-x} \leq 1+4x$ for $x \in [0, \frac{1}{2}]$, the bound on the condition number follows.

As for part 2, using \pref{thm:main-infinite-width-ntk} (part 2) and the bound $\lambda_{\max}(\bar{\ntk}) \leq (1 + 2(n-1)B(\nicefrac{L}{2}, \delta))\bar{\ntk}_{11}$, we have $\cn(\bar{\ntk}) \leq \frac{1 + 2(n-1)B(\nicefrac{L}{2}, \delta)}{1 - 2B(\nicefrac{L}{2}, \delta)}$. Thus,
\[\cn(\bar{\ntk}) - 1 \leq \frac{n(1 - \frac{\mu}{2})^{\nicefrac{L}{2}-L_0(\delta)}}{1 - (1 - \frac{\mu}{2})^{\nicefrac{L}{2}-L_0(\delta)}} \leq 2n(1 - \tfrac{\mu}{2})^{\nicefrac{L}{2} - L_0(\delta)} \leq 2n(1 + \tfrac{\mu}{2})^{-\nicefrac{L}{2}}(1 + \tfrac{\mu}{2})^{L_0(\delta)} \leq \tfrac{2n}{\delta}(1+\tfrac{\mu}{2})^{-\nicefrac{L}{2}},\]
the second inequality follows since $\nicefrac{L}{2} - L_0(\delta) \geq L_0(\delta)$, and so $(1 - \frac{\mu}{2})^{\nicefrac{L}{2} - L_0(\delta)} \leq (1 - \frac{\mu}{2})^{L_0(\delta)} \leq \frac{1}{2}$.
\end{proof}

\subsection{Evolution of dot products for uncentered activations}
\label{app:unnormalized-analysis}

In this section, we generalize the analysis in the beginning of \pref{app:conditioning} to activations that need not be normalized \eqref{eq:normalization}. Specifically, we only assume the following condition on the activation $\sigma$:
\begin{equation} \label{eq:square-normalization}
  \E_{X \sim \normal(0, 1)}[\sigma^2(X)] = 1,  
\end{equation}
i.e., we allow $\E_{X \sim \normal(0, 1)}[\sigma(X)]$ to be non-zero. In this case, we will show that there is a certain value $\rhofx \in [0, 1]$ such that dot products of input representations at the top layer converge to $\rhofx$ as the depth increases. Furthermore, when $\rhofx < 1$, this convergence is at an \emph{exponential} rate governed by the following notion of the coefficient of non-affinity which accounts for the non-zero expectation of $\sigma$ under standard Gaussian inputs:
\begin{definition}
\label{def:coefficient-non-affinity}
    The {\em coefficient of non-affinity} of the activation function $\sigma$ is defined to be $\mutil := 1 - \left(\E_{X \sim \normal(0, 1)}[\sigma(X)]\right)^2 - \left(\E_{X \sim \normal(0, 1)}[X \sigma(X)]\right)^2$.
\end{definition}
Clearly, for normalized activations, the two coefficients coincide, i.e. $\mu = \mutil$. Let $\sigma(u) = \sum_{i=0}^\infty a_i h_i(u)$ as usual, and then the dual activation is $\dualact(\rho) = \sum_{i=0}^\infty a_i^2 \rho^i$. The normalization \eqref{eq:square-normalization} implies that $\sum_{i=0}^\infty a_i^2 = 1$. Furthermore, since $h_0(u) \equiv 1$ and $h_1(u) = u$, we have 
\[a_0 = \E_{X \sim \normal(0, 1)}[\sigma(X)] \quad \text{and} \quad a_1 = \E_{X \sim \normal(0, 1)}[X\sigma(X)],\] 
and so $\mutil = 1 - a_0^2 - a_1^2$. We call an activation function $\sigma(u)$ ``non-affine'' if it cannot be written as $a + bu$ for constants $a$ and $b$. It is easy to see from the Hermite expansion of $\sigma$ that this is equivalent to saying that for some $i > 1$, $a_i \neq 0$. We then have the following analogue of \pref{lem:mu-bounds}:
\begin{lemma}
\label{lem:mutil-bounds}
For any non-affine activation function $\sigma$, we have $0 < \mutil \leq 1$.   
\end{lemma}
\begin{proof}
Since $\sigma$ is non-affine, for at least one $i > 1$, we have $a_i \neq 0$. This, coupled with the fact that $\sum_{i=0}^\infty a_i^2 = 1$ implies that $\mutil = 1 - a_0^2 - a_1^2 \in (0, 1]$.
\end{proof}
The following property of $\dualact$ immediately follows from the formula $\dualact(\rho) = \sum_{i=0}^\infty a_i^2 \rho^i$:
\begin{lemma} \label{lem:dualact-convexity}
The dual activation $\dualact$ is non-decreasing and convex on $[0, 1]$.
\end{lemma}
Note that we always have $\dualact(1) = 1$. We now have the following important property regarding the other fixed points of $\dualact$:
\begin{lemma} \label{lem:fixed-point}
    Let $\rhofx \in [0, 1]$ such that $\dualact(\rhofx) = \rhofx$. Then for any $\lambda \in [0, 1]$, if $\rho_\lambda := \lambda + (1-\lambda)\rhofx$, then
    \[\rho_\lambda - \dualact(\rho_\lambda) \geq \mutil\lambda(1-\lambda)(1-\rhofx)^2. \]
    Thus, if $\sigma$ is non-affine, then there can be at most one fixed point of $\dualact$ in $[0, 1)$, i.e. a point $\rhofx \in [0, 1)$ such that $\dualact(\rhofx) = \rhofx$.
\end{lemma}
\begin{proof}
First, since $\rhofx$ and $1$ are both fixed points of $\dualact$, we have
\begin{equation} \label{eq:rho_lambda}
  \rho_\lambda =  \lambda  + (1-\lambda)\rhofx = \lambda \dualact(1) + (1-\lambda)\dualact(\rhofx) = \sum_{i=0}^\infty a_i^2 (\lambda + (1-\lambda)\rhofx^i),  
\end{equation}
whereas
\begin{equation} \label{eq:dualact-rho_lambda}
  \dualact(\rho_\lambda) = \sum_{i=0}^\infty a_i^2 (\lambda + (1-\lambda)\rhofx)^i.  
\end{equation}
Now, we show that the sequence $(\lambda + (1-\lambda)\rhofx^i) - (\lambda + (1-\lambda)\rhofx)^i$ for $i = 0, 1, 2, \ldots$  is non-decreasing. This is because by direct calculation,
\begin{align*}
&[(\lambda + (1-\lambda)\rhofx^{i+1}) - (\lambda + (1-\lambda)\rhofx)^{i+1}] - [(\lambda + (1-\lambda)\rhofx^i) - (\lambda + (1-\lambda)\rhofx)^i] \\
&= (1-\lambda)(1-\rhofx)((\lambda + (1-\lambda)\rhofx)^i - \rhofx^i) \\
&\geq 0,    
\end{align*}
since $\rhofx \leq \lambda + (1-\lambda)\rhofx$. Thus, for all $i \geq 2$, we have
\begin{equation} \label{eq:gap-bound}
(\lambda + (1-\lambda)\rhofx^i) - (\lambda + (1-\lambda)\rhofx)^i \geq (\lambda + (1-\lambda)\rhofx^2) - (\lambda + (1-\lambda)\rhofx)^2 = \lambda(1-\lambda)(1-\rhofx)^2.    
\end{equation}
Hence, using \eqref{eq:rho_lambda} and \eqref{eq:dualact-rho_lambda}, we have
\begin{align*}
    \rho_\lambda - \dualact(\rho_\lambda) &= \sum_{i=0}^\infty a_i^2 [(\lambda + (1-\lambda)\rhofx^i) - (\lambda + (1-\lambda)\rhofx)^i] \\
    &= \sum_{i=2}^\infty a_i^2 [(\lambda + (1-\lambda)\rhofx^i) - (\lambda + (1-\lambda)\rhofx)^i] \\
    &\geq \sum_{i=2}^\infty a_i^2 \lambda(1-\lambda)(1-\rhofx)^2 & \text{(Using \eqref{eq:gap-bound})} \\
    &= \mutil\lambda(1-\lambda)(1-\rhofx)^2,
\end{align*}
as required.
\end{proof}
Since $\dualact(1) = 1$, $1$ is a fixed point of $\dualact$. The above lemma shows that for non-affine $\sigma$, $\dualact$ can have at most one more fixed point in $[0, 1]$. This fact also gives us the following useful consequence:
\begin{lemma} \label{lem:dualact-rotfx-derivative}
Let $\sigma$ be a non-affine activation, and let $\rhofx \in [0, 1]$ be the smallest fixed point of $\dualact$. Then, if $\rhofx < 1$, then $\dot\dualact(\rhofx) < 1$. If $\rhofx = 1$, then $\dot\dualact(1) \leq 1$.
\end{lemma}
\begin{proof}
Suppose $\rhofx < 1$. If $\dot\dualact(\rhofx) \geq 1$, then since $\dualact(1) = 1$, the mean value theorem and the convexity of $\dualact$ imply that $\dot\dualact(\rhofx) = 1$, which implies that all $\rho \in [\rhofx, 1]$ are fixed points of $\dualact$, which is a contradiction by \pref{lem:dualact-convexity}. 

Next, if $\rhofx = 1$, then there is no fixed point of $\dualact$ in $[0, 1)$. If $\dot\dualact(1) > 1$, then there exists a value $\rho < 1$ such that $\dualact(\rho) < \rho$. Since $\dualact(0) = a_0^2 \geq 0$, and $\dualact$ is continuous, we conclude that there exists a value $\rho' \in [0, \rho)$ such that $\dualact(\rho') = \rho'$, a contradiction.
\end{proof}
We can now prove an analogue of \pref{lem:one-layer}:
\begin{lemma}\label{lem:one-layer-unnormalized}
Suppose $\sigma$ is a non-affine activation. Let $\rhofx$ be the smallest fixed point of $\dualact$ in $[0, 1]$, and $\delta \in (0, 1]$. Then, the following bounds hold:
\begin{itemize}
    \item If $\rho \in [\frac{1+\rhofx}{2}, 1]$, then
    \[\dualact(\rho) \leq \rho - \frac{\mutil}{2}(1-\rhofx)(1-\rho).\]

    \item If $\rho \in [\rhofx, \frac{1+\rhofx}{2})$, then
    \[|\dualact(\rho) - \rhofx| \leq (1 - \tfrac{\mutil}{2}(1-\rhofx))|\rho - \rhofx|.\]

    \item If $\rho \in [0, \rhofx)$, then
    \[|\dualact(\rho) - \rhofx| \leq \dot\dualact(\rhofx)|\rho - \rhofx|.\]

    \item If $\rho \in [-1, 0)$, then $|\dualact(\rho)| \leq \dualact(-\rho)$, and 
    \[\dualact(\rho) \geq \rho + \dualact(0).\]
\end{itemize}    
\end{lemma}
\begin{proof}
We analyze the four cases separately:
\paragraph{Case 1: $\rho \in [\frac{1+\rhofx}{2}, 1]$.} In this case, writing $\rho$ as a convex combination of $\frac{1+\rhofx}{2}$ and $1$ and using the convexity of $\dualact$ in $[0, 1]$, we have
\[\dualact(\rho) \leq \tfrac{2-2\rho}{1-\rhofx}\dualact(\tfrac{1+\rhofx}{2}) + \tfrac{2\rho-1-\rhofx}{1-\rhofx}\dualact(1) \leq \tfrac{2-2\rho}{1-\rhofx}(\tfrac{1+\rhofx}{2} - \tfrac{\mutil}{4}(1-\rhofx)^2) + \tfrac{2\rho-1-\rhofx}{1-\rhofx} = \rho - \tfrac{\mutil}{2}(1-\rhofx)(1-\rho).\]
The second inequality above follows from \pref{lem:fixed-point}. 

\paragraph{Case 2: $\rho \in [\rhofx, \frac{1+\rhofx}{2})$.} Again, writing $\rho$ as a convex combination of $\rhofx$ and $\frac{1+\rhofx}{2}$ and using the convexity of $\dualact$ in $[0, 1]$, we have
\[\dualact(\rho) \leq \tfrac{1 + \rhofx - 2\rho}{1-\rhofx}\dualact(\rhofx) + \tfrac{2\rho - 2\rhofx}{1-\rhofx}\dualact(\tfrac{1+\rhofx}{2}) \leq \tfrac{1 + \rhofx - 2\rho}{1-\rhofx}\cdot \rhofx + \tfrac{2\rho - 2\rhofx}{1-\rhofx}(\tfrac{1+\rhofx}{2} - \tfrac{\mutil}{4}(1-\rhofx)^2) = \rho - \tfrac{\mutil}{2}(1-\rhofx)(\rho - \rhofx).\]
The second inequality above follows from \pref{lem:fixed-point}. Finally, since $\dualact$ is non-decreasing, and $\rho \geq \rhofx$, we must have $\dualact(\rho) \geq \dualact(\rhofx) = \rhofx$. Hence, we have
\[|\dualact(\rho) - \rhofx| = \dualact(\rho) - \rhofx \leq \rho - \tfrac{\mutil}{2}(1-\rhofx)(\rho - \rhofx) - \rhofx = (1 - \tfrac{\mutil}{2}(1-\rhofx))|\rho - \rhofx|,\]
as required.

\paragraph{Case 3: $\rho \in [0, \rhofx)$.} In this case, the convexity of $\dualact$ in $[0, 1]$ implies that
\[\dualact(\rho) \geq \dualact(\rhofx) + \dot\dualact(\rhofx)(\rho - \rhofx) = \rhofx + \dot\dualact(\rhofx)(\rho - \rhofx),\]
or in other words,
\[\rhofx - \dualact(\rho) \leq \dot\dualact(\rhofx)(\rhofx - \rho).\]
Since $\dualact$ is non-decreasing, and $\rho < \rhofx$, we must have $\dualact(\rho) \leq \dualact(\rhofx) = \rhofx$. Hence, we have
\[|\dualact(\rho) - \rhofx| = \rhofx - \dualact(\rho) \leq \dot\dualact(\rhofx)(\rhofx - \rho) = \dot\dualact(\rhofx)|\rho -\rhofx|,\]
as required.

\paragraph{Case 4: $\rho \in [-1, 0)$.} The bound $|\dualact(\rho)| \leq \dualact(-\rho)$ is obvious from the fact that $\dualact(\rho) = \sum_{i=0}^\infty a_i^2 \rho^i$. Next, we have
\[\dualact(\rho) - \rho = \sum_{i=0}^\infty a_i^2 \rho^i - \sum_{i=0}^\infty a_i^2 \rho = a_0^2(1 - \rho) + \sum_{i=1}^\infty a_i^2(\rho^i - \rho) \geq a_0^2 = \dualact(0).\]
The last inequality follows because $1-\rho \geq 0$ and $\rho^i - \rho \geq 0$.
\end{proof}
Finally, we now have the following generalization of \pref{lem:recursive} showing exponentially fast convergence of dot products to the smallest fixed point of $\dualact$ in $[0, 1]$. We focus on the case when $\dualact(0) > 0$, since the case when $\dualact(0) = 0$ exactly corresponds to normalized activations, which we have already analyzed.
\begin{lemma}[Correlation convergence lemma]
\label{lem:corr-convergence}
Suppose $\sigma$ is a non-affine activation with $\dualact(0) > 0$. Let $\rhofx$ be the smallest fixed point of $\dualact$ in $[0, 1]$, and $\delta \in (0, 1]$. If $\rhofx = 1$, then assume further that $\dot\dualact(1) < 1$. Then, after $L \geq L_0 := \max\left\{\left\lceil\frac{\log(\frac{1-\rhofx}{2\delta})}{\log(1+\frac{\mutil(1-\rhofx)}{2})}\right\rceil, \left\lceil\frac{1}{\dualact(0)}\right\rceil \right\}$ layers, for any $\rho$ such that $|\rho| \leq 1-\delta$, we have
\[|\dualact^{(L)}(\rho) - \rhofx| \leq \max\left\{1-\tfrac{\mutil(1-\rhofx)}{2}, \dot\dualact(\rhofx)\right\}^{L-L_0}(\tfrac{1+\rhofx}{2}).\]
Finally, if $\rhofx = 1$ and $\dot\dualact(1) = 1$, then 
\end{lemma}
\begin{proof}
If $\rho \in (\frac{1+\rhofx}{2}, 1-\delta]$, then using case 1 of \pref{lem:one-layer-unnormalized}, after $L_1 := \left\lceil\frac{\log(\frac{1-\rhofx}{2\delta})}{\log(1+\tfrac{\mutil(1-\rhofx)}{2})}\right\rceil$ layers, we have $\dualact^{(L_1)}(\rho) \leq \frac{1+\rhofx}{2}$. Similarly, if $\rho < 0$, then using case 4 of \pref{lem:one-layer-unnormalized}, after $L_2 := \left\lceil\frac{1}{\dualact(0)}\right\rceil$ layers, we have $\dualact^{(L_2)}(\rho) \geq 0$. Finally, when $\rho \in [0, \frac{1+\rhofx}{2}]$, then using cases 2 and 3 of \pref{lem:one-layer-unnormalized}, we conclude that 
\[|\dualact(\rho) - \rhofx| \leq \max\left\{1-\tfrac{\mutil(1-\rhofx)}{2}, \dot\dualact(\rhofx)\right\}|\rho - \rhofx|.\] The statement of the lemma follows from these observations.
\end{proof}
\pref{thm:unnormalized-main-infinite-width-top-layer} now follows immediately from \pref{lem:corr-convergence} using the fact that $\dot\dualact(\rhofx) < 1$ by \pref{lem:dualact-rotfx-derivative}.

The only setting not covered by \pref{lem:corr-convergence} is when $\rhofx = 1$ and $\dot\dualact(1) = 1$. This case is handled separately in the lemma below:
\begin{lemma}
Suppose $\sigma$ is a non-affine activation with $\dualact(0) > 0$. Suppose $1$ is the unique fixed point of $\dualact$ in $[0, 1]$ and also $\dot\dualact(1) = 1$. Then, for any $\epsilon \in (0, \frac{1}{2})$, after $L \geq L_0 := \max\left\{\left\lceil \frac{\log(2/\epsilon)}{-\log(1-\epsilon\mu/2)}\right\rceil, \left\lceil\frac{1}{\dualact(0)}\right\rceil \right\} = O\left(\frac{\log(1/\epsilon)}{\epsilon}\right)$ layers, for any $\rho \in [-1, 1]$, we have
\[\dualact^{(L)}(\rho) \geq 1 - \epsilon.\]
\end{lemma}
\begin{proof}
First, we note that since $1$ is the unique fixed point of $\dualact$ in $[0, 1]$, we must have $\dualact(\rho) > \rho$ for all $\rho \in [0, 1)$. Also, for any $\rho < 0$, as in case 4 of \pref{lem:one-layer-unnormalized}, we have $\dualact(\rho) \geq \rho + \dualact(0)$. So for any $\rho$, an application of $\dualact$ never decreases its value.

Since $\dot\dualact(1) = 1$, we have $\sum_{i=1}^\infty ia_i^2 = 1$. Hence, for any $\alpha \in [0, 1]$, we have
\[1 - \dot\dualact(1-\alpha) = \sum_{i=1}^\infty ia_i^2(1-(1-\alpha)^{i-1}) = \sum_{i=2}^\infty ia_i^2(1-(1-\alpha)^{i-1}) \geq \sum_{i=2}^\infty 2a_i^2(1-(1-\alpha)) = 2\alpha \mutil.\]
Next, since $\dualact$ is convex in $[0, 1]$ and $\dot\dualact(1) = 1$, for any $\rho \in [0, 1]$, we have $\dot\dualact(\rho) \leq 1$. Thus, $\dualact(1-\alpha) \geq \dualact(1) - \alpha = 1-\alpha$. Again using the convexity of $\dualact$ in $[0, 1]$, for any $\rho \in [0, 1-\alpha]$, we have
\[\dualact(\rho) \geq \dualact(1-\alpha) + \dot\dualact(1-\alpha)\cdot(\rho - (1-\alpha)) \geq 1 - \alpha + (1-2\alpha\mutil)(\rho - (1-\alpha)).\]
The second inequality above uses the facts that $\dualact(1-\alpha) \geq 1-\alpha$,  $\dot\dualact(1-\alpha) \leq 1-2\alpha \mutil$, and $\rho - (1-\alpha) \leq 0$. Simplifying and rearranging, we have
\[1-\dualact(\rho) \leq 2\alpha^2\mutil + (1-2\alpha \mutil)(1-\rho).\]
Thus, starting from any $\rho \in [0, 1-\alpha]$, and applying the above inequality recursively, after $L_1 := \left\lceil \frac{\log(1/\alpha)}{-\log(1-2\alpha\mu)}\right\rceil$ layers, either there is a layer $\ell$ such that $\dualact^{(\ell)}(\rho) > 1-\alpha$, or else, $\dualact^{(L_1)}(\rho) \geq 1 - 2\alpha$.

Finally, if $\rho < 0$, then after $L_2 := \left\lceil\frac{1}{\dualact(0)}\right\rceil$ layers, we reach a non-negative value, at which point the above analysis applies. The lemma now follows by setting $\alpha = \frac{\epsilon}{2}$.
\end{proof}

\subsection{Calculations of the coefficient of non-linearity}

For standard activation functions such as ReLU, it is easy to compute the coefficient of non-linearity of their normalized versions from their Hermite expansion. Specifically, if $\sigma(u) = \sum_{i=0}^\infty a_i h_i(u)$ is the Hermite expansion of an activation function $\sigma$, then the normalized version of $\sigma$, denoted $\bar{\sigma}$, is given by
\[ \bar{\sigma}(u) = \frac{\sum_{i=1}^\infty a_i h_i(u)}{\sqrt{\sum_{i=1}^\infty a_i^2}}.\]
Thus, the coefficient of non-linearity of $\bar{\sigma}$ is given by (see the proof of \pref{lem:mu-bounds}):
\[\mu = 1 - \frac{a_1^2}{\sum_{i=1}^\infty a_i^2}.\]
Since the dual activation of $\sigma$, $\dualact(\rho) = \sum_{i=0}^\infty a_i^2 \rho^i$, can be written as $\dualact(\rho) = \sum_{i=0}^\infty a_i^2 \rho^i$, we can also write the above formula for the coefficient of non-linearity as:
\[\mu = 1 - \frac{\dualact'(0)}{\dualact(1) - \dualact(0)}.\]
This latter formula easily allows us to compute the coefficient of non-linearity for various activations. For example, using Table 1 from \citep{daniely2016toward}, we get the following calculations of $\mu$:
\begin{center}
\begin{tabular}{cllc} 
 \hline
 Activation & $\sigma(u)$  & $\dualact(\rho)$ & $\mu$ \\
 \hline
 Identity & $u$ & $\rho$ & $0$\\ 
 2nd Hermite & $\frac{u^2-1}{\sqrt{2}}$ & $\rho^2$ & $1$ \\ 
 ReLU & $\max\{u, 0\}$ & $\frac{1}{2\pi} + \frac{\rho}{4} + \frac{\rho^2}{2\pi} + \frac{\rho^4}{48\pi} + \cdots = \frac{\sqrt{1-\rho^2} + (\pi - \cos^{-1}(\rho))\rho}{2\pi}$ & $\frac{\pi - 2}{2\pi - 2}$ \\
 Step & $\mathbf{1}[u \geq 0]$ & $\frac{1}{4} + \frac{\rho}{2\pi} + \frac{\rho^3}{12\pi} + \frac{3\rho^5}{80\pi} + \cdots = \frac{\pi - \cos^{-1}(\rho)}{2\pi}$ & $\frac{\pi - 2}{\pi}$\\
 Exponential & $e^{u}$ & $e^3(1 + \rho + \frac{\rho^2}{2} + \frac{\rho^3}{6} + \cdots) = e^{\rho + 3}$ & $\frac{e-2}{e-1}$ \vspace*{1mm}\\
 \hline
\end{tabular}
\end{center}

For the activation function $\normrelu_c$ (defined in \pref{sec:experiments}), it is easier to directly compute the coefficient of non-linearity as follows. First, note that $\normrelu_c$ is already normalized, so it suffices to compute the coefficient $a_1$ in its Hermite expansion. We have
\begin{align*}
  a_1 &= \E_{X \sim \normal(0, 1)}[X\normrelu_c(X)] \\
  &= \E_{X \sim \normal(0, 1)}[X\cdot \lambda(c)\cdot[\max\{X-c, 0\} + b(c)]] \\
  &= \E_{X \sim \normal(0, 1)}[\lambda(c)\cdot \mathbf{1}[X \geq c]] \\
  &= \lambda(c)(1-\varPhi(c)),  
\end{align*}
where the third step uses Stein's lemma, and in the fourth step, $\varPhi(\cdot)$ is the Gaussian cumulative distribution function. Thus, the coefficient of non-linearity for $\normrelu_c$ is
\[\mu = 1 - \lambda(c)^2(1-\varPhi(c))^2.\]
For the value of $c$ used in our experiments, i.e. $c = -1.5975$, we have $\lambda(c) \approx 1.05$, and $\varPhi(c) \approx 0.0551$, and thus $\mu \approx 0.0156$.

\section{Analysis for Non-Unit Length Inputs - Proof of \pref{thm:general_norms}}
\label{app:general_norm_proof}
In this section we provide a proof for \pref{thm:general_norms}. 

\subsection{Analyis for Norms}

As stated earlier our analysis begins first by analyzing and understanding the evolution of the norm of the representations across the network. To this end we prove the following theorem. 

\begin{theorem} 
\label{thm:norms}
Let $\sigma$ be a twice-differentiable non-decreasing activation function which satisfies the following properties: 
\begin{itemize}
    \item $\sigma(0)=0$
    \item $\sigma$ is concave on $\reals^+$ and $\sigma$ is convex on $\reals^-$
\end{itemize}
Furthermore define 
\[\alpha_{\sigma} := \min\left( 2\E_{x \in \normal (0,1)}\left[\sigma^2 \left(\sqrt{0.5}x\right)\right] - 1, 1 - \E_{x \in \normal (0,1)}\left[\sigma \left(x\right)\sigma'(x)x\right]\right).\]
We have that if $\|x\|^2 \in [0.5, \infty)$ then for any $L \geq 0$
    \[|\|\bar{k}(x,x)\|^2 - 1| \leq (1 - \alpha_{\sigma})^L \cdot |\|x\|^2-1|^2\]
\end{theorem}

\begin{remark}
From the proof of the theorem it will be evident under our assumptions on $\sigma$ that $\alpha_{\sigma} \in [0,1]$. Furthermore it will also be evident that the choice of $0.5$ is arbitrary and can be replaced by any constant $> 0$, and the definition of $\alpha_{\sigma}$ changes appropriately.
\end{remark}

In the rest of the section we prove \pref{thm:norms}. Firstly note that it is sufficient to prove the theorem for $L=1$. The general case  follows inductively easily. To prove the base case consider the following function $\hat{\sigma}_l: \reals^+ \rightarrow \reals^+$
\[ \hat{\sigma}_l(\gamma) := \E_{z \sim N\left(0,\gamma\right)}[\sigma^2(z)] = \E_{z \sim N\left(0,1\right)}[\sigma^2(\sqrt{\gamma}z)]\]
A simple application of successive central limit theorems across layers gives us that for any l
\[\bar{k}^{(l)}(x,x) = \hat{\sigma}_{l}^{l}(\|x\|^2)\]
Therefore all we are required to show is that for all $\gamma \geq 0.5$ we have that
\begin{equation}
\label{eqn:norm_target}
   |\hat{\sigma}_{l}(\gamma) - 1| \leq (1 - \alpha_{\sigma})|\gamma - 1| 
\end{equation}
As a reminder note that our assumption on $\sigma$ implies that $\hat{\sigma}_l(1) = 1$. To establish \eqref{eqn:norm_target} we begin with the following lemma characterizing the behaviour of the map $\hat{\sigma}_l$. 

\begin{lemma}
\label{lem:simp-conc}
Let $\sigma$ be a twice-differentiable monotonically increasing activation function which satisfies the following properties: 
\begin{itemize}
    \item $\sigma(0)=0$
    \item $\sigma$ is concave on $\reals^+$ and $\sigma$ is convex on $\reals^-$
\end{itemize}
Then $\hat{\sigma}_l(\gamma)$ is a twice-differentiable non-decreasing concave function on $\reals^{+}$. 
\end{lemma}
We now prove \eqref{eqn:norm_target} using \pref{lem:simp-conc}. We divide the analysis in two cases. Suppose $\gamma \in [0.5,1]$. Note that since $\hat{\sigma}_l(1)=1$ and $\sigma$ is monotonic we have that $\hat{\sigma}_l(\gamma)\leq 1$. Furthermore, since $\hat{\sigma}_l$ is a concave function and $\hat{\sigma}_l(1)=1$, we have that 
\[ \hat{\sigma}_l(\gamma) \geq 2(1 - \gamma)\hat{\sigma}_l(0.5) + 2(\gamma - 0.5)\]  
It now follows that
\begin{align}
    1 - \hat{\sigma}_l(\gamma) \leq (1 - (2\hat{\sigma}_l(0.5)-1))(1 - \gamma) \leq (1 - \alpha_{\sigma})(1 - \gamma).
\end{align}
Note that the concavity and monotonicity of $\hat{\sigma}_l$ also establish that $2\hat{\sigma}_l(0.5) - 1 \in [0,1]$. 

For the case of $\gamma \geq 1$. Note that since $\hat{\sigma}_l$ is concave and it is easy to see that 
\[\hat{\sigma}_l(\gamma) \leq \hat{\sigma}_l(1) + \hat{\sigma}_l'(1) (\gamma - 1), \]
which by noting that $\hat{\sigma}_l(1) = 1$ implies that 
\[\hat{\sigma}_l(\gamma) - 1 \leq  \hat{\sigma}_l'(1) (\gamma - 1) \leq (1 - \alpha_{\sigma})(\gamma - 1).\]
The last inequality follows by noting that $\hat{\sigma}_l'(1) = \E_{x \in \normal (0,1)}\left[\sigma \left(x\right)\sigma'(x)x\right]$. Again by concavity of $\hat{\sigma}$ and the conditions that $\hat{\sigma}(0) = 0$ and $\hat{\sigma}(1) = 1$, it can be readily seen that $\hat{\sigma}'(1) \leq 1$. 

This finishes the proof of \pref{thm:norms}. We finish this section by provide the proof of \pref{lem:simp-conc}

\begin{proof}[Proof of \pref{lem:simp-conc}]
In the rest of the proof we assume $\gamma > 0$. The calculation for $\gamma = 0$ case can be done analogously. The twice differentiability of $\hat{\sigma}_l$ can be seen easily from the definition.  

Further note that
\[\frac{\partial \hat{\sigma}(\gamma)}{\partial \gamma} = \E_{z \sim N\left(0,1\right)}\left[\sigma(\sqrt{\gamma}z)\sigma'(\sqrt{\gamma}z)\frac{z}{\sqrt{\gamma}}\right] \geq 0\]
The inequality follows since $\sigma(0)=0$ and $\sigma$ is non-decreasing (and hence $\sigma(x) \cdot x \geq 0$). Furthermore consider the computation for the second derivative
\[\frac{\partial^2 \hat{\sigma}(\gamma)}{\partial \gamma^2} = \frac{\E_{z \sim N\left(0,1\right)}\left[(\sigma'(\sqrt{\gamma}z))^2z^2\right]}{2\gamma} + \frac{\E_{z \sim N\left(0,1\right)}\left[\sigma(\sqrt{\gamma}z)\sigma''(\sqrt{\gamma}z)z^2\right]}{2\gamma} - \frac{\E_{z \sim N\left(0,1\right)}\left[\sigma(\sqrt{\gamma}z)\sigma'(\sqrt{\gamma}z)z\right]}{2\gamma^{3/2}}\]
We can now analyze by considering every term. Notice that the second term
\[\frac{\E_{z \sim N\left(0,1\right)}\left[\sigma(\sqrt{\gamma}z)\sigma''(\sqrt{\gamma}z)z^2\right]}{2\gamma} \leq 0\]
because under the assumptions $\sigma(x)$ and $\sigma''(x)$ always have opposite signs. 
We will now analyze the sum of the first and third terms. 
\begin{align*}
&\frac{\E_{z \sim N\left(0,1\right)}\left[(\sigma'(\sqrt{\gamma}z))^2z^2 \right]}{2\gamma} - \frac{\E_{z \sim N\left(0,1\right)}\left[\sigma(\sqrt{\gamma}z)\sigma'(\sqrt{\gamma}z)z\right]}{2\gamma^{3/2}} \\
&= \frac{\E_{z \sim N\left(0,1\right)}\left[(\sigma'(\sqrt{\gamma}z))^2z^2 \gamma - \sigma(\sqrt{\gamma}z)\sigma'(\sqrt{\gamma}z)z \sqrt{\gamma}\right]}{2\gamma^2} \\
&= \frac{\E_{z \sim N\left(0,\gamma\right)}\left[(\sigma'(z))^2z^2 - \sigma(z)\sigma'(z)z \right]}{2\gamma^2}
\end{align*}
We now show that $\forall z$,  $(\sigma'(z))^2z^2 - \sigma(z)\sigma'(z)z  \geq 0$. To this end note that $\forall z \geq 0$
\[(\sigma'(z))^2z^2 - \sigma(z)\sigma'(z)z = z\sigma'(z)(\sigma'(z)z - \sigma(z)) \leq 0\]
The inequality follows by noting monotonicity and concavity of $\sigma$ for $z \geq 0$. 
Similarly for $z \leq 0$,
\[(\sigma'(z))^2z^2 - \sigma(z)\sigma'(z)z = z\sigma'(z)(\sigma'(z)z - \sigma(z)) \leq 0\] 
which follows by noting monotonicity and convexity of $\sigma$ for $z \leq 0$.
\end{proof}


\subsection{Analysis for Dot-Products - Preliminaries}

Having established that the norms converge to $1$ we focus on the normalized dot-product between the representations and show that for odd functions it never increases and once the norms have converged close to 1 it decreases rapidly. To this end we will first need to define the following function which is a generalization of the $\dualact$ function defined in \pref{app:conditioning}. For any two vectors $x,y$ define $\dualact_c(\cdot, \cdot)$ as 
\[\hat{\sigma}_c(x,y) := \E_{w \sim \normal(0,I)}[\sigma(w^{\top}x)\sigma(w^{\top}y)]\]
A simple parametrization shows that this is equivalent to the following quantity
\[\hat{\sigma}_c(x,y) = \E_{z \sim N\left(0,\Sigma(x,y)\right)}[\sigma(z_1)\sigma(z_2)] \quad\text{ where }\quad \Sigma(x,y) = \begin{bmatrix}\|x\|^2& x\cdot y  \\x \cdot y & \|y\|^2\end{bmatrix}\]

To analyze the above quantity we use a general notion of (probabilist's) Hermite polynomials $h_{j}^{\{\gamma\}}(x)$ defined for any $\gamma$, defined to be the Hermite polynomials corresponding to the base distribution being $\normal(0, \gamma)$. We use the following specific definition derived from \cite{o2014analysis}. 

Consider for any $\gamma$ the quantity $exp(tz - \frac{\gamma^2t^2}{2})$. Considering the power series we get that the coefficient in front of $t^j$ is a polynomial in $z$ (with coefficients depending on $\gamma$). Defining this polnomial as $H_{j}^{\{\gamma\}}(z)$ we get that 
\begin{equation}
\label{eqn:mg}
    exp(tz - \frac{\gamma^2t^2}{2}) = \sum \frac{1}{j!}H_{j}^{\{\gamma\}}(z)\cdot t^j
\end{equation}
We can now define the hermite polynomials for any $\gamma \geq 0$ and $j \in \{0, 1, \ldots \}$ formally as 
\[ h_j^{\{\gamma\}}(z) := \frac{H_j^{\{\gamma\}}(z)}{\sqrt{\gamma^j \cdot j!}}\]
We show the following simple lemma about these polynomials (which also establishes that these polynomials form a basis under the distribution $\normal(0, \gamma)$). 
\begin{lemma}
\label{lem:generalized_orthogonality}
Given three numbers $\gamma_1 > 0 , \gamma_2 > 0, \gamma_3$ with $\gamma_3^2 \leq \gamma_1 \gamma_2$, define 
\[\Sigma := \begin{bmatrix}\gamma_1& \gamma_3  \\\gamma_3 & \gamma_2\end{bmatrix}\]
We have that the polynomial family $\{h_j^{\{\gamma\}}\}$ satisfies the following condition
\[\E_{z \sim \mathcal{N}(0, \Sigma)}[h_i^{\gamma_1}(z_1)h_j^{\gamma_2}(z_2)] = \begin{cases}
    \left(\frac{\gamma_3}{\sqrt{\gamma_1 \gamma_2}}\right)^j & \text{if } i=j\\
    0              & \text{otherwise}
\end{cases}\]
\end{lemma}
Note that setting $\gamma_1 = \gamma_2 = \gamma_3 = \gamma$ also establishes that for any $\gamma \geq 0$, $\{h_j^{\{\gamma\}}\}$ is an orthonormal family of polynomials. Therefore we can decompose $\sigma$ in the basis given by $h_j^{\{\gamma\}}$ as follows
\[\sigma(x) = \sum_{j=0}^{\infty} a_j^{\{\gamma\}} h_{j}^{\{\gamma\}}(x) \quad\text{ where }\quad a_j^{\{\gamma\}} = E_{z \sim \mathcal{N}(0, \gamma)} [\sigma(z)h_{j}^{\{\gamma\}}(z)]\]
Using \pref{lem:generalized_orthogonality} the following statement follows by decomposing in the appropriate bases
\begin{equation}
    \label{eqn:sigmahat}
    \hat{\sigma}_c(x,y) = \sum_{j=0}^{\infty} a_j^{\{\|x\|^2\}}a_j^{\{\|y\|^2\}} \rho(x,y)^i \quad\text{ where } \quad \rho(x,y) := \frac{x \cdot y}{\|x\|\|y\|}
\end{equation}
A special case of the above $\hat{\sigma}_c$ is when $x=y$. In this case it is easy see that $\hat{\sigma}_c(x,x) = \hat{\sigma}_l(\|x\|^2)$ (defined in the previous section). Accordingly
\begin{equation}
\label{eqn:sigmal_def}
    \hat{\sigma}_c(x,x) = \hat{\sigma}_l(\|x\|^2) = \sum_{j=0}^{\infty} (a_j^{\{\|x\|^2\}})^2
\end{equation}

We now show the following lemma which is a generalization of \pref{lem:one-layer}. 

\begin{lemma} \label{lem:one-layer-general}
    Given any number $\gamma > 0$ define the generalized coefficient of linearity as
    \[ \mu^{\{\gamma\}} := 1 - \frac{\left(a_1^{\{\gamma\}}\right)^2}{\sum_{j=0}^{\infty} \left(a_j^{\{\gamma\}}\right)^2}\]
    Further let $\delta = 1 - |\rho(x,y)| := 1 -\frac{|x^{\top}y|}{\|x\|\|y\|}\in [0, 1]$, then for any odd function $\sigma$, we have that 
    \[\frac{|\dualact_{c}(x,y)|}{\sqrt{\dualact_{c}(x,x)\dualact_{c}(y,y)}} \leq \begin{cases}
        \left(\sqrt{1 - (1+\frac{\mu^{\{\|x\|^2\}}}{2}) \delta}\right) \cdot \left(\sqrt{1 - (1+\frac{\mu^{\{\|y\|^2\}}}{2})\delta}\right) & \text{ if }\ \delta \leq \frac{1}{2} \\
        \sqrt{\left(1 - \frac{\mu^{\{\|x\|^2\}}}{2}\right)} \cdot \sqrt{\left(1 - \frac{\mu^{\{\|y\|^2\}}}{2}\right)}(1 - \delta) & \text{ if }\ \delta > \frac{1}{2}.
    \end{cases}\]
\end{lemma}

A simple consequence of the above theorem which follows by noting that since $\mu^{\{\gamma\}} \in [0,1]$ for all $\gamma$, we have that for odd function $\sigma$, the normalized dot-products always decrease in absolute value, i.e.  
\[\frac{|\dualact_{c}(x,y)|}{\sqrt{\dualact_{c}(x,x)\dualact_{c}(y,y)}} \leq |\rho(x,y)|.\]

We end this section with the proofs of \pref{lem:one-layer-general} and \pref{lem:generalized_orthogonality}.
\begin{proof}[Proof of \pref{lem:one-layer-general}]
     Firstly note that since $\sigma$ is an odd function it is easy to check that $a_0^{\{\gamma\}} = 0$ for all $\gamma$. Therefore we have the following consequence, 
\[ \frac{\dualact_c(x,y)}{\sqrt{\dualact_c(x,x) \cdot \dualact_c(y,y)}} = \frac{\sum_{j=1}^{\infty} a_j^{\{\|x\|^2\}}a_j^{\{\|y\|^2\}} \rho(x,y)^j}{\sqrt{\sum_{j=1}^{\infty} \left(a_j^{\{\|x\|^2\}}\right)^2}\sqrt{\sum_{j=1}^{\infty} \left(a_j^{\{\|y\|^2\}}\right)^2}}\]
A simple application of Cauchy-Schwartz gives
\[ \frac{\sum_{j=1}^{\infty} |a_j^{\{\|x\|^2\}}||a_j^{\{\|y\|^2\}}| |\rho(x,y)|^j}{\sqrt{\sum_{j=1}^{\infty} \left(a_j^{\{\|x\|^2\}}\right)^2}\sqrt{\sum_{j=1}^{\infty} \left(a_j^{\{\|y\|^2\}}\right)^2}} \leq \sqrt{\frac{\sum_{j=1}^{\infty}\left(a_j^{\{\|x\|^2\}}\right)^2 |\rho(x,y)|^j}{\sum_{j=1}^{\infty} \left(a_j^{\{\|x\|^2\}}\right)^2}} \sqrt{\frac{\sum_{j=1}^{\infty}\left(a_j^{\{\|y\|^2\}}\right)^2 |\rho(x,y)|^j}{\sum_{j=1}^{\infty} \left(a_j^{\{\|y\|^2\}}\right)^2}}\]
We will now analyse the left term first 
\begin{align}
    \frac{\sum_{j=1}^{\infty}\left(a_j^{\{\|x\|^2\}}\right)^2 |\rho(x,y)|^j}{\sum_{j=1}^{\infty} \left(a_j^{\{\|x\|^2\}}\right)^2} &\leq (1 - \mu^{\{\|x\|^2\}})|\rho(x,y)| + \mu^{\{\|x\|^2\}} \rho(x,y)^2 \\
    &= (1 - \mu^{\{\|x\|^2\}})(1 - \delta) + \mu^{\{\|x\|^2\}} (1 - \delta)^2 \\ &
    = (1 - \delta)(1 - \mu^{\{\|x\|^2\}}\delta)
\end{align}

    Now if $\delta > \frac{1}{2}$, we have $(1 - \delta)(1 - \mu^{\{\gamma\}}\delta) \leq (1 - \frac{\mu^{\{\gamma\}}}{2})(1 - \delta)$. If $\delta \leq \frac{1}{2}$, we have $(1 - \delta)(1 - \mu^{\{\gamma\}}\delta) = 1 - (1 + \mu^{\{\gamma\}})\delta + \mu^{\{\gamma\}}\delta^2 \leq 1 - (1+\frac{\mu^{\{\gamma\}}}{2})\delta$. Therefore we have that
    \[\frac{\sum_{j=1}^{\infty}\left(a_j^{\{\|x\|^2\}}\right)^2 |\rho(x,y)|^j}{\sum_{j=1}^{\infty} \left(a_j^{\{\|x\|^2\}}\right)^2} \leq \begin{cases}
        1 - (1+\frac{\mu^{\{\gamma\}}}{2})\delta & \text{ if }\ \delta \leq \frac{1}{2} \\
        (1 - \frac{\mu^{\{\gamma\}}}{2})(1 - \delta) & \text{ if }\ \delta > \frac{1}{2}.
    \end{cases}\]
    Repeating the same analysis for the second term and combining finishes the proof. 
\end{proof}

\begin{proof} [Proof of \pref{lem:generalized_orthogonality}]
Lets consider the moment generating function for a gaussian with covariance $\Sigma$.
\[\E_{z \sim \mathcal{N}(0, \Sigma)}[exp(sz_1 + tz_2)]\]
We can reparametrize this distribution as follows - consider two vectors $u,v$ such that $\|u\|=\gamma_1$, $\|v\|=\gamma_2$ and $u \cdot v = \gamma_3$. Now we see that 
\begin{align*}
    E_{z \sim \mathcal{N}(0, \Sigma)}[exp(sz_1 + tz_2)] &= E_{g \sim \mathcal{N}(0, I)}[exp(s(u_1g_1 + u_2g_2) + t(v_1g_1 + v_2g_2))] \\
    &= E_{g \sim \mathcal{N}(0, 1)}[exp((su_1 + tv_1)g)] \cdot E_{g \sim \mathcal{N}(0, 1)}[exp((su_2 + tv_2)g)] \\
    &= exp(0.5(su_1 + tv_1)^2) \cdot  exp(0.5(su_2 + tv_2)^2) \\
    &= exp(0.5(s^2\|u\|^2 + t^2\|v\|^2) + st (u\cdot v))
\end{align*}
Rearranging the above and replacing relevant quantities we get that 
\[E_{z \sim \mathcal{N}(0, \Sigma)}[exp(sz_1 - 0.5 \gamma_1^2 s^2) \cdot exp(tz_2 - 0.5 \gamma_2^2 t^2)] = exp(st \gamma_3) = \sum_{j=0}^{\infty} \frac{1}{j!}s^jt^j \gamma_3^j\]
Working with the expression derived for LHS in \eqref{eqn:mg} we get that 
\[\sum_{i,j} \frac{1}{i!j!} E_{z \sim \mathcal{N}(0, \Sigma)}[H_i^{\gamma_1}(z_1)H_j^{\gamma_2}(z_2)] s^it^j = \sum_{j=0}^{\infty} \frac{1}{j!} s^jt^j \gamma_3^j\]
Matching coefficient we get that 
\[E_{z \sim \mathcal{N}(0, \Sigma)}[H_i^{\gamma_1}(z_1)H_j^{\gamma_2}(z_2)] = \begin{cases}
    j! \cdot \gamma_3^j & \text{if } i=j\\
    0              & \text{otherwise}
\end{cases}\]
Using the definition
\[ h_j^{\gamma}(z) := \frac{H_j^{(\gamma)}(z)}{\sqrt{\gamma^j \cdot j!}}\]
now finishes the proof.
\end{proof}

\subsection{Proof of \pref{thm:general_norms}}

In this section we prove \pref{thm:general_norms}. We re-state the theorem in a more precise way as follows. 

\begin{theorem} 
\label{thm:general_norms_1}
Let $\sigma$ be a twice-differentiable monotonically increasing odd function which is concave on $\reals^+$. There exists a constant $\alpha_\sigma$ (defined in \pref{thm:norms}) such that for any two inputs $x,y$ such that $\|x\|^2,\|y\|^2 \geq 0.5 $ with $\frac{x^{\top}y}{\|x\|\|y\|} \leq 1 - \delta$ for some $\delta > 0$, after a number of layers 
\[L \geq \hat{L} := \frac{1}{\alpha_{\sigma}}\log\left(\frac{4\max(|\|x\|^2-1|,|\|y\|^2-1|, \mu/4)}{\mu}\right),\] 
we have that
\[ \frac{ \phi_{W}(x)^{\top} \phi_{W}(y)}{\|\phi_{W}(x)\|\|\phi_{W}(y)\|}\leq B_{\mu/2}\left(L - \hat{L}, \delta\right)\] 
\end{theorem}
\begin{proof}[Proof of \pref{thm:general_norms_1}]
    We begin the proof by first noting for any $x$ such that $\|x\|^2 \in [1 - \mu/4, 1 + \mu/4]$ we have that
    \begin{equation}
        \label{eqn:sub-claim}
        \mu^{\{\|x\|^2\}} \geq \mu/2 
    \end{equation}
where $\mu^{\{\|x\|^2\}}$ is as defined in \pref{lem:one-layer-general} and $\mu = \mu^{\{1\}}$ is as defined in \pref{def:coefficient-non-linearity}. To see this first note that since $\mu \leq 1$, $1 - \mu/4 \geq 3/4$ and hence an application of \pref{thm:norms} gives us that
\[  \dualact_{l}(\|x\|^2) = \dualact_{c}(x,x) = \sum_{j=0}^{\infty} (a_j^{\{\|x\|^2\}})^2 \in [1 - \mu/4, 1 + \mu/4] \]
As a consequence since $\sum_{j=0}^{\infty} (a_j^{\{1\}})^2 = 1$ we also get that $a_1^{\{\|x\|^2\}} \leq 1 - \mu + \mu/4$. Therefore we have that 
\[\mu^{\{\|x\|^2\}} := 1 - \frac{\left(a_1^{\{\|x\|^2\}}\right)^2}{\sum_{j=1}^{\infty} \left(a_j^{\{\|x\|^2\}}\right)^2} \geq 1 - \frac{1 - 3\mu/4}{1 - \mu/4} \geq \mu/2 \]

The proof now follows by first noting from \pref{thm:general_norms} that after $\hat{L}$ layers we have that the representations have norms smaller than $(1 -\mu/4, 1 + \mu/4)$. Formally for $l \geq \hat{l}$ 
\[ \bar{k}^l(x,x), \bar{k}^l(y,y) \in [1 - \mu/4, 1 + \mu/4]\]
Now using the claim derived in \eqref{eqn:sub-claim}, using \pref{lem:one-layer-general} iteratively and using the definition of $B_{\nu}(L, \delta)$ immediately implies the theorem. 

\end{proof}

\subsection{Discussion}
\label{app:general_result_discussion}

To get the results for inputs with general norms we introduce extra assumptions on the activation function. As noted earlier, the assumption of differentiability is only for convenience, the result will hold for any function with finite non-smooth points. The most restrictive assumption we require on the activation functions is the assumption of it being odd. This is a specification of our unbiased normalization assumption. We note that many activations like tanh, arctan, soft sign etc satisfy this assumption. Finding a more general condition for global convergence of inner products to 0 is an intriguing open question. 

Furthermore our results on the norm convergence holds more generally than odd functions (See \pref{thm:norms}). It requires the activation to have a convex-concave structure and be 0 at 0. This assumption is far weaker and in fact most of the activations used in practice satisfy this (for eg. standard/leaky ReLU, SeLU, sigmoid, tanh, arctan etc.). Once again finding general conditions for global convergence of norm to fixed points is an intriguing open question. \cite{poole} informally (but incorrectly) mention that just monotonicity of activation is a sufficient condition, however the standard ReLU with a little shift, i.e. $\max(x,0) - 1$, itself presents a simple counter-example to the claim.


\section{Optimization Proofs}
\label{app:opt-proofs}

Suppose we train the network using gradient descent on a loss function $\ell: \reals \times \cY \rightarrow \reals$, which defines the empirical loss function \[\loss(\vecW) := \frac{1}{n}\sum_{i=1}^n \ell(f_{\vecW}(x_i), y_i).\] 
For the rest of this section we will assume that the loss function $\ell$ is the square loss, i.e. $\ell(\hat{y},y)=(\hat{y}-y)^2$. The results presented can appropriately be extended to the setting where the loss function is smooth and strongly convex. 
Training a finite-width neural network necessitates the study of the conditioning of the finite-width kernel matrices $K$ and $\ntk$, rather than their infinite-width counterparts. In such settings optimization results typically follow from a simple 2-step modular analysis:
\begin{itemize}
	\item \textbf{Step 1. [Initial Stability]} Standard concentration inequalities imply that if the width is large enough, conditioning of the infinite-width kernel matrices transfers to their finite-width counterparts at initialization.
	\item \textbf{Step 2. [Training Stability]} Standard optimization theory implies that conditioning in finite-width kernel matrices leads to fast training. In the case of training only the top layer this is sufficient. When training all layers, a much more careful analysis is needed to show that the NTK stays "close" to initialization, leading to conditioning throughout the training process. 
\end{itemize}

We now provide a couple of representative optimization results that follow from this type of analysis. Our goal here is to merely provide representative examples of typical optimization scenarios and highlight what benefits conditioning can lead to. Indeed, we believe extensions and improvements can be derived with significantly better bounds.  

\subsection{Training only the top layer}

We consider a mode of training where only the top layer weight vector, $v$, is updated, while keeping $W_1, W_2, \ldots, W_L$ frozen at their randomly initialized values. To highlight this we introduce the notation $\vecW_{1:L} = \{W_1 \ldots W_L\}$. Let  $\eta > 0$ be a step size, the update rule at iteration $t$ is given by 
\[v_{t+1} = v_t - \eta \cdot \partial_{v}\loss(\{v_t, \vecW_{1:L}\}) = v_t - \eta \cdot \frac{1}{n} \sum_{i=1}^n 2(v_t \cdot \Phi_{\vecW}(x_i) - y_i) \Phi_{\vecW}(x_i).\] 
Note that in this mode of training, the associated optimization problem is convex in $v$. To implement Step 1 of the modular analysis, we  appeal to the results of \citet{daniely2016toward}. They show that when the activations are suitably bounded (see Definition 6 in their paper for $C$-bounded activations) and the width is large enough, then with high probability, each entry in the kernel matrix $K$ is close to the corresponding entry in $\bar{K}$. Specifically, via Theorems 2 and 3 in their paper, we have the following version of \pref{thm:main-infinite-width-top-layer} for finite width neural networks:
\begin{lemma}[Via Theorem 2 in \cite{daniely2016toward}]
\label{lem:main-finite-width-top-layer}
    For any $\gamma > 0$, suppose that the activation $\sigma$ is $C$-bounded and $m = \Omega\left(\frac{(4C)^{L+1} \log(n)}{\gamma^2}\right)$, then with high probability, we have that for all $i,j$,  $|K_{ij} - \bar{K}_{ij}| \leq \gamma$. 
\end{lemma}

Step 2 follows by using standard convex optimization theory \citep{nesterov-lectures}, which tells us that the convergence rate of gradient descent for this problem depends on the condition number of $K$. Specifically, we have the following result:

\begin{theorem}
\label{thm:top-layer-general-app}
Suppose $L = \Theta\left(\frac{\log(n/\delta)}{\mu}\right)$. If $\sigma$ is $C$-bounded and the width $m = \text{poly}(n, \frac{1}{\delta})$, then setting $\eta = \Theta\left(\frac{1}{ \lambda_{\max}(\bar{K})}\right)$, we get that with high probability over the initialization, 
\[ \loss(\{v_t,\vecW_{1:L}\}) \leq e^{-\Omega\left(t\right)}\cdot \loss(\{v_0,\vecW_{1:L}\})\]
Alternatively, in order to find a point that is $\epsilon$ sub-optimal, gradient descent needs $O(\log(\frac{1}{\epsilon}))$ steps. 
\end{theorem}

Similarly, one can also derive a linear convergence theorem for stochastic gradient descent:
\begin{theorem}
\label{thm:sgd-fixed-bottom-app}
With the same choice of parameters as in  \pref{thm:top-layer-general-app}, appropriate choice of $\eta$ and with high probability over the initialization, stochastic gradient descent finds a point that is $\epsilon$-sub-optimal in expectation in at most $O\left(\log(\frac{1}{\epsilon})\right)$ steps.
\end{theorem}

\begin{remark}
The rate in the exponent in the theorem above naturally depends upon the condition number of the kernel matrix K. For simplicity, we choose to state the theorem for a depth at which the condition number is $O(1)$. Precise rates depending on $L$, can be derived from \pref{cor:main-top-layer-cn}. 
\end{remark}

\subsection{Training All The Layers Together}

In this section we provide a representative result for the training dynamics when all the layers are trained together with a fixed common learning rate. The dynamics are given by

\[\vecW(t+1) = \vecW(t) - \eta \partial_{\vecW}\loss(\vecW(t))\]

Now since the bottom layers also move the kernel changes at every step. The standard analysis in this setting follows from carefully establishing that the NTK does not change too much during the training procedure allowing for the rest of the analysis to go through. The following theorem from \cite{lee-wide-neural-2019} summarizes one such setting for smooth activation functions. 

\begin{theorem}[Thereom G.4 in \cite{lee-wide-neural-2019}]
\label{thm:lee-smooth-act-optimization-app}
Suppose that the activation $\sigma$ and its derivative $\sigma'$ further satisfies the properties that there exists a constant $c$,  such that for all $x,x'$ 
\[|\sigma(x)|, |\sigma'(x)|, \frac{|\sigma'(x)-\sigma'(x')|}{|x-x'|} \leq c.\]
Then there exists a constant $N$ (depending on L, n, $\delta$) such that for width $m > N$ and setting the learning rate $\eta  = 2(\lambda_{\min}(\bar{K}) + \lambda_{\max}(\bar{K}))^{-1}$, with high probability over the initialization the following is satisfied for gradient descent for all $t$, 
\[\loss(\vecW(t)) \leq e^{- \Omega\left(\frac{t}{\kappa(\bar{K})}\right)} \loss(\vecW(0))\]
\end{theorem}
The following corollary is now a simple application of the above theorem and \pref{cor:main-infinite-width-ntk-cn}.
\begin{corollary}
\label{cor:train_all_layers-app}
Suppose the conditions in \pref{thm:lee-smooth-act-optimization-app} are satisfied and the width is taken to be a large enough constant (depending on $L,n,\delta$) and further $L = \Theta(\frac{\log(n/\delta)}{\mu})$, then gradient descent with high probability finds an $\epsilon$ suboptimal point in total time $O(\log(1/\epsilon))$.  
\end{corollary}

\begin{remark}
As stated in \pref{thm:lee-smooth-act-optimization-app} the width required could be a very large constant. However, note that we require the depth to be logarithmic in $\frac{1}{\delta}$ for achieving constant condition number. Therefore the exponential in L factors accrued in the analysis of \pref{thm:lee-smooth-act-optimization-app} are actually polynomial in $\frac{1}{\delta}$. Therefore, merging results from \cite{arora2019exact}, we can derive a polynomial in $\frac{1}{\delta}$ upper bound on the width of the network. This matches the best known bounds on the overparameterization while improving the optimization rates exponentially (in $\frac{1}{\delta}$). Further we believe similar results can also be derived for ReLU activations following techniques in \cite{allen2018convergence}. 
\end{remark}

The proofs for the theorems in this section follow easily from our established results and standard arguments from optimization theory. We provide them next for completeness. 

\subsection{Proofs}

We begin by proving simple well-known theorems regarding gradient descent and stochastic gradient descent for linear regression. Consider the following problem 
\[\loss(w) = \frac{1}{n}\sum_{i=1}^{n} \|a_i^{\top}w - y\|^2\]
Let $A$ be the matrix whose rows are $a_i$. We will assume that $\lambda_{\min}(A^{\top}A) > 0$, which in particular implies that $\min_w \loss(w) = 0$.  Lets first consider gradient descent, i.e.
\[w_{t+1} = w_{t} - \eta \nabla \loss(w_t)\]
We have the following well known guarantee \citep{nesterov-lectures}. 
\begin{theorem}
\label{thm:grad-descent}
For gradient descent we have that 
\[\loss(w_t) \leq e^{- \frac{t}{4\kappa(A^{\top}A)}} \loss(w_{0})\]
\end{theorem}
Next we consider the stochastic gradient descent algorithm,
\[w_{t+1} = w_{t} - \eta \tilde{\nabla} \loss(w_t),\]
where $\tilde{\nabla} \loss(w_t)$ is a gradient over a single $a_i$, which is uniformly randomly sampled. Since we have assumed that the loss is 0, even SGD is known to have linear convergence in this setting. Since we did not find a concise proof of this fact anywhere we include it here for completeness. 
\begin{theorem}
\label{thm:std-grad-descent}
Let $\|a_i\|^2 \leq \beta$ for all $i$, then 
stochastic gradient descent produces an $\epsilon-$sub-optimal point in expectation in total number steps bounded by 
\[\frac{8n\beta}{\lambda_{\min}(A^{\top}A)}\cdot \log(1/\epsilon))\]
\end{theorem}
\begin{proof}
\begin{align*}
    E_t[\|w_{t+1} - w^*\|^2] &\leq E_t[\|w_{t} - w^*\|^2] - \eta (w_{t} - w^*)^{\top}E_t[\tilde{\nabla} \loss(w_t)] + \eta^2 E_t[\|\tilde{\nabla} \loss(w_t)\|^2] \\  
    &\leq \|w_{t} - w^*\|^2 - \eta (w_{t} - w^*)^{\top} \nabla \loss(w_t) + \eta^2 \|\tilde{\nabla} \loss(w_t)\|^2 \\
    &\leq \|w_{t} - w^*\|^2 - \eta \loss(w_t) + \eta^2 E_i[(a_i^{\top}w - y_i)^2\|a_i\|^2] \\
    & \leq \|w_{t} - w^*\|^2 - \eta \loss(w_t) + \eta^2\beta \loss(w_t)
\end{align*}
where $E_t$ refers to expectation conditioned on all the randomness till step $t$. Rearranging the above we get,
\[\loss(w_t)(\eta - \eta^2\beta) \leq \|w_{t} - w^*\|^2 - E_t[\|w_{t+1} - w^*\|^2]\]
Summing the above over time $T$ gives us that
\[E\left[\loss\left(\frac{1}{T}\cdot \sum_{t=1}^{T} w_t\right)\right] \leq E\left[\frac{1}{T}\cdot \sum_{t=1}^{T} \loss\left(w_t\right)\right] \leq \frac{E[\|w_{1} - w^*\|^2]}{T(\eta - \eta^2\beta)} \leq \frac{n\cdot E[\loss(w_1)]}{\lambda_{\min}(A^{\top}A)T(\eta - \eta^2\beta)}\]
Setting $\eta$ to $1/(2\beta)$ and $T = 8n\beta/\lambda_{\min}(A^{\top}A)$ we get that
\[E\left[\loss\left(\frac{1}{T}\cdot \sum_{t=1}^{T} w_t\right)\right] \leq \frac{E[\loss(w_1)]}{2}\]
Repeating this process a total of $\log(1/\epsilon)$ times gives us that after $\frac{8n\beta}{\lambda_{\min}(A^{\top}A)}\log(1/\epsilon)$ steps, SGD produces a point which is  $\epsilon$-sub-optimal point in expectation.
\end{proof}

\begin{proof}[Proof of Theorem \ref{thm:top-layer-general-app}]
The statement follows by noticing that at that setting of depth, \pref{cor:main-top-layer-cn} implies that the infinite-width kernel has constant condition number. Now invoking \pref{lem:main-finite-width-top-layer} implies that the finite-width kernel also has a constant condition number. The statement then follows from \pref{thm:grad-descent}.
\end{proof}
\begin{proof}[Proof of \pref{thm:sgd-fixed-bottom-app}]
We wish to invoke \pref{thm:std-grad-descent}. To this end note that, using \pref{lem:main-finite-width-top-layer} and the fact that the diagonal entries are 1 in $\bar{K}$, we get that $\beta \leq 2$ w.h.p. Similarly using \pref{cor:main-top-layer-cn}, we can derive that $\lambda_{\min} = \Omega(n)$ w.h.p. Therefore using \pref{thm:std-grad-descent} we get the required result. 
\end{proof}

\begin{proof}[Proof of \pref{cor:train_all_layers-app}]
The theorem follows by noticing that \pref{cor:main-infinite-width-ntk-cn} implies that at that depth, the condition number of the infinite-width NTK is constant. The statement now follows from \pref{thm:lee-smooth-act-optimization-app}.
\end{proof}

\section{Proofs on SQ Learnability of Random Deep Neural Nets}

In this section we prove Theorem~\ref{thm:sq-lower-bound} regarding SQ learnability of randomly initialized deep neural networks.
Specifically, we prove our result under the assumption that the (normalized) activation $\sigma$ is subgaussian with constant subgaussian norm. In particular we assume that
\begin{align}
\label{eq:activation_subgaussian}
\E_{X \sim \normal(0,1)}[e^{\lambda \sigma(X)}] \leq e^{\lambda^2 \alpha^2/2},
\end{align}
for a constant $\alpha > 0$. Many activations such as the sign, ReLU and tanh satisfy this assumption.

A key component in establishing SQ hardness of learning is to show that given two non-collinear unit length vectors, a randomly initialized network of depth $h$ and sufficiently large width width makes, in expectation, the pair nearly orthogonal. In other words, the magnitude of the expected dot product between any pair decreases exponentially with depth.  While \citet{panigrahy-random-nets} proved the result for sign activations, we prove the statement for more general activations and then use it to establish SQ hardness of learning.
As mentioned in Section~\ref{sec:sq}, we will work with networks that will normalize the output of each layer to unit length via the operation $\Pi: \mathbb{R}^m \rightarrow \mathbb{R}^m$. Then we have the following theorem:
\begin{theorem}
\label{th:dot-product-applications}
Let $\sigma: \reals \rightarrow \reals$ be a non linear activation with $\mu$ being the coefficient of non-linearity as in Definition~\ref{def:coefficient-non-linearity} and satisfying \eqref{eq:activation_subgaussian}. Let $x_i, x_j \in \mathbb{R}^d$ be unit length vectors such that $|x_i \cdot x_j| \leq 1-\delta$.  Define $\Phi_{\vecW}(x) := \tfrac{1}{\sqrt{m}}\Pi(\sigma(W_L \tfrac{1}{\sqrt{m}} \Pi(\sigma(W_{L-1} \cdots \tfrac{1}{\sqrt{m}} \Pi(\sigma(W_1x)\cdots)))$, where each column of $W_1$ is sampled from $\mathcal{N}(0, I_{d \times d})$ and each column of $W_i$ is sampled from $\mathcal{N}(0, I_{m \times m})$ for $i > 1$. Furthermore, the operation $\Pi$ normalizes the output of each layer to unit length. 
Let  $m > c_1 \frac{L}{\mu^2 \delta^2}$ for a universal constant $c_1 > 0$ and for $h \in [1,L]$ define $\rho_h$ be the dot product obtained by taking the representation of $x_i, x_j$ at depth $h$ of the network defined above. Then for any $h> 1$, it holds that
\begin{align*}
\big|\E[\rho_{L_0(\delta) + h}] \big| &\leq e^{-\Omega(h)} + L e^{-\Omega(L)}.
\end{align*}
where $L_0(\delta) = c_1 \frac{\log(\frac{1}{\delta})}{\mu}$ and $c_1 > 0$ is a universal constant.
\end{theorem}
While the above theorem is not a black blox application of our main result~(\pref{thm:main-infinite-width-top-layer}) since careful concentration arguments are required due to finite width, the calculations are of a similar flavor. 

We now show how the above theorem can be used to generalize the SQ lower bound of \cite{panigrahy-random-nets}. Before describing our results, we recall that in the SQ model \citep{kearns1998efficient} the learning algorithm does not have access to a labeled training set. Instead, for a given target function $f$ and a distribution $D$ over $\mathbb{R}^d$, the algorithm has access to a query oracle $SQ_{f,D}(\psi, \tau)$. The oracle takes as input a query function $\psi$, and outputs a value $v$ such that $|\E_D[\psi(x,f(x))] - v| \leq \tau$. The goal of the algorithm is to use the query algorithm to output a function $g$ that $\epsilon$ approximates $f$, i.e., $Pr_D[g(x) f(x)] \geq \epsilon$, for a given $\epsilon > 0$. 

\citep{panigrahy-random-nets} established an SQ learnability lower bound for a subclass $\mathcal{F}$ of neural networks with the property that a randomly initialized neural network falls in $\mathcal{F}$ with high probability. This however only establishes that the class $\mathcal{F}$ is hard to SQ learn as opposed to showing that a randomly initialized neural network is hard to learn. Furthermore, the lower bound only applies to networks with sign activations. We now show how to generalize their result in two ways: (a) we allow arbitrary activations satisfying \eqref{eq:activation_subgaussian}, and (b) our lower bound shows that a randomly initialized network is hard to learn in the SQ model with constant probability. We achieve the stronger lower bound by carefully adapting the lower bound technique of \citet{bshouty2002using}. 

In our context we will fix a non linear activation $\sigma: \reals \rightarrow \reals$ and let the target be of the form $\text{sgn}(f_{\vecW}(x))$ where,
$$
f_{\vecW}(x) =  \big(v \cdot \tfrac{1}{\sqrt{m}}\Pi(\sigma(W_L \tfrac{1}{\sqrt{m}} \Pi(\sigma(W_{L-1} \cdots \tfrac{1}{\sqrt{m}} \Pi(\sigma(W_1x)\cdots))) \big) 
$$
where each column of $W_1$ is sampled from $\mathcal{N}(0, I_{d \times d})$ and $v$ and each column of $W_i$ is sampled from $\mathcal{N}(0, I_{m \times m})$ for $i > 1$. Furthermore we will use the depth $L$ and the dimensionality $d$ to parameterize the bit complexity of the network description. We say that an algorithm $(p(d,L), r(d,L), q(d,L))$-SQ learns $\text{sgn}(f_{\vecW}(x))$ if with probability at least $1/2$ over the randomness in $\vec{W}$, the algorithm makes at most $p(d,L)$ queries to the SQ oracle for $\text{sgn}(f_{\vecW}(x))$, receives responses from the oracle up to tolerance $\tau = 1/r(d,L)$ and outputs a $g$ that $\epsilon = 1/q(d,L)$-approximates $f$. Furthermore it is the case that each query function $\psi$ used by the algorithm can be evaluated in time $q(d,L)$.

Then we have the following theorem, that is a more formal restatement of \pref{thm:sq-lower-bound} and extends the result of \cite{panigrahy-random-nets}.
\begin{theorem}
 \label{thm:sq-lower-bound-app}
Fix any non linear activation $\sigma$ with the coefficient of non-linearity being $\mu$ that satisfies \eqref{eq:activation_subgaussian}. Any algorithm that $(p(d,L), poly(d,L), poly(d,L))$-SQ learns the random depth $L$ networks as defined above with width $m = \Omega(L\frac{\mu^2}{\delta^2})$ must satisfy $p(d,L) \geq e^{\Omega(L)}$.
\end{theorem}  


\label{app:sq-app}
\begin{proof}[Proof of Theorem~\ref{th:dot-product-applications}]
We use the following notation in the proof. Given input $x_i$, we denote $x_i^{(h)}$ to be the representation obtained at depth $h$ of the network and $\hat{x}_i^{(h)}$ to be the corresponding normalized input. Recall that we are normalizing the output of each layer to be unit length. Similarly, given $x_i, x_j$, we denote by $\hat{\rho}_h = \hat{x}_i^{(h)} \cdot \hat{x}_j^{(h)}$ and $\rho_h = x_i^{(h)} \cdot x_j^{(h)}$. Next we have that conditioned on $x^{(h-1)}_i$ and $x^{(h-1)}_j$,
\begin{align*}
\|x_i^{(h)}\|^2 &= \frac{1}{m} \sum_{j=1}^m \sigma^2(w_j \cdot \hat{x}^{(h-1)})
\end{align*}
where $w_j \sim N(0,I)$ and $\E[\|x_i^{(h)}\|^2] = 1$. Furthermore, since $\sigma(w_j \cdot \hat{x}^{(h-1)})$ is a subgaussian random variable with constant subgaussian norm, $\|x_i^{(h)}\|^2$ is a sum of subexpoential random variables. By Bernstein's inequality for subexponential random variables \citep{vershynin2018high} we have that for a universal constant $c > 0$,
\begin{align}
\label{sq:prob-bound-1}
Pr(\big| \|x_i^{(h)}\|^2 - 1\big| > t) &\leq 2e^{-c \min \big({mt^2}, mt \big)}.
\end{align}
Similarly we have that
\begin{align*}
\rho_h &= \frac{1}{m} \sum_{j=1}^m \sigma(w_j \cdot \hat{x_i}^{(h-1)}) \sigma(w_j \cdot \hat{x_j}^{(h-1)})
\end{align*}
with $\E[\rho_h] = \hat{\sigma}(\hat{\rho}_{h-1})$. Noting that product of subgaussian random variables is subexponential and again applying Bernstein's inequality for subexponential random variables we get that
\begin{align}
\label{sq:prob-bound-2}
Pr(\big| \rho_h - \hat{\sigma}(\rho_{h-1})\big| > t) &\leq 2e^{-c \min \big({mt^2}, mt \big)}.
\end{align}
Next, we will use \eqref{sq:prob-bound-1} and \eqref{sq:prob-bound-2} to argue that with high probability $\hat{\rho}_h$ remains close to $\hat{\sigma}(\hat{\rho}_{h-1})$. For suitable constant $\epsilon < 1$ to be chosen later, we have that
\begin{align*}
Pr(|\hat{\rho}_h - \hat{\sigma}(\hat{\rho}_{h-1})| > t) &= Pr(\Big|\frac{\rho_h}{\|x_i^{(h)}\| \|x_j^{(h)}\|} - \hat{\sigma}(\hat{\rho}_{h-1}) \Big| > t)\\
&\leq Pr(\Big|\frac{\rho_h - \hat{\sigma}(\hat{\rho}_{h-1})}{\|x_i^{(h)}\| \|x_j^{(h)}\|} \Big| > t) + Pr(\Big|\hat{\sigma}(\hat{\rho}_{h-1}) \big(\frac{1}{\|x^{(h)}_i\| \|x^{(h)}_j\|} - 1 \big) \Big| > t)\\
&\leq Pr(\|x_i^{(h)}\| \|x_j^{(h)}\| > (1+\epsilon)^2) + Pr(\Big|\rho_h - \hat{\sigma}(\hat{\rho}_{h-1})\Big| > t(1+\epsilon)^2)\\
&+ Pr(\Big| \|x^{(h)}_i\| \|x^{(h)}_j\| - 1 \Big| > \frac{2t}{\hat{\sigma}(\hat{\rho}_{h-1})}). 
\end{align*}
Noticing that $\hat{\sigma}(\hat{\rho}_{h-1}) \leq 1-(1+\frac{\mu}{2})\delta$, and using \eqref{sq:prob-bound-1} and \eqref{sq:prob-bound-2}, we get that
\begin{align*}
Pr(|\hat{\rho}_h - \hat{\sigma}(\hat{\rho}_{h-1})| > t) &\leq 2 \Big( e^{-cm\epsilon^2} + e^{-c m (1+\epsilon)^4 t^2} + e^{-\frac{mt}{2(1-(1+\mu/2)\delta)}}\Big).
\end{align*}
Setting $t = \delta \mu/4$ and $\epsilon$ to be a small enough constant we get that 
\begin{align*}
Pr(|\hat{\rho}_h - \hat{\sigma}(\hat{\rho}_{h-1})| > \frac{\delta \mu}{4}) &\leq 2 \Big( e^{-\Omega(m)} + e^{-\Omega(m \mu^2 \delta^2 )} + e^{- \Omega(m \delta \mu)}\Big).
\end{align*}
Setting $m \geq c_1 \frac{L}{\mu^2 \delta^2}$ and using a union bound over all layers we get that with probability at least $1 - L e^{-\Omega(L)}$, the updates of $\hat{\rho}_h$ will approximately satisfy the ideal updates from \pref{thm:main-infinite-width-top-layer} and as a result, for a constant $c_1 > 0$, after $L_0(\delta) = c_1 \frac{\log(1/\delta)}{\mu}$ depth, with high probability, $\hat{\rho}_h$~(and $\rho_h$) will fall below $1/4$ and will continue to be below $1/2$ for all $L$. Define $G$ to be the intersection of above good event and that $\rho_h \in [(1-\epsilon) \hat{\rho}_h, (1+\epsilon) \hat{\rho}_h]$ for all $h \in [L]$. Then we know that $P(G) \geq 1-2Le^{-\Omega(L)}$.
Conditioned on this good event and using \pref{lem:one-layer} we have that for $h > L_0(\delta)$, 
\begin{align*}
\big|\E[\rho_{h} | G, \rho_{h-1}] \big| &= \big|\hat{\sigma}(\hat{\rho}_{h-1}) \big|\\
&\leq (1-\frac{\mu}{2})|\hat{\rho}_{h-1}|\\
&\leq (1+\epsilon)(1-\frac{\mu}{2}) |\rho_{h-1}|\\
&\leq (1-\frac{\mu}{4}) |\rho_{h-1}|
\end{align*}
for a small enough constant $\epsilon$. Hence we get that for $h > L_0(\delta)$, $|\E[\rho_h | G]| \leq e^{-\Omega(h)}$. Finally notice that
$$
\E[\rho_h] = P(G) \E[\rho_h | G] + P(\bar{G})\E[\rho_h | \bar{G}]
$$
Combined with the probability of the good event and noticing that $\hat{\sigma}(\hat{\rho}_h)$ is always bounded, we get that
$$
|\E[\rho_{L_0(\delta) + h}]| \leq e^{-\Omega(h)} + Le^{-\Omega(L)}. 
$$

\end{proof}
\begin{proof}[Proof of Theorem~\ref{thm:sq-lower-bound} (same as Theorem~\ref{thm:sq-lower-bound-app})]
We will consider a randomly initialized deep neural network defined as 
$$
f_{\vecW}(x) = v \cdot \Pi(\sigma(W_L \tfrac{1}{\sqrt{m}} \Pi(\sigma(W_{L-1} \cdots \tfrac{1}{\sqrt{m}} \Pi(\sigma(W_1x)\cdots))) \big) 
$$
We consider a distribution $D$ that is the uniform distribution over a set $\mathcal{S}$ that consists of half of the inputs in $\{+1,-1\}^d$ thereby ensuring that no two inputs are collinear. In particular, one can take the set of all $2^{d-1}$ inputs that fall on one side of a fixed halfspace. The first step in the analysis is to show that $f = \text{sgn}(f_{\vecW})$ is uncorrelated with any fixed function $g: \mathbb{R}^d \rightarrow [-1,1]$. In particular we have that
\begin{align*}
\E_{\vecW}[\E_x [g(x)f(x)]^2] &= \frac{1}{|S|^2} \Big( \sum_x \E_{\vecW}[g^2(x)f^2_{\vecW}(x))] + 2\sum_{x \neq y} \E_{\vecW}[g(x)g(y)\text{sgn}(f_{\vecW}(x))\text{sgn}(f_{\vecW}(y))] \Big)\\
&= \frac{1}{|S|^2} \Big( \sum_x \E_{\vecW} [g^2(x)]  + 2\sum_{x \neq y} g(x)g(y) \E_{\vecW} [\text{sgn}(f_{\vecW}(x))\text{sgn}(f_{\vecW}(y))] \Big)\\
&\leq \frac{1}{|S|^2} \Big( \sum_x \E_{\vecW} [g^2(x)]  + 2\sum_{x \neq y} |g(x)||g(y)| |\E_{\vecW} [\Phi_{\vecW}(x) \cdot \Phi_{\vecW}(y)]| \Big)\\
\end{align*}
Next, Theorem~\ref{th:dot-product-applications} implies that 
$$
\E_{\vecW} |\Phi_{\vecW}(x) \cdot \Phi_{\vecW}(y)| \leq L e^{-\Omega(L)},
$$

Substituting above and noticing that $g(x) \in [-1,1]$ we have
\begin{align}
\E_{\vecW}[\E_x [g(x)f(x)]^2] &\leq \frac{1}{|S|^2} \Big( \sum_x 1  + 2\sum_{x \neq y}  Le^{-\Omega(L)}] \Big). \nonumber \\
&\leq Le^{-\Omega(L)} \label{eq:exp-small-corr}.
\end{align}
Next assume that there exists an algorithm $\mathcal{A}$ that makes $p(d,L)$ queries of tolerance $r(d,L)$ to an SQ oracle for a random function $\text{sgn}(f_{\vecW}(x))$ as defined above, and with probability at least half (over the randomness of the algorithm and the random draw of the function), outputs a function $g$ such that
$$
\E_{D}[\text{sgn}(f_{\vecW}(x)) h(x)] \geq \frac{1}{q(d,L)}.
$$
Here we assume that both $r(d,L)$ and $q(d,L)$ are polynomial in $d$ and $L$. To get a contradiction we will use the technique from the work of~\cite{bshouty2002using}~(see Theorem 31). As a first step, since we are in the case of learning with respect to a fixed distribution, from the work of \cite{bshouty2002using} it follows that we can, without loss of generality, assume that the statistical queries $\psi(x,\text{sgn}(f_{\vecW}(x)))$ used by the algorithm are correlation queries, i.e. $\psi(x,f_{\vecW}(x)) = \E_D[\text{sgn}(f_{\vecW}(x))g(x)]$. Next we simulate the algorithm $\mathcal{A}$ and each time the algorithm makes a statistical query $\E_D[\text{sgn}(f_{\vecW}(x))g_i(x)]$, we add $g_i$ to a set $\mathcal{H}$. Finally, if the algorithm outputs a hypothesis $h$ at the end, we add $h$ to $\mathcal{H}$ as well. Notice that if $\mathcal{A}$ makes $p(d,L)$ queries then $|\mathcal{H}| \leq p(d,L) + 1$. Next from \eqref{eq:exp-small-corr} and a union bound over $\mathcal{H}$ we can say that
\begin{multline*}
\label{eq:exp-small-corr-all}
\Pr_{\vecW} \left[ \exists h \in \mathcal{H}: \E_D[(\text{sgn}(f_{\vecW}(x)) h(x))]^2 > \frac{1}{\max(q^2(d,L), r^2(d,L))} \right] \leq \\ O\Big({\max(q^2(d,L), r^2(d,L)) (p(d,L) + 1) e^{-\Omega(L)} }\Big).
\end{multline*}
Since the correlation of each function in $\mathcal{H}$ with $\text{sgn}(f_{\vecW}(x))$ is at most $1/r(d,L)$, a zero answer to every query asked by the algorithm is a valid output of the SQ oracle, and hence with probability at least $1/2$, the algorithm must output a function in $\mathcal{H}$ that is correlated with $\text{sgn}(f_{\vecW}(x))$. In other words, we have that
\begin{align*}
\Pr_{\vecW} \big[ \exists h \in \mathcal{H}: \E_D[(\text{sgn}(f_{\vecW}(x)) h(x))]^2 > \frac{1}{\max(q^2(d,L), r^2(d,L))} \big] &\geq \frac 1 2.
\end{align*}
From the above we get that 
\begin{align*}
\max(q^2(d,L), r^2(d,L)) (p(d,L) + 1) & \geq e^{\Omega(L)}.
\end{align*}

\end{proof}


\section{Interpolation analysis}
\label{app:interpolation}
\newcommand{\beps}{\mathbf{\epsilon}}


We need a bit more notation for the analysis. We denote by the infinite matrix $X$ the linear map from $\RKHS \rightarrow \reals^n$ corresponding to the inputs $x_1, x_2, \ldots, x_n$, so that for any $v \in \RKHS$, $Xv \in \reals^n$ has $i$th component $v^\top\Phi(x_i)$. Note that $XX^\top = \bar{K}$, the kernel matrix for the training data defined by $\bar{\kernel}$. We denote by $\yvec$ the vector $\langle y_1, y_2, \ldots, y_n \rangle^\top \in \reals^n$, and by $\Sigma = \E_{(x, y)}[\Phi(x)\Phi(x)^\top]$ the data covariance matrix.
If $\bar{K}$ is non-singular, then the linear predictor $v_S = X^\top \bar{K}^{-1}\yvec$ interpolates on $S$, i.e., $v_S^\top \Phi(x_i) = y_i$ for all $i \in [n]$, and indeed, is the {\em minimum norm} interpolating linear predictor. 

The first step in the analysis is the following bound on $\|\Sigma\|$:
\begin{lemma}
\label{lem:Sigma-bound}
For any positive integer $N$ and $\gamma \in (0, \nicefrac{1}{2})$, if the number of hidden layers $L \geq  \lceil \frac{\log(N)}{-\log(1 - \frac{\mu}{2})}\rceil + L_0(\Delta(N, \gamma))$, then $\|\Sigma\| \leq \frac{8\ln(N)}{N}$.
\end{lemma}
\begin{proof}
Let $T = \{x'_1, x'_2, \ldots, x'_{N}\}$ be a sample set of size $N$ drawn i.i.d. from the marginal distribution of $\mathcal{D}$ over the $x$-coordinate. Let $X'$ denote by the infinite matrix corresponding to the linear map from $\RKHS \rightarrow \reals^N$ such that for any $v \in \RKHS$, $X'v \in \reals^m$ has $i$th component $v^\top\Phi(x'_i)$. Let $X'{X'}^\top = \bar{K}'$, the top layer kernel matrix for the training data defined by $\bar{\kernel}$.

Then by the definition of $\Delta$, with probability at least $1 - \gamma$, \pref{ass:separation} holds for $T$ with $\delta = \Delta(N, \gamma)$. Conditioned on this \pref{ass:separation} holding, \pref{thm:main-infinite-width-top-layer} (part 1) implies that $\|\bar{K}'\| \leq 2$ since $L \geq \lceil \frac{\log(N)}{-\log(1 - \frac{\mu}{2})}\rceil + L_0(\Delta(N, \gamma))$. Thus, $\|{X'}^\top X'\| = \|X'{X'}^\top\| \leq 2$, which implies that $\Pr_T[\|{X'}^\top X'\| \leq 2] \geq 1 - \gamma$.

Note that for any $x \in \sphere^{d-1}$, we have $\|\Phi(x)\|^2 = \bar{\kernel}(x, x) = \dualact^{(L)}(1) = 1$. Thus $\|\Phi(x)\Phi(x)^\top\| = 1$, $\text{Tr}(\Sigma) \leq 1$, $\text{intdim}(\Sigma) := \frac{\text{Tr}(\Sigma)}{\|\Sigma\|} \leq \frac{1}{\|\Sigma\|}$, and $(\Phi(x)\Phi(x)^\top)^2 = \Phi(x)\Phi(x)^\top$. Thus, Theorem 7.7.1 in \citep{tropp} and some simple calculations imply that for $\ell := \ln(\nicefrac{8}{\|\Sigma\|})$, we have
\[\Pr_T\left[\|\tfrac{1}{N}{X'}^\top X' - \Sigma\| > \tfrac{2\ell}{3N} + \sqrt{\tfrac{2\ell\|\Sigma\|}{N}}\right] \leq \tfrac{1}{2}.\]
By a union bound, we have
\[\Pr_T\left[\|{X'}^\top X'\| \leq 2 \text{ and } \|\tfrac{1}{N}{X'}^\top X' - \Sigma\| \leq \tfrac{2\ell}{3N} + \sqrt{\tfrac{2\ell\|\Sigma\|}{N}}\right] \geq \tfrac{1}{2} - \gamma > 0.\]
This implies that $\|\Sigma\| \leq \tfrac{2\ell}{N} + \sqrt{\tfrac{2\ell\|\Sigma\|}{N}} \leq \tfrac{2\ell}{N} + \tfrac{\nicefrac{2\ell}{N} + \|\Sigma\|}{2} \Rightarrow \|\Sigma\| \leq \frac{6\ell}{N} \Rightarrow \|\Sigma\| \leq \frac{8 \ln(N)}{N}$ using the fact that $\ell = \ln(\nicefrac{8}{\|\Sigma\|})$.
\end{proof}
The following lemma is a standard calculation that is a slight generalization of a similar statement\footnote{Here, we don't need the $\E[y | \Phi(x)]$ to be a linear function of $\Phi(x)$.} in \citep{bartlett2019benign}:
\begin{lemma}
\label{lem:bias-variance}
The excess risk of the minimum norm estimator satisfies
\[ \E_{(x, y)}[(y - v_S^\top \Phi(x))^2] - \E_{(x, y)}[(y - {v^*}^\top \Phi(x))^2] \leq 2{v^*}^\top Bv^* + 2\beps^\top C\beps,\]
where $\beps = \yvec - Xv^*$, $B = (I - X^\top\bar{K}^{-1}X)\Sigma(I - X^\top\bar{K}^{-1}X)$, and $C = \bar{K}^{-1}X\Sigma X^\top \bar{K}^{-1}$.
\end{lemma}
\begin{proof}
Since $v^*$ is a minimizer of $\E_{(x, y)}[(y - v^\top\Phi(x))^2]$, we have $\nabla_v \E_{(x, y)}[(y - {v^*}^\top\Phi(x))^2] = 0$, which implies that $\E_{(x, y)}[(y - v^\top\Phi(x))\Phi(x)] = 0$. Using this fact, we have
\begin{align*}
\E_{(x, y)}[(y - v_S^\top \Phi(x))^2] &= \E_{(x, y)}[(y - {v^*}^\top\Phi(x) + (v_S - v^*)^\top \Phi(x))^2]\\
&= \E_{(x, y)}[(y - {v^*}^\top\Phi(x))^2] + 2(v_S - v^*)^\top \E_{(x, y)}[(y - {v^*}^\top\Phi(x))\Phi(x)] \\
& \quad + \E_{(x, y)}[((v_S - v^*)^\top \Phi(x))^2] \\
&= \E_{(x, y)}[(y - {v^*}^\top\Phi(x))^2] + \E_{(x, y)}[((v_S - v^*)^\top \Phi(x))^2].
\end{align*}
Using this fact, and that $v_S = X^\top\bar{K}^{-1}\yvec = X^\top\bar{K}^{-1}(Xv^* + \beps)$, we get that the excess risk equals 
\begin{align*}
\E_{(x, y)}[((v_S - v^*)^\top \Phi(x))^2] &= 	\E_{(x, y)}[((X^\top\bar{K}^{-1}(Xv^* + \beps) - v^*)^\top \Phi(x))^2] \\
&= \E_{(x, y)}[(X^\top\bar{K}^{-1}X - I)v^* + X^\top\bar{K}^{-1}\beps)^\top \Phi(x))^2]\\
&\leq 2\E_{(x, y)}[(X^\top\bar{K}^{-1}X - I)v^*)^\top \Phi(x))^2] + 2\E_{(x, y)}[(X^\top\bar{K}^{-1}\beps)^\top \Phi(x))^2] \\
&= 2{v^*}^\top Bv^* + 2\beps^\top C\beps.
\end{align*}
The last equality uses the fact that for any $v \in \RKHS$, we have 
\[\E_{(x, y)}[(v^\top \Phi(x))^2] = \E_{(x, y)}[v^\top \Phi(x)\Phi(x)^\top v] = v^\top \Sigma v.\]
\end{proof}

We can now prove \pref{thm:interpolation}:
\begin{proof}[(\pref{thm:interpolation})]
First, as in the proof of \pref{lem:Sigma-bound}, by the definition of $\Delta$, with probability at least $1 - \gamma$, \pref{ass:separation} holds for $S$ with $\delta = \Delta(n^2, \gamma)$. Conditioned on this \pref{ass:separation} holding, \pref{thm:main-infinite-width-top-layer} (part 1) implies that $\lambda_{\min}(\bar{K}) \geq \nicefrac{1}{2}$ since $L = \lceil \frac{\log(n^2)}{-\log(1 - \frac{\mu}{2})}\rceil + L_0(\Delta(n^2, \gamma))$. Thus, with probability at least $1-\gamma$ over the choice of $S$, $\bar{K}$ is non-singular, and hence there exists an interpolating linear predictor.

We now bound the excess risk via \pref{lem:bias-variance}. We first analyze the ${v^*}^\top Bv^*$ part of the bound. Note that $(I - X^\top\bar{K}^{-1}X)$ is the matrix corresponding to the projection on to the orthogonal complement of the row space of $X$, and so $\|I - X^\top\bar{K}^{-1}X\| \leq 1$. Thus, 
$\|B\| \leq \|(I - X^\top\bar{K}^{-1}X)\|\|\Sigma\|\|(I - X^\top\bar{K}^{-1}X)\| \leq \|\Sigma\|$, and so
\begin{equation}
\label{eq:bias}
{v^*}^\top Bv^* \leq \|\Sigma\|\|v^*\|^2.
\end{equation}

Next, we turn to bounding the $\beps^\top C\beps$ part. We have 
\[\beps^\top C\beps \leq \|C\|\|\beps\|^2 = \|C\|\|\yvec - Xv^*\|^2 \leq n(1 + \|v^*\|^2)\|C\|,\]
since for all $i$, $y_i \in [-1, 1]$ and $\|\Phi(x_i)\| = 1$. So now we need to bound $\|C\|$. We have $C \preceq \bar{K}^{-1}X(\|\Sigma\|I) X^\top \bar{K}^{-1} = \|\Sigma\|\bar{K}^{-1}$, so $\|C\| \leq \frac{\|\Sigma\|}{\lambda_{\min}(\bar{K})}$. As described in the beginning of this proof, we have $\Pr_S[\lambda_{\min}(\bar{K}) \geq \nicefrac{1}{2}] \geq 1 - \gamma$. This implies that with probability at least $1-\gamma$ over the choice of $S$, we have 
\begin{equation}
\label{eq:variance}
\beps^\top C\beps \leq 2n(1 + \|v^*\|^2)\|\Sigma\|.
\end{equation}

Finally, note that the setting $L = \lceil \frac{\log(n^2)}{-\log(1 - \frac{\mu}{2})}\rceil + L_0(\Delta(n^2, \gamma))$ implies that \pref{lem:Sigma-bound} holds for $N = n^2$. So, $\|\Sigma\| \leq \frac{16\ln(n)}{n^2}$. Plugging this bound into \pref{eq:bias} and \pref{eq:variance}, and using \pref{lem:bias-variance}, we get the bound stated in the theorem.
\end{proof}

\section{Conditioning for One Layer ReLU Networks}
\label{app:relu_cond}
In this section we establish that given a set of non-collinear points in $\mathbb{R}^d$, a sufficiently wide one layer neural network with (un-normalized) relu activations leads to a non-singular gram matrix at the output layer. 
\begin{theorem}
\label{thm:one-layer-conditioning-sgn}
Let $S = \{x_1, x_2, \dots, x_n\}$ be a set of $n$ vectors in $\mathbb{R}^d$ such that each $x_i$ is a unit length vector and for each $i \neq j$, it holds that $|x_i \cdot x_j| \leq 1-\delta$. 
Let $w_1, w_2, \dots, w_m$ be vectors drawn i.i.d. from $\mathcal{N}(0,I_{d \times d})$ and consider the feature mapping $\Phi: \mathbb{R}^n \to \mathbb{R}^m$ defined as 
$$
\Phi(x) = \frac{1}{\sqrt{m}}(\sigma(w_1 \cdot x), \sigma(w_2 \cdot x), \dots, \sigma(w_m \cdot x)),
$$
where $\sigma: \mathbb{R} \rightarrow \mathbb{R}$ is the (un-normalized) relu activation defined as $\sigma(x) = \max(x,0)$. Let $\Phi(X)$ be the corresponding $m \times n$ data matrix obtained by applying $\Phi$ to points in $S$, i.e., column $i$ of $\Phi(X)$ equals $\Phi(x_i)$. There exists a universal constant $c > 0$, such that if $m \geq c \frac{n^8\log(n/\delta)}{{\delta}^3}$, then w.p. at least $1-1/\text{poly}(n)$, we have that $\sigma_{\min}(\Phi(X)^T \Phi(X)) \geq \Omega(\frac{{\delta^{3/2}}}{n^3})$.
\end{theorem}

\noindent \textbf{Note:} The recent work of \citet{PSG} implies a stronger bound and hence supersedes the result above. In particular, an implication of Theorem 4 in \citet{PSG} is that under the same setting as in the Theorem above, if $m \geq \frac{n^4 \log n}{\delta^3}$, then with probability at least $1-e^{-\tilde{\Omega}\big( m\delta^3 n^{-2} \log^{-3} n\big)}$, we have that $\sigma_{\min}(\Phi(X)^T \Phi(X)) \geq \Omega(\frac{{\delta^{3/2}}}{\text{polylog}(n)})$.

\begin{proof}[Proof of Theorem~\ref{thm:one-layer-conditioning-sgn}]
The proof is a modification of the gradient lower bound argument as detailed in the proof of Lemma 9.3 of \citet{allen2018convergence}. We will show that $\sigma_{\min}(\Phi(X)) \geq \frac{\delta^{\frac 3 4}}{2000 n^{3/2}}$. This will imply the claim of the Theorem. We will first show that for a fixed $\alpha \in \mathbb{R}^n$, such that $\|\alpha\|=1$, $\|\Phi(X)\alpha\|$ is large. Then we will complete the argument using a union bound over an appropriate net for unit length vectors in $\mathbb{R}^n$. We have that 
$$
\Phi(X)\alpha = \sum_{i=1}^n \alpha_i \Phi(x_i)
$$
and hence
$$
\|\Phi(X)\alpha\|^2 = \frac{1}{m}\sum_{j=1}^m (\sum_{i=1}^n \alpha_i h_j(x_i))^2,
$$
where $h_j(x_i) = \sigma(w_j \cdot x_i)$.
Next, fix a particular $j \in [m]$ and let 
$$
T_j = \sum_{i=1}^n \alpha_i h_j(x_i).
$$
We will first show that with non-trivial probability $T_j$ is large. Let $i^* \in [n]$ be such that $|\alpha_{i^*}| \geq \frac{1}{\sqrt{n}}$, with ties broken arbitrarily. Next, we will write
\begin{align*}
    w_j &= \Big(\sqrt{1-\theta^2}z_1 x_{i^*} + g \Big) + \Big(\theta z_2 x_{i^*}\Big)\\
    &= w_{j,1} + w_{j,2}
\end{align*}
Here we pick $\theta = \frac{\sqrt{\delta}}{5n}$ and $z_1, z_2$ are independent $\mathcal{N}(0,1)$ Gaussians and $g$ is a standard $d$ dimensional Gaussians orthogonal to $x_{i^*}$. Next, define $G_j$ to be the following good event
$$
G_j = \mathds{1}\Big(|w_{j,1}\cdot x_{i^*}| \leq \frac{\sqrt{\delta}}{10n} \land \forall i \neq i^*, |w_{j,1}\cdot x_{i}| > \frac{\sqrt{\delta}}{4n} \Big).
$$
Next, we have that $P_{w_{j,1}}(G_j) \geq \frac{\sqrt{\delta}}{50n}$. This is established in Lemma~\ref{lem:conditioning-helper} at the end of the section. Conditioning on $G_j$, i.e., fixing the randomness in $w_{j,1}$, we notice that 
$$|w_{j,2}\cdot x_{i^*}|
= \theta|z_2|$$ and for any $i \neq i^*$, 
$$
|w_{j,2} \cdot x_i| \leq \theta|z_2|.
$$
Since $\theta z_2$ is a standard Gaussian with variance $\theta^2$, we have that the event $E:\{\theta|z_2| \in [\frac{\sqrt{\delta}}{9n}, \frac{\sqrt{\delta}}{5n}]\}$ holds with constant probability, i.e., 
$$
P(E)\geq 0.2.
$$
Now conditioned on $G_j \cap E$, we have that fixing the randomness in $w_{j,1}$ fixes the sign of $\sum_{i \neq i^*}\alpha_i \sigma(w_j \cdot x_i)$. Furthermore, after fixing the randomness in $w_{j,1}$, there is still a probability of $0.5$ over the randomness in $w_{j,2}$ that $\alpha_{i^*} \sigma(w_j \cdot x_{i^*})$ matches that of $\alpha_{i^*} \sigma(w_{j,2} \cdot x_{i^*})$. Combining everything, we get that with probability at least $\frac{\sqrt{\delta}}{500n}$~(over $w_j$), it holds that $T^2_j \geq \frac{\delta}{81 n^2}$. 

Next define $B$ to be the event that for all $j$, $|T_j| \leq 100 \sqrt{n} \sqrt{\log n \log m}$. It is easy to see that $B$ holds with probability at least $1-1/poly(n)$, and hence when conditioned on $B$, we also have that $|T^2_j| \geq \frac{\delta}{81 n^2}$ with probability at least $\frac{\sqrt{\delta}}{500n}$. Next we will argue that when conditioned on $B$, $\|\Phi(X) \alpha\|^2$ is large except with exponentially small probability. Combined with the fact that $B$ happens with high probability, this will imply that over the randomness in $w_1, \dots, w_m$, $\|\Phi(X)\alpha\|$ is large with high probability.

When conditioned on $B$, $\|\Phi(X)\alpha\|^2$ is an average of $m$ independent random variables, each bounded in $[0, 100^2 n \log n \log m]$ and that 
$$
\E[\|\Phi(X)\alpha\|^2] \geq \frac{\delta^{3/2}}{81000 n^3}.
$$
Hence from Chernoff bound and the fact that $B$ holds with high probability we get that with probability at least $1-e^{\Omega(-\frac{\delta^3 m}{n^7 \log n \log m})}$, $\|X\alpha\|^2 \geq \Omega(\frac{\delta^{3/2}}{n^3})$.

Having argued the bound for a fixed $\alpha$, we now consider an appropriate net over unit length vectors in $\mathbb{R}^n$ to argue that over all $\alpha$, $\|\Phi(X)\alpha\|$ is large. In particular, consider an $\epsilon$-net of the unit sphere with $\epsilon = \frac{{\delta}^{3/4}}{2000 n^{5/2} \sqrt{m} \sqrt{\log n \log m}}$. The size of such a net is at most $(3/\epsilon)^n$. Hence, we get that with probability at least $1-e^{-\Omega(\frac{\delta^3 m}{n^7 \log m \log n})} e^{n\log(3/\epsilon)}$, for any vector $\alpha$ in the net, we have $\|\Phi(X)\alpha\| \geq \frac{{\delta^{3/4}}}{1000 n^{3/2}}$. This in turn implies that with the same probability, for any $\alpha$ on the unit sphere with $\hat{\alpha}$ being its closest vector in the net, we have
\begin{align*}
    \|\Phi(X)\alpha\| &\geq \|\Phi(X)\hat{\alpha}\| - \epsilon \|\Phi(X)\|\\
    &\geq \frac{{\delta^{3/4}}}{1000n^{3/2}} -\epsilon \|\Phi(X)\|. \\
\end{align*}
The bound then follows from noticing that via standard Gaussian concentration we have that with probability at least $1-1/poly(n)$, $\|\Phi(X)\| \leq O(\sqrt{m n \log m \log n})$.

\end{proof}

\begin{lemma}
\label{lem:conditioning-helper}
Let $x_1, x_2, \dots, x_n$ be unit length vectors in $\mathbb{R}^m$, where $|x_i \cdot x_j| \leq 1-\delta$ for $i \neq j$. Let 
$$
w = \sqrt{1-\theta^2}zx_1 + g
$$
where $z$ is $\mathcal{N}(0,1)$, $g$ is a standard $d$ dimensional Gaussian orthogonal to $x_1$, and $\theta = {\delta^{1/4}}/(5n)$. Define $G$ to be the event
$$
G = \mathds{1}\Big(|w\cdot x_{1}| \leq \frac{\sqrt{\delta}}{10n} \land \forall i \neq 1, |w\cdot x_{i}| > \frac{\sqrt{\delta}}{4n} \Big).
$$
Then it holds that 
$$
P_w(G_j) \geq \frac{\sqrt{\delta}}{50n}.
$$
\end{lemma}
\begin{proof}
We have that $w\cdot x_1$ is $\mathcal{N}(0,(1-\theta^2))$ and also that $\theta \leq 1/5$. Hence, we have that
\begin{equation}
\label{eq:helper_1}
P(|w \cdot x_1| < \frac{\sqrt{\delta}}{10n}) \geq \frac{\sqrt{\delta}}{25n}.
\end{equation}
For a fixed $i \neq 1$, we have that 
$$
w \cdot x_i = \sqrt{1-\theta^2}z (x_1 \cdot x_i) + g.x_i.
$$
Conditioning on the fact that $|w \cdot x_1|$ is at most $\frac{\sqrt{\delta}}{10n}$, $w \cdot x_i$ is a Gaussian with mean at most $\frac{\sqrt{\delta}}{10n}$ and variance at least $\frac{\delta^2}{4}$~(since projection of $x_i$ on $g$ is at least $\delta/\sqrt{2}$). Hence, with probability at least $1-1/8n$, we have that $|w \cdot x_i| > \frac{\sqrt{\delta}}{4n}$. Using a union bound we get that, conditioned on $z$ being small, with probability at least $0.5$, all $i \neq 1$ satisfies $|w\cdot x_i| > \sqrt{\delta}/(4n)$. Combining with \eqref{eq:helper_1} we get the claim.
\end{proof}

Theorem 1(b) in \citep{gneitling} provides a generalization of \pref{thm:one-layer-conditioning-sgn} to a large class of activations $\sigma$, although it doesn't prove a quantitative lower bound on the smallest eigenvalue. For completeness, we reformulate that theorem in our language here:
\begin{theorem}
\label{thm:general-one-layer-conditioning}
Suppose the Hermite expansion of $\sigma$ has infinitely many even and infinitely many odd coefficients. If the inputs satisfy \pref{ass:separation}, then the kernel matrix $\bar{K}$ for a 1-hidden layer neural network is non-singular.	
\end{theorem}


\section{\normrelu Analyses}
\label{app:normrelu}

In this section, we derive closed form expressions for the functions $\lambda(c)$ and $b(c)$ in the definition of \normrelu:
\[\normrelu_c(x) = \lambda(c) \cdot [\max \{x-c, 0\} + b(c)].\]
Recall that the quantities $\lambda(c)$ and $b(c)$ are chosen so that the function is normalized according to \eqref{eq:normalization}. First, since
\[\E_{X \sim \normal(0, 1)}[\normrelu_c(x)] = \lambda(c) \cdot \E_{X \sim \normal(0, 1)}[\max \{X-c, 0\} + b(c)].\]
So $b(c)$ can be obtained from the equation $\E_{X \sim \normal(0, 1)}[\max \{X-c, 0\} + b(c)] = 0$. Let $\varphi(x) = \frac{1}{\sqrt{2\pi}}\exp(-\frac{x^2}{2})$ be the Gaussian density function $\varPhi(x) = \Pr_{X \sim \normal(0, 1)}[X \leq x]$ be the Gaussian cumulative distribution function. Using formulas for the mean of truncated normal distributions, we have
\[
	\E_{X \sim \normal(0, 1)}[\max \{X-c, 0\} + b(c)] = \int_{x = c}^\infty (x - c) \varphi(x)dx + b(c) = \varphi(c) - (1 - \varPhi(c))c + b(c).
\]
So $b(c) = (1 - \varPhi(c))c - \varphi(c)$. Now, to compute $\lambda(c)$, we note that 
\[\E_{X \sim \normal(0, 1)}[\normrelu_c(X)^2] = \lambda(c)^2 \cdot \E_{X \sim \normal(0, 1)}[(\max \{X-c, 0\} + b(c))^2].\]
So $\lambda(c) = \frac{1}{\sqrt{\E_{X \sim \normal(0, 1)}[(\max \{X-c, 0\} + b(c))^2]}}$. Using formulas for the variance of truncated normal distributions, we have
\begin{align*}
\E_{X \sim \normal(0, 1)}&[(\max \{X-c, 0\} + b(c))^2] = \int_{x = c}^\infty ((x-c) + b(c))^2 \varphi(x)dx + \int_{x=-\infty}^c b(c)^2 \\
&= (1 - \varPhi(c)) + c\varphi(c) + 2(b(c) - c)\varphi(c) + (b(c)-c)^2(1 - \varPhi(c)) + 	b(c)^2\varPhi(c)\\
&= (1-\varPhi(c))\varPhi(c)c^2 + (1-2\varPhi(c))\varphi(c)c + (1-\varPhi(c)-\varphi(c)^2).
\end{align*}
Hence,
\[\lambda(c) = [(1-\varPhi(c))\varPhi(c)c^2 + (1-2\varPhi(c))\varphi(c)c + (1-\varPhi(c)-\varphi(c)^2)]^{-1/2}.\]

\subsection{Conditioning Analysis}
\label{app:normrelubounds}

The following is the main theorem that provides bounds on correlations for the $\normrelu$ activation. We specifically focus on the case $c = -1.5975$ since we use that value of $c$ in all our experiments.

\begin{theorem} 
\label{thm:norm_relu_conditioning}
For the $\normrelu$ activation, there exist constants $\alpha', \delta'$ such that for any two inputs $x,y$ such that $\|x\|^2,\|y\|^2 \in [0.5, 2.0]$ with $\frac{x^{\top}y}{\|x\|\|y\|} \leq 1 - \delta$ for $\delta > \delta'$ and for any $\epsilon>0$, after a number of layers 
\[L \geq \hat{L} := \frac{2}{\alpha'}\log\left(\frac{3}{\min(\epsilon, \mu/4)}\right),\] 
we have that
\[ \frac{ \phi_{W}(x)^{\top} \phi_{W}(y)}{\|\phi_{W}(x)\|\|\phi_{W}(y)\|}\leq B_{\mu/2}\left(L - \hat{L}, \delta - \delta'\right) + \delta'\epsilon\] The constants are given by $\alpha' = 0.0798$ and $\delta' = 0.0185$. 
\end{theorem}

\begin{remark}
Note that the above theorem implies that for any two inputs with corelation sufficiently bounded away from 1 (quantified by $\delta'$), for any $\epsilon \rightarrow 0$, after $O(\log(1/\epsilon))$ layers the corelation decays to $\epsilon$, recovering our exponential conditioning results for $\normrelu$. Furthermore, the range $[0.5, 2.0]$, is chosen arbitrarily, the related constants will depend on the choice of the range. 
\end{remark}

Before stating the proof we will require some preliminaries. We request the reader to recall the notation established in \pref{app:general_norm_proof}. The proof follows the same schematic of the proof for \pref{thm:general_norms}, with the main caveat that unlike in the case of odd function the constant bias term for $\normrelu$, $a_0^{\{\gamma\}}$ is not necessarily $0$ at all $\gamma$ (recall that our conditions impose that $a_0^{\{\gamma\}} = 1$) and hence it needs to be accounted for in our analysis. We now present closed form expressions for the quantities of interest for $\normrelu$. In particular it can be easily derived via calculations similar to those done for the formulas for $b(c)$ and $\lambda(c)$ that 

\[\frac{a_0^{\{\gamma\}}}{\lambda(c)} = \frac{\E_{X \sim \normal(0, 1)}[\normrelu_c(\sqrt{\gamma} X)]}{\lambda(c)} = \sqrt{\gamma}\varphi(c/\sqrt{\gamma}) - (1 - \varPhi(c/\sqrt{\gamma}))c + b(c)\]
Furthermore we can also derive such a closed form expression for the norm transfer function $\hat{\sigma}_l(\gamma)$, and its derivative viz. 
\begin{align*}
	\frac{\hat{\sigma}_l(\gamma)}{\lambda^2(c)} = \sum_{j=0}^{\infty} \frac{\left(a_j^{\{\gamma\}}\right)^2}{\lambda^2(c)} &= \frac{\E_{X \sim \normal(0, 1)}[\normrelu^2_c(\sqrt{\gamma} X)]}{\lambda^2(c)} \\
	&= (c^2 + \gamma - 2cb(c))(1 - \varPhi(c/\sqrt{\gamma})) + (2b(c)  - c) \sqrt{\gamma} \varphi(c/\sqrt{\gamma}) + b(c)^2
\end{align*}
\begin{align*}
	\frac{\hat{\sigma}_l'(\gamma)}{\lambda^2(c)} = 1 - \varPhi(c/\sqrt{\gamma}) + \frac{b(c)}{\sqrt{\gamma}} \varphi(c/\sqrt{\gamma})
\end{align*}

Furthermore we define the following function which represents a form of bias which increases correlations over layers:
\[ \mathrm{bias}(\gamma) := \frac{\left(a_0^{\{\gamma\}}\right)^2}{\sum_{j=0}^{\infty} \left(a_j^{\{\gamma\}}\right)^2}\]

The following observations are evident from the graphs in \pref{fig:normrelucurves}.

\begin{figure}[t]
    \centering
    \begin{minipage}{0.45\textwidth}
        \centering
        \includegraphics[width=1\textwidth]{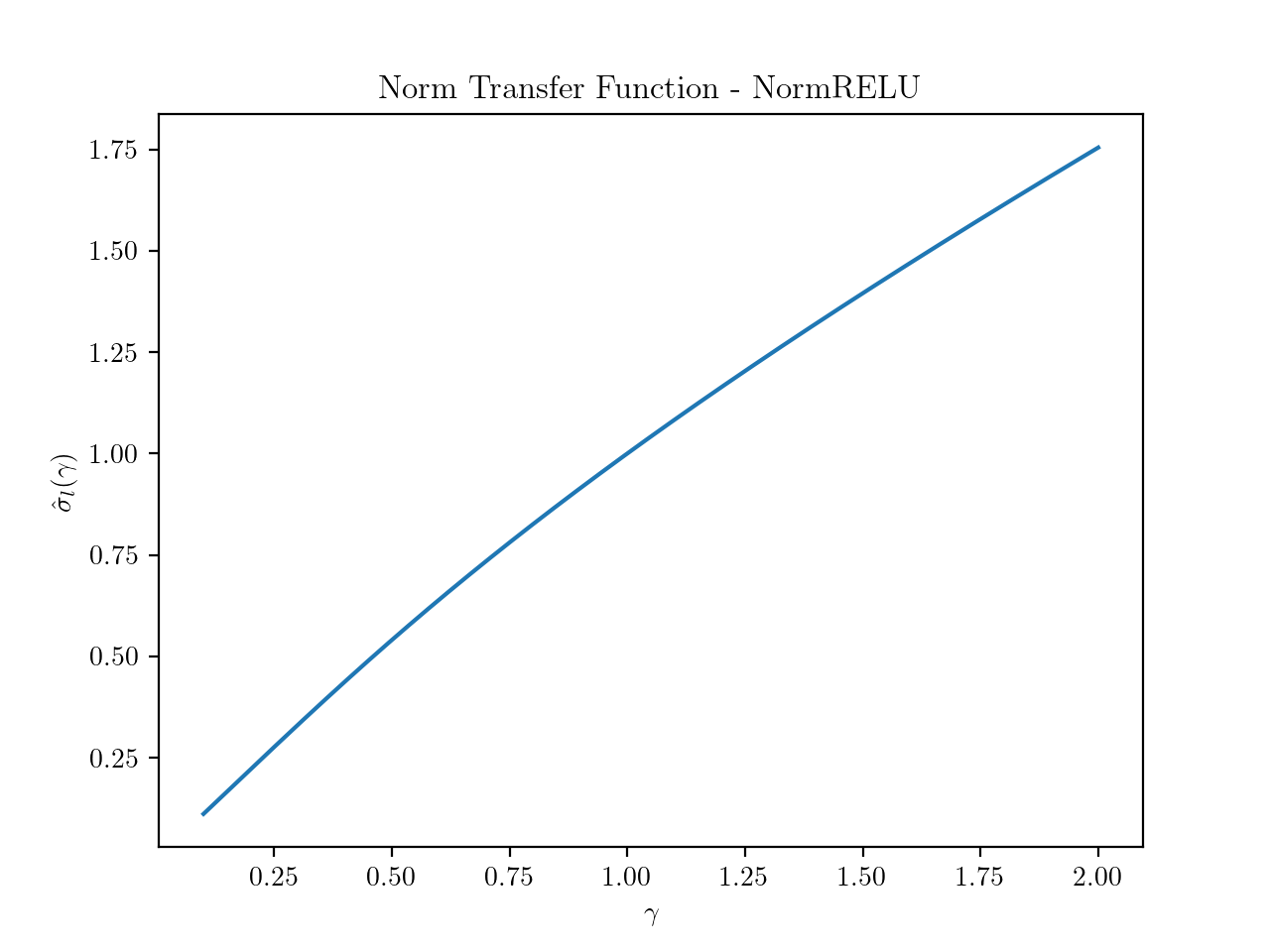} 
    \end{minipage}
    \begin{minipage}{0.45\textwidth}
        \centering
        \includegraphics[width=1\textwidth]{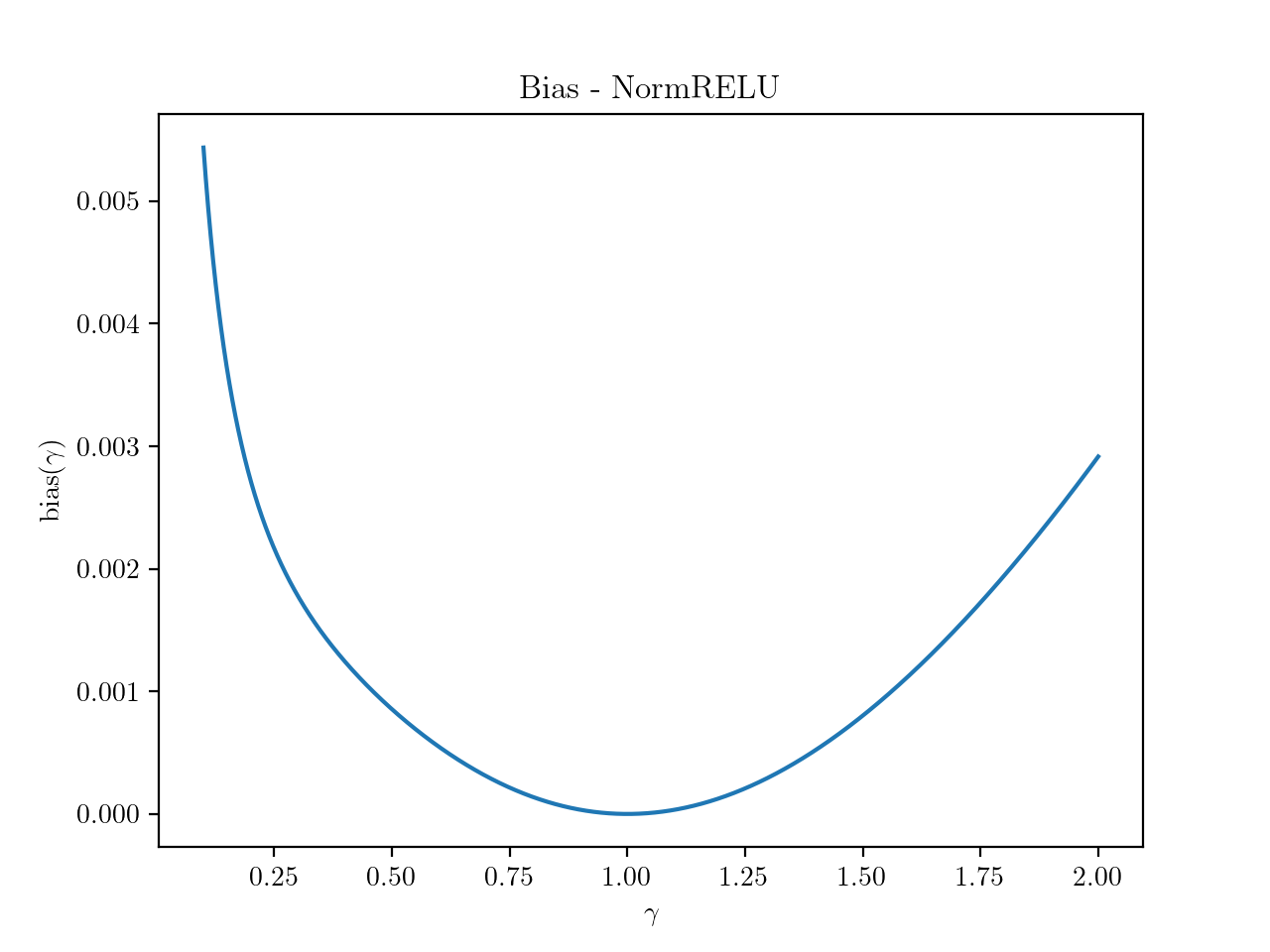} 
    \end{minipage}
    \captionsetup{indention=0.2cm}
   \caption{\label{fig:normrelucurves} Plots of the functions $\hat{\sigma}_l(\gamma)$ and $\mathrm{bias}$ in the range $[0.5,2]$. They can be seen to be concave and convex respectively.}
\end{figure}

\begin{fact}
	The following properties hold for the $\normrelu$ activation
	\begin{enumerate}
		\item For $\gamma \in [0.5,2]$, $\hat{\sigma}_l(\gamma)$ is a non-decreasing concave function with $\hat{\sigma}_l(0.5) \sim 0.5399$ and $\hat{\sigma}'_l(1) \sim 0.8428$.
		\item For $\gamma \in [0.5,2]$, $\mathrm{bias}(\gamma)$ is a convex non-negative function with minimum value at $\gamma=1$, $\mathrm{bias}(1)=0$. Furthermore we have that $\mathrm{bias(0.5)}\sim 0.00086$ and $\mathrm{bias(2)}\sim 0.0029$. 
	\end{enumerate}
\end{fact}


The above facts lead to the following simple calculations. Firstly note that since $\mathrm{bias}(\gamma)$ is a convex function we have the following
\begin{equation}
\label{eq:biaseq}
	\mathrm{bias}(\gamma) \leq  \begin{cases}2\mathrm{bias(0.5)}(1 - \gamma) & \text{if } \gamma \in [0.5,1]\\ \mathrm{bias(2)}(\gamma-1) & \text{if } \gamma \in [1,2]\end{cases}
\end{equation}

The constant (defined in \pref{thm:norms}) $\alpha_{\sigma} = \min(\alpha^-_{\sigma}, \alpha^+_{\sigma})$ for $\normrelu$ activation are as follows. 

\[ \alpha^-_{\sigma} := 2\hat{\sigma}_l(0.5)-1 \sim 0.0798\]
 \[ \alpha^+_{\sigma} := 1 - \hat{\sigma}'_l(1) \sim 0.1572\]

With these in place we are now ready to prove the theorem.

\begin{proof}
	Firstly consider two sequences defined by the following:
	\[ (1-\gamma^-_l) = (1 - \alpha^-_{\sigma})(1-\gamma^-_{l-1}) \qquad \gamma^-_0 = 0.5\]
	\[ (\gamma^+_l - 1) = (1 - \alpha^+_{\sigma})(\gamma^-_{l-1}-1) \qquad \gamma^+_0 = 2.0\]
	The following is immediate from the above derivations above and \pref{eq:biaseq}. 
	\[\sum_{l=0}^{\infty} \mathrm{bias}(\gamma^-_l) = \frac{\mathrm{bias}(0.5)}{\alpha^-_{\sigma}}, \quad \sum_{l=L'}^{\infty} \mathrm{bias}(\gamma^-_l) = \frac{\mathrm{bias}(0.5)\epsilon}{\alpha^-_{\sigma}}\]
	\[\sum_{l=0}^{\infty} \mathrm{bias}(\gamma^+_l) = \frac{\mathrm{bias}(2)}{\alpha^+_{\sigma}}, \quad \sum_{l=L'}^{\infty} \mathrm{bias}(\gamma^+_l) = \frac{\mathrm{bias}(2)\epsilon}{\alpha^+_{\sigma}}\]
	Now note from \pref{thm:norms} we have that if $\|x\|^2 \leq 1$, the norm of the representation at layer $l$, $\bar{k}^l(x,x) \in [\gamma^-_l,1]$ and correspondingly if $\|x\|^2 \geq 1$, $\bar{k}^l(x,x) \in [1,\gamma^+_l]$. The same holds for $y$ as well. Furthermore from the properties of the $\mathrm{bias}$ function we have that,
	\[ \mathrm{bias}(\bar{k}^l(x,x)) \leq \begin{cases} \mathrm{bias}(\gamma_l^-) & \text{if } \|x\| \leq 1 \\ \mathrm{bias}(\gamma_l^+) & \text{if } \|x\| \geq 1\end{cases}.\]
	The same holds for $y$. Define the following shorthand for the corelation between the representations at layer $l$: 
	\[ \rho_l := \frac{\bar{k}^l(x,y)}{\sqrt{\bar{k}^l(x,x)\bar{k}^l(y,y)}},\quad \delta_l:= 1 - |\rho_l|.\]
	Using the analysis derived in the proof of \pref{lem:one-layer-general}we have the following, 
	\begin{align}
	\label{eq:one_layer_normrelu}
		|\rho_l| &\leq \sqrt{\mathrm{bias}(\bar{k}^{l-1}(x,x))\mathrm{bias}(\bar{k}^{l-1}(y,y))} + \sqrt{\frac{\sum_{j=1}^{\infty}\left(a_j^{\{\bar{k}^{l-1}(x,x)\}}\right)^2 |\rho_{l-1}|^j}{\sum_{j=1}^{\infty} \left(a_j^{\{\bar{k}^{l-1}(x,x)\}}\right)^2}} \sqrt{\frac{\sum_{j=1}^{\infty}\left(a_j^{\{\bar{k}^{l-1}(y,y)\}}\right)^2 |\rho_{l-1}|^j}{\sum_{j=1}^{\infty} \left(a_j^{\{\bar{k}^{l-1}(y,y)\}}\right)^2}} \nonumber \\
		&\leq \sqrt{\mathrm{bias}(\bar{k}^{l-1}(x,x))\mathrm{bias}(\bar{k}^{l-1}(y,y))} + (1 - \delta_{l-1})\sqrt{(1 - \mu^{\{\bar{k}^{l-1}(x,x)\}}\delta_{l-1})}\sqrt{(1 - \mu^{\{\bar{k}^{l-1}(y,y)\}}\delta_{l-1})}.
 	\end{align}

Iterating the above equation over $L'$ steps we immediately get that 

 \begin{align*}
		|\rho_{L'}| &\leq 1 - \delta_0 + \sum_{l=1}^{L'} \sqrt{\mathrm{bias}(\bar{k}^{l-1}(x,x))\mathrm{bias}(\bar{k}^{l-1}(y,y))} \\
		&\leq 1 - \delta + \max\left(\sum_{l=0}^{\infty} \mathrm{bias}(\gamma^-_l), \sum_{l=0}^{\infty} \mathrm{bias}(\gamma^+_l)\right) \\
		&= 1 - \delta + \max \left(\frac{\mathrm{bias}(0.5)}{\alpha^-_{\sigma}}, \frac{\mathrm{bias}(2)}{\alpha^+_{\sigma}}\right).
	\end{align*}


We define \[\delta' = \max \left(\frac{\mathrm{bias}(0.5)}{\alpha^-_{\sigma}}, \frac{\mathrm{bias}(2)}{\alpha^+_{\sigma}} \right) \sim 0.0185 . \]
Furthermore as argued in the proof of \pref{thm:general_norms} we have that for any $l \geq L'$ we have that 
\[\mu^{\{\bar{k}^l(x,x)\}}, \mu^{\{\bar{k}^l(y,y)\}} \geq \mu/2.\]

Now using the above and iterating \pref{eq:one_layer_normrelu} from $L'$ onwards we get that 
 \begin{align*}
		|\rho_{L}| &\leq B_{\mu/2}(L-L', \delta - \delta') + \max\left(\sum_{l=L'}^{\infty} \mathrm{bias}(\gamma^-_l), \sum_{l=L'}^{\infty} \mathrm{bias}(\gamma^+_l)\right) \\
		&\leq B_{\mu/2}(L-L', \delta - \delta') + \max \left(\frac{\mathrm{bias}(0.5)\epsilon}{\alpha^-_{\sigma}}, \frac{\mathrm{bias}(2)\epsilon}{\alpha^+_{\sigma}} \right) \\
		&\leq B_{\mu/2}(L-L', \delta - \delta') + \delta'\epsilon.
	\end{align*}
This finishes the proof. 
\end{proof}

\subsection{Comparison of \normrelu and SeLU}
\begin{figure}[t]
    \centering
    \begin{minipage}{0.45\textwidth}
        \centering
        \includegraphics[width=1\textwidth]{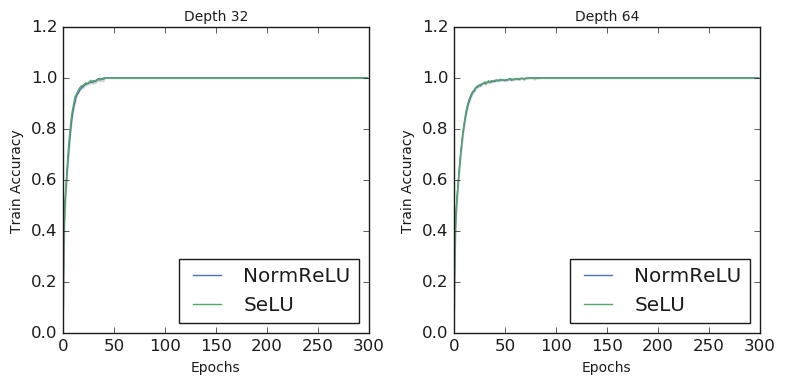} 
    \end{minipage}
    \begin{minipage}{0.45\textwidth}
        \centering
        \includegraphics[width=1\textwidth]{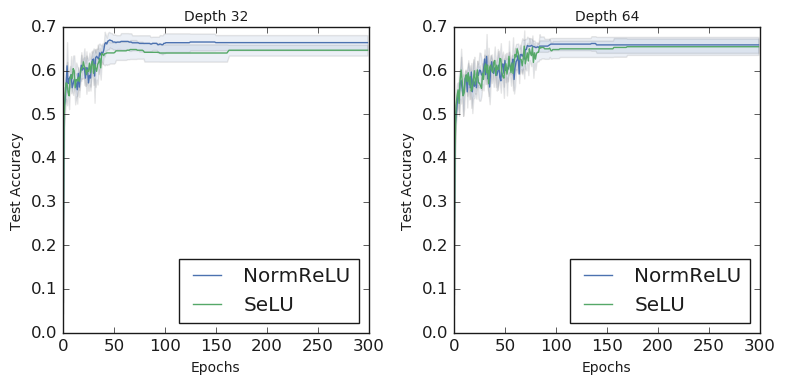} 
    \end{minipage}
    \captionsetup{indention=0.2cm}
   \caption{\label{fig:normrelu_vs_selu_fnn}Train and test accuracy vs. the number of epochs. The plots are obtained by training a depth 32 and a depth 64 fully connected network with 3000 hidden units in each layer.} 
\end{figure}
\begin{figure}
    \centering
    \begin{minipage}{0.45\textwidth}
        \centering
        \includegraphics[width=1\textwidth]{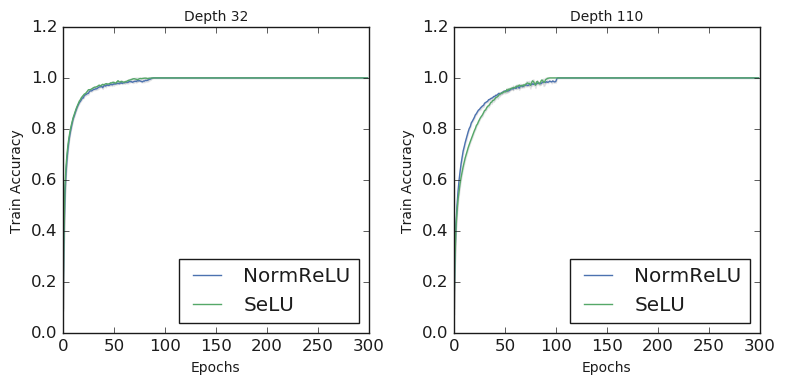} 
    \end{minipage}
    \begin{minipage}{0.45\textwidth}
        \centering
        \includegraphics[width=1\textwidth]{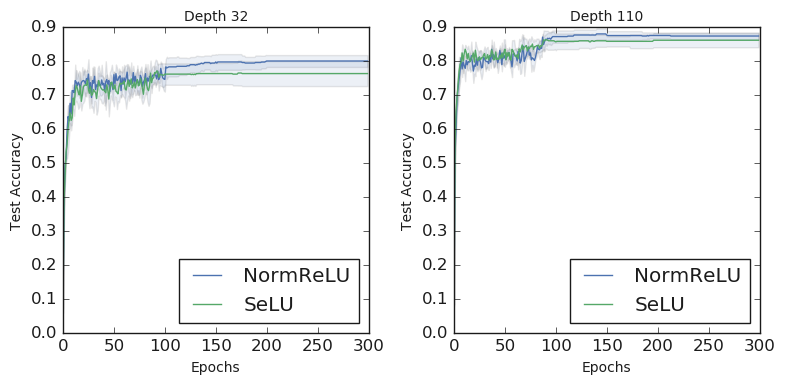} 
    \end{minipage}
    \captionsetup{indention=0.2cm}
   \caption{\label{fig:normrelu_vs_selu_cnn}Train and test accuracy vs. the number of epochs. The plots are obtained by training a depth 32 and a depth 110 ResNet architecture with skip connections but without the use of batch normalization.} 
\end{figure}
In this section we compare the training and generalization behavior of the \normrelu activation that we proposed in this work and the closely related SeLU activation~\cite{klambauer2017self} (see Figure~\ref{fig:norm_relu}). Similar to Section~\ref{sec:experiments} we first train, on the CIFAR-10 dataset, fully connected feedforward networks with depths $32$ and $64$ and containing $3000$ hidden units in each layer. Figure~\ref{fig:normrelu_vs_selu_fnn} shows the training and the test accuracy achieved when using either \normrelu or the SeLU activation. As can be seen, the training and generalization behavior of the two activations is very similar. 

We next train ResNet architectures of depth $32$ and $110$ and with skip connections on the CIFAR-10 dataset. As in Section~\ref{sec:experiments}, we switch off batch normalization. Figure~\ref{fig:normrelu_vs_selu_cnn} shows that in this case as well, the two activations behave similarly in terms of speed of training and generalization performance, with \normrelu achieving slightly higher test accuracies.

\subsection{Discussion of Training Settings}
All the experiments in this work were performed on the CIFAR-10 dataset~\citep{krizhevsky2009learning}. In each case, the learning rate was chosen via cross validation. All the models were trained for 300 epochs and the learning rate was reduced by a factor of $0.1$ after every $100$ iterations. In each case, the model was trained for $10$ independent runs. For the experiment in Figure~\ref{fig:test_acc_vs_dp} the fully connected networks were trained with learning rates of $0.005$ for batch normalization, $0.003$ for layer normalization and $0.0009$ for \normrelu. The convolutional networks were trained with learning rates of $0.1$ for batch normalization, $0.05$ for layer normalization and $0.001$ for \normrelu. For the experiment in Figure~\ref{fig:fnn}, the fully connected networks of depth $32$ were trained with learning rates of $0.01$ for batch normalization, $0.001$ for layer normalization and $0.0009$ for \normrelu. The depth $64$ networks were trained with learning rates of $0.007$ for batch normalization, $0.001$ for layer normalization and $0.0009$ for \normrelu. 

For the experiment in Figure~\ref{fig:cnn} the depth $32$ networks were trained with learning rates of $0.1$ for batch normalization and $0.009$ for \normrelu. The depth $110$ networks were trained with learning rates of $0.05$ for batch normalization and $0.005$ for \normrelu. To compare with fixup initialization as in Figure~\ref{fig:fixup} we use the fixup-110 architecture as proposed in the work of~\citet{zhang2019fixup}. We switch on data augmentation as used in~\citet{zhang2019fixup} and train with fixup initialization using a learning rate of $0.1$ and the learning rate schedule as proposed by the authors in~\citet{zhang2019fixup}. We train with \normrelu using a learning rate of $0.005$. For the experiment in Figure~\ref{fig:normrelu_vs_selu_fnn} we use a learning rate of $0.001$ for depth $32$ and of $0.0005$ for depth $64$. The same learning rate is used for both \normrelu and SeLU. Finally, for the experiment in Figure~\ref{fig:normrelu_vs_selu_cnn} we use a learning rate of $0.01$ at depth $32$ and of $0.009$ at depth $110$ and keep it the same for both the activations.

\end{document}